\documentclass[3p]{elsarticle}

\usepackage[utf8]{inputenc} 
\usepackage[T1]{fontenc}    
\usepackage{hyperref}       
\usepackage{url}            
\usepackage{booktabs}       
\usepackage{amsfonts}       
\usepackage{nicefrac}       
\usepackage{microtype}      
\usepackage[table]{xcolor} 
\usepackage{amsmath}
\usepackage{amsthm}
\usepackage{algorithm}
\usepackage{algorithmic}
\usepackage{bm}
\usepackage{multirow}
\usepackage{graphicx}
\usepackage{subcaption}
\usepackage{amssymb}

\theoremstyle{plain}
\newtheorem{theorem}{Theorem}[section]
\newtheorem{proposition}[theorem]{Proposition}
\newtheorem{lemma}[theorem]{Lemma}
\newtheorem{corollary}[theorem]{Corollary}
\theoremstyle{definition}

\theoremstyle{remark}
\newtheorem*{remark}{Remark}
\newtheorem{ngassumption}{Assumption}

\DeclareMathOperator*{\argmax}{argmax}
\newcommand{\iter}[1]{{(#1)}}
\newcommand{\res}[2]{%
  \begin{tabular}{@{}c@{}}#1 \\ \scriptsize{(#2)}\end{tabular}%
}

\begin{document}

\begin{frontmatter}

\title{Online Adaptive Kernel Mixing for Gaussian Process Decision Making}

\author[IIITH]{Kavin Aravindan}{\corref{cor1}}
\author[IIITH]{Mani Tej Sriram}
\author[ASU]{Gautam Dasarathy}
\author[IIITH]{Tejas Bodas}
\affiliation[IIITH]{op={},
  organization={International Institute of Information Technology, Hyderabad}}
\affiliation[ASU]{op={},
  organization={Arizona State University}}

\nonumnote{Code available at \url{https://github.com/kavin-aravindan/HACK-GPs}. Email: \texttt{kavin.aravindan@research.iiit.ac.in}}

\begin{abstract}
  Gaussian Processes (GPs) are widely used as surrogates for black-box functions in sequential decision-making problems such as Bayesian optimization (BO), level set estimation (LSE), and Bayesian active learning (BAL). GP performance critically depends on kernels, and standard kernels can lead to suboptimal decisions under misspecification. To address this, we introduce \textbf{HACK GPs} (Hedge Adaptive Cumulative Kernels), a method that views kernel selection as an online learning with expert advice problem. HACK treats each candidate kernel as a GP ``expert'' and updates a distribution over experts online using AdaHedge, based on a loss received as a proxy for their ability to fit the function and align with the task objective. We provide two variants of HACK: (i) Mixture of Gaussians (MoG) and (ii) categorical sampling. We establish general guarantees showing that, under a loss-gap condition, the weight concentrates on the best kernel and the resulting acquisition function is close to that of the best expert. Empirically, we observe robust performance across BO, LSE, and BAL compared to standard kernels such as Squared Exponential and Matérn-5/2, as well as simple ensemble baselines.
\end{abstract}

\begin{keyword}
Gaussian Processes, Online Learning, Bayesian Optimization, Active Learning, Level Set Estimation. 
\end{keyword}

\end{frontmatter}

\section{Introduction}
Many experimental setups in science, engineering and machine learning require the optimization and modeling of noisy, expensive-to-evaluate, black-box functions. Gaussian processes (GPs) have proven to be an effective choice for these tasks due to their sample efficiency and ability to model uncertainties \cite {rasmussen2006gaussian}.

Several of these tasks are sequential in nature: at each round $t$, a surrogate model proposes an input $x_t$, observes a noisy evaluation $y_t$ of the black-box function $f$, and updates its posterior before the next decision. This template underlies Bayesian Optimization (BO), which seeks the maximizer $x^* =\argmax_{x\in\mathcal{X}}f(x)$ under a limited budget; Level Set Estimation (LSE), which aims to recover a superlevel set $S_h := \{x \in \mathcal{X} : f(x) \ge h\}$; and Bayesian Active Learning (BAL), where one queries to maximally reduce uncertainty about a target quantity. GPs are well suited to these settings because of their ability to convert the posterior model's uncertainty estimates into sampling strategies called acquisition functions. It is essential to ensure the quality of the GP in order to have reliable performance in these tasks.

GP performance is governed by the choice of the kernel, which encodes assumptions such as smoothness, stationarity, and the length scales across which such phenomena occur in the underlying function. {\em Kernel misspecification}, where these assumptions are misaligned with the underlying function, can lead to poorly calibrated uncertainty estimates and compounding errors in sequential decision-making. Despite this, standard kernels such as squared exponential and Matérn-5/2 are often used in practice, even when they may be ill-suited for the task at hand \cite {snoek2012,pmlr-v54-gardner17a}. 

To address this issue, we propose \textbf{HACK GPs} (\textbf{H}edge \textbf{A}daptive \textbf{C}umulative \textbf{K}ernels), a simple alternative that views kernel selection through the lens of online learning with expert advice.
We treat each candidate kernel as a GP ``expert'', maintain a distribution of weights over experts, and update the distribution online using AdaHedge \cite {adahedge}, a parameter-free variant of Hedge \cite {FREUND1997119} with a time-varying learning rate.
Experts are scored with per-round \emph{probabilistic} losses derived from their predictive performance, such as negative log-likelihood, optionally augmented with task-dependent proxies, allowing the weighting (and therefore the acquisition strategy) to adapt over time. 

We instantiate two use modes: (i) a \emph{mixture-of-Gaussians} (MoG) predictive distribution formed by weighting each expert's posterior, and (ii) \emph{categorical sampling} of a single kernel.
Both yield a plug-and-play procedure that progressively down-weights poorly aligned kernels, while boosting suitable ones, and incur only a modest computational overhead when the user-specified kernel candidate set is small. To place our approach in context, we next review existing methods for addressing kernel misspecification and model uncertainty in GP-based sequential decision making.

Most approaches that use GPs as surrogates make use of standard kernels not necessarily designed for the function at hand, such as the squared exponential or Mat\'ern-5/2 kernels \cite {snoek2012}. While the use of a general-purpose kernel is convenient, it can lead to suboptimal performance during tasks where the kernel does not align well with the nature of the function it is fitting \cite {experimentalAdaptive}, referred to as \emph{kernel misspecification}. It is also known that with overly general kernel choices, tasks like BO may converge slowly on complex functions in moderate to high dimensions \cite {pmlr-v54-gardner17a}. Early attempts to address kernel misspecification include compositional kernel structure search for GP regression \cite {duvenaud2013structurediscoverynonparametricregression}, where a grammar is used to generate new kernel compositions that may fit the data better. Parallel work has focused on flexible kernel families, such as spectral mixture kernels, which are able to approximate a broad class of stationary covariances \cite {wilson2013gaussianprocesskernelspattern}. However, these methods often require significant data before their selection criteria can separate candidates effectively. A modern variant addresses this by leveraging the generalizability of LLMs to suggest and combine kernels, achieving an effective, expressive family without the need for a large number of samples \cite {suwandi2025adaptivekerneldesignbayesian}.

Ensemble methods offer an alternative approach by maintaining multiple GP models and combining their predictions or decisions, rather than committing to a single kernel throughout the sequential process. Roman et al.~\cite{experimentalAdaptive} investigated multiple heuristics for GP ensembling. Formalizing this, Lu et al.~\cite{lu2022surrogatemodelingbayesianoptimization} applied Bayesian Model Averaging (BMA) over kernels, while Ravishankar et al.~\cite{ravishankar2024ensemble} extended the ensemble approach to both kernels and acquisition functions. Sandberg et al.~\cite{anonymous2025efficient} apply BMA to general GP priors, while concurrently employing an elimination strategy, first introduced by Ziomek et al.~\cite{ziomek2025timevaryinggaussianprocessbandits}. Early GP-based LSE methods use confidence-bound heuristics, with Bryan et al.~\cite{bryan2005active} introducing the straddle algorithm, which samples points around the threshold with high variance. Gotovos~\cite{gotovos2013active} formalized this, utilizing GP-UCB-style bounds \cite {Srinivas_2012} to provide convergence guarantees. Alternatively, Ravishankar et al.~\cite{ravishankarimprovement} formulated the problem via EI, targeting points that reduce the gap to the threshold. However, these methods rely on the GP being well-specified for the black-box function. To address potential mismatches, Zanette et al.~\cite{zanette2018robust} proposed an acquisition function robust to model misspecification, RMILE.

GP-based BAL for regression typically uses uncertainty-driven acquisition strategies, with predictive entropy being a standard choice \cite{mackay1992information,settles2009active}. An alternative choice is Bayesian Active Learning by Disagreement (BALD) introduced by Houlsby et al.~\cite{houlsby2011bayesian}, which proposes maximizing the expected decrease in predictive entropy. In an attempt to address hyperparameter misspecification in BAL, Riis et al.~\cite{riis2022bayesian} propose using a fully Bayesian treatment of the GP hyperparameters, providing other acquisition function variants that use the fully Bayesian nature of the GP. Polyzos et al.~\cite{EnsemblesBAL} formulated an adaptive ensemble GP method using BMA to handle kernel misspecification, utilizing a weighted mixture of acquisition functions.

Taken together, existing approaches address model misspecification through the use of more expressive kernel families, or through model averaging or selection, or through the use of task-specific robust acquisition strategies. HACK complements these directions by treating kernel adaptation itself as an online expert-advice problem, allowing the relative importance of candidate kernels to evolve during sequential decision making according to their performance and alignment with the task. The main contributions of this work are as follows.

\subsection*{Contributions:}
\begin{itemize}
    \item We introduce \textbf{HACK GPs}, a framework for online adaptive kernel weighting in Gaussian process–based sequential decision making, formulated as an expert-advice problem over a finite set of candidate~kernels.
    \item We develop two practical instantiations: (i) HACK-MoG, which combines expert posterior predictive distributions into a mixture-of-Gaussians predictive distribution, and (ii) HACK-Cat, which samples a single kernel per round, both of which integrate seamlessly with standard acquisition-driven methods.
    \item We establish theoretical guarantees showing that, under a loss-gap condition, the expert weights concentrate on the best kernel in hindsight and that this concentration translates to closeness of the acquisition function.
    \item We empirically demonstrate across BO, LSE, and BAL benchmarks that adaptive kernel weighting improves robustness to kernel misspecification compared to standard single-kernel and simple ensemble~methods.
\end{itemize}

\section{Background}
\label{sec:background}
\subsection{Gaussian Processes}
A Gaussian Process (GP) is a stochastic process such that any finite collection of its random variables has a multivariate Gaussian distribution.
GPs are often used as surrogate models to approximate the black-box function $f$ in BO, LSE and BAL. GPs are parametrized by a mean function $\mu$ and a covariance function given by a kernel $k(\cdot,\cdot)$.
We model the black-box function $f$ with the GP prior $f(x) \sim \mathcal{GP}\!\left(\mu(x),\, k(x,x')\right).$
The mean function $\mu(x)$ provides the mean at point $x$, while the kernel $k(x,x')$ determines the covariance between function values at two inputs $x$ and $x'$. We fit the GP on noisy observations $y_i = f(x_i) + \epsilon_i$ with $\epsilon_i \sim \mathcal{N}(0,\sigma_n^2)$.
Let $\mathcal{D}_t=\{(x_i,y_i)\mid i=1,2,\dots,t\}$, $X=[x_1,\dots,x_t]^\top$, and $\mathbf{y}=[y_1,\dots,y_t]^\top$.
Define the kernel matrix $K \in \mathbb{R}^{t\times t}$ by $K_{ij}=k(x_i,x_j)$, and the cross-covariance vectors
$k(x,X) = [k(x,x_1),\dots,k(x,x_t)]$ and $k(X,x)=k(x,X)^\top$.
Let $Y_{t+1}=f(x_{t+1})+\epsilon_{t+1}$ denote the noisy observation at a test point $x_{t+1}$. After conditioning the GP on $\mathcal{D}_t$, the posterior distributions of the function value and the noisy observation are
\begin{align*}
f(x_{t+1})\mid \mathcal{D}_t, x_{t+1}
&\sim \mathcal{N}\!\left(\mu_{f\mid \mathcal{D}_t}(x_{t+1}),\, \sigma_{f\mid \mathcal{D}_t}^2(x_{t+1})\right),\\
Y_{t+1} \mid \mathcal{D}_t, x_{t+1}&\sim\mathcal{N}\!\left(\mu_{f\mid \mathcal{D}_t}(x_{t+1}),\, \sigma_{f\mid \mathcal{D}_t}^2(x_{t+1})+\sigma_n^2\right), \\
\mu_{f\mid \mathcal{D}_t}(x) &= \mu(x) + k(x,X)\bigl(K+\sigma_n^2 I\bigr)^{-1}\bigl(\mathbf{y}-\mu(X)\bigr),\\
\sigma_{f\mid \mathcal{D}_t}^2(x) &= k(x,x) - k(x,X)\bigl(K+\sigma_n^2 I\bigr)^{-1}k(X,x).
\end{align*}
Refer to \cite{rasmussen2006gaussian} for more details. The kernel encodes assumptions about the nature of the function that is being fit (e.g., smoothness, stationarity). When the kernel is ill-suited for the underlying function, it can lead to subpar performance on tasks that use GPs, a phenomenon referred to as kernel misspecification.

Figure~\ref{fig:kernelmisspec} illustrates the effect of kernel misspecification on Gaussian Process regression using the 1D Ackley function. We fit GPs with a Squared Exponential (SE) kernel and a Periodic (PER) kernel to the same set of 15 uniformly sampled observations. While the SE kernel captures the smooth global structure of the function, it fails to model the underlying periodic components present in Ackley, leading to oversmoothing. In contrast, the Periodic kernel imposes strong periodic inductive bias, which results in spurious oscillations and poor extrapolation outside the observed regions. This example highlights how inappropriate kernel choices can significantly distort posterior mean and uncertainty estimates, motivating adaptive kernel selection.
In this work, we consider collections of GP models corresponding to different kernels, each inducing its own posterior and acquisition function.

\subsection{Bayesian Optimization}
Bayesian optimization (BO) is a sequential method for optimizing expensive-to-evaluate black-box functions $f:\mathcal{X}\to\mathbb{R}$, with the goal of finding $x^\ast \in \arg\max_{x\in\mathcal{X}} f(x)$.
An acquisition function $\alpha_t$ uses the posterior GP conditioned on $\mathcal{D}_t$ to select the next point to evaluate, via $x_{t+1}\in\arg\max_{x\in\mathcal{X}} \alpha_t(x)$.
A popular acquisition function is Expected Improvement (EI) \cite{EI}, given by $\mathrm{EI}(x) = \mathbb{E}[\max(f(x)-y^+, 0)]$, where $y^+$ is the best observed output till the current iteration and the expectation is over the distribution provided by the GP posterior at $x$. 
Other common acquisition functions include UCB and KG~\cite {Srinivas_2012, frazier2009knowledge}.

\begin{figure}[H]
    \centering
    \includegraphics[width=1\linewidth]{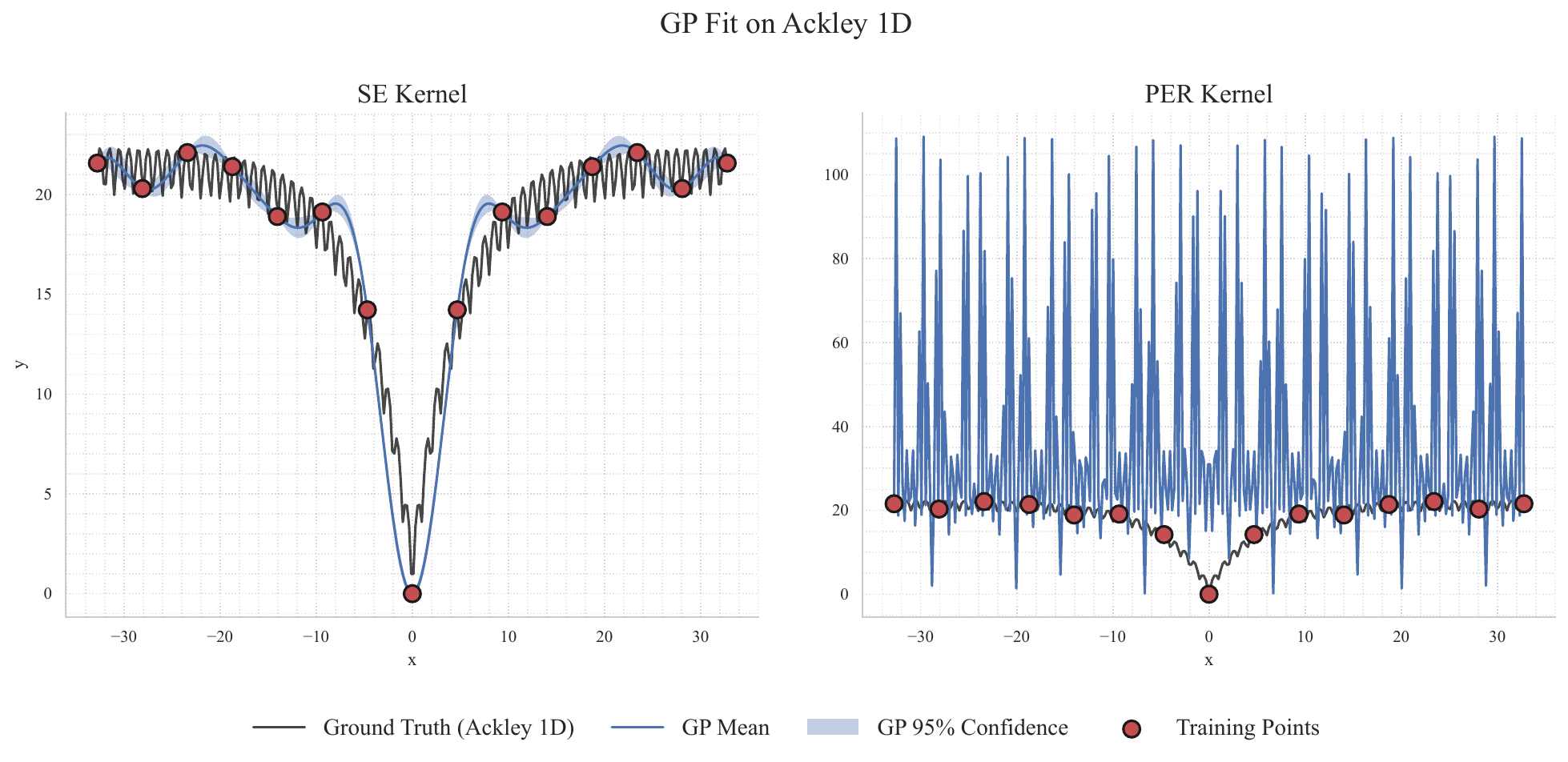}
    \caption{GP Fit on Ackley 1D function for 15 sample data points.}\label{fig:kernelmisspec}
\end{figure}

\subsection{Level Set Estimation}
Level set estimation (LSE) aims to identify the super-level set $S_h := \{x \in \mathcal{X} : f(x) \ge h\}$ of an expensive black-box function using a limited number of queries.
We focus on the setting where $\mathcal{X}$ is a fixed, finite set, following Gotovos~\cite{gotovos2013active}, and $h$ is provided as an explicit threshold.
Similar to BO, LSE proceeds sequentially by selecting the next evaluation point via an acquisition function, $x_{t+1} \in \arg\max_{x\in\mathcal{X}} \alpha_t(x)$. EI-LSE \cite {ravishankarimprovement} is an acquisition function based on EI for LSE and is given by \(\text{EI-LSE}(x)=\mathbb{E}[\max(g^*-|f(x)-h|,0)],\) where $g^*=\min_{1\leq i\leq t}|f(x_i)-h|$. Additionally, an exploration term $\beta \sigma^2(x)$ is often added, where $\beta$ is a hyperparameter. Other acquisition functions include straddle \cite {bryan2005active} and its generalization \cite {gotovos2013active}, TruVaR \cite {truvar2016} and RMILE \cite {zanette2018robust}.

\subsection{Bayesian Active Learning for Regression}
Bayesian active learning (BAL) involves querying labels for unlabelled data such that each query maximally reduces the uncertainty about the target function $f$ \cite{settles2009active}. We focus on BAL for regression tasks, operating on a finite unlabelled dataset $\mathcal{X}$, for which we seek to obtain continuous labels. Similar to BO and LSE, BAL selects points via $x_{t+1}\in\arg\max_{x\in\mathcal{X}}\alpha_t(x)$. The most common acquisition functions are predictive entropy \cite{mackay1992information},
which is a monotonic function of the posterior variance for a Gaussian, so maximizing it
is equivalent to using $\alpha_t(x)=\sigma_t^2(x)$, and Bayesian Active Learning by
Disagreement (BALD), which maximizes the expected decrease in predictive entropy
\cite{houlsby2011bayesian}.

\subsection{AdaHedge Algorithm}
\label{sec:adahedge}
We use AdaHedge \cite{adahedge}, a parameter-free variant of Hedge \cite{FREUND1997119} for prediction with expert advice.
At each round $t$, the learner maintains a distribution $w^{(t)}$ over $M$ experts and then
observes a loss vector $\ell^{(t)} \in [0,1]^M$.
Define the learner's \emph{expected (linear) loss} at round $t$ as $h^{(t)} \;:=\; \sum_{m=1}^M w_m^{(t)} \,\ell_m^{(t)}.$ Given a learning rate $\eta^{(t)}$, the corresponding \emph{mix loss} is defined as:
\[m^{(t)} \;:=\; -\frac{1}{\eta^{(t)}} \log\left(\sum_{m=1}^M w_m^{(t)} \exp\!\big(-\eta^{(t)}\ell_m^{(t)}\big)\right).\]
Furthermore, the \emph{per-round mixability gap} is then defined as
$\delta^{(t)} \;:=\; h^{(t)} - m^{(t)} \;\ge\; 0$
and the \emph{cumulative mixability gap} is defined as $\Delta_{mix}^{(t)} := \sum_{s=1}^t \delta^{(s)}$
with $\Delta_{mix}^{(0)}=0$. Letting $L_m^{(t-1)}=\sum_{s=1}^{t-1}\ell_m^{(s)}$, AdaHedge chooses a data-dependent learning rate 
\begin{equation}
    \eta^{(t+1)} \;=\; \ln M/\Delta_{mix}^{(t)} ,\label{eq:adahedgelr}
\end{equation}
(with the convention $\eta^{(1)}=\infty$ when $\Delta^{(0)}=0$, i.e.\ Follow-the-Leader initially), and sets
\[
w_m^{(t+1)} =
\frac{w_m^{(1)} \exp\!\big(-\eta^{(t+1)} L_m^{(t)}\big)}{\sum_{i=1}^Mw_i^\iter{1}\exp\big(-\eta^\iter{t+1}L_i^\iter{t}\big)}.
\]
After observing $\ell^{(t)}$, it updates $\Delta_{mix}^{(t)}=\Delta_{mix}^{(t-1)}+\delta^{(t)}$ using the definition above.
AdaHedge retains the worst-case $O(\sqrt{T\log M})$ regret guarantee while adapting to easier sequences via
this self-tuned learning rate. Refer to \cite{adahedge} for more details.

\section{Methodology}
\label{sec:methodology}
Given a finite set of kernels, our goal is to adaptively emphasize kernels that are most useful for a
sequential GP decision task. We cast this as online learning with expert advice: each kernel defines
a GP expert, and we update a distribution over experts based on their per-round predictive performance
and task-relevant progress.

Formally, we consider a sequential decision process over rounds $t=1,\dots,T$.
At round $t$, the learner selects a query $x_t \in \mathcal{X}$, observes $y_t = f(x_t) + \epsilon_t$,
and appends $(x_t,y_t)$ to the dataset $\mathcal{D}_{t-1}$. 
The goal is to choose the sequence $\{x_t\}_{t=1}^T$ so as to optimize a task-specific objective (e.g., BO, LSE, or BAL), which is operationalized
through a task-specific acquisition function.

Given a set of $M$ kernels \(\mathcal K=\{k_1,\ldots,k_M\}\), we associate each kernel \(k_m\) with GP expert \(m\). At the start of round \(t\), expert \(m\) is conditioned on \(\mathcal{D}_{t-1}\), inducing the posterior predictive distribution $p_{t,m}(\cdot\mid x) \sim \mathcal N\!\left(\mu_{t,m}(x),\sigma_{t,m}^2(x)\right)$, where \(\mu_{t,m}(x)\) and \(\sigma_{t,m}^2(x)\) denote the posterior predictive mean and variance of expert \(m\) at \(x\), respectively, with observation noise included when applicable. Let \(Y_{t,m}\) denote the random variable associated with expert \(m\), so that $Y_{t,m} \mid x \sim p_{t,m}(\cdot\mid x)$. Hereafter, \(m\) indexes both the expert and its associated kernel, and dependence on \(\mathcal{D}_{t-1}\) is suppressed since it is implicit in the round index~\(t\).
We maintain a vector of expert weights $w^{(t)}$ with initialization $w_m^{(1)} = 1/M$ and
$\sum_{m=1}^M w_m^{(t)} = 1$.
Different experts can prefer different query locations at the same round $t$ due to differing inductive biases encoded by their kernels. Our algorithm uses the current weight distribution either to combine expert posteriors into a single acquisition function or to sample a single expert, then uses the observed $(x_t,y_t)$ to assign per-expert losses and update
$w^{(t)}$ via AdaHedge. Task-specific acquisition constructions and loss definitions are given in Sections~5--7. A brief outline of the proposed algorithm is as~follows:
\begin{enumerate}
    \item Select a point $x_t$ (Algorithms \ref{alg:mog-point-selection} or \ref{alg:categorical-point-selection}) and observe $y_t$ from the black-box $f$.
    \item Compute each expert's loss $\ell_m^{(t)}$ using its posterior predictive distribution at the queried pair $(x_t,y_t)$.
    \item Update the expert weights via AdaHedge and refit each GP on $\mathcal{D}_t = \mathcal{D}_{t-1} \cup \{(x_t,y_t)\}$.
\end{enumerate}

\subsection{AdaHedge for Kernel Mixing} \label{methodology:online}

After querying data point $x_t$ at iteration $t$, each kernel/expert $m$ incurs a \textit{loss} denoted by $\ell_m^{(t)}$. We assume that $\ell_m^\iter{t}\in[0,1]$; smaller values of $\ell$ are preferred. $L_m^\iter{t} = \sum_{i=1}^t\ell_m^\iter{i}$ denotes the cumulative loss of expert $m$ at iteration $t$. The weights for each expert $m=1,\dots,M$ are updated with an exponential (multiplicative-weights) rule using the AdaHedge \emph{learning-rate rule} \cite{adahedge} and are given by
\[
w_m^{(t+1)} =\frac{w_m^{(1)} \exp\!\big(-\eta^{(t+1)} L_m^{(t)}\big)}{\sum_{i=1}^Mw_i^\iter{1}\exp\big(-\eta^\iter{t+1}L_i^\iter{t}\big)}. 
\] Kernels that repeatedly incur smaller losses accumulate exponentially larger weight over time, while kernels with larger losses are down-weighted. AdaHedge provides a conservative online weighting rule whose effective learning rate adapts to the observed separability of the experts. When losses strongly distinguish kernels, the learning rate permits rapid concentration. When losses are similar or noisy, the update remains less aggressive, which is desirable because premature kernel commitment can degrade early round performance.

We consider two types of loss functions. The first is negative log likelihood (NLL) that is designed towards better fitting of the black-box function. The second type of loss function is task specific, and its discussion is deferred to later. 
Note that NLL measures the ``surprise'' of a model at seeing the observation $(x_t, y_t)$ and is equivalent to maximizing the likelihood of the data given the model. For likelihood-based losses, we define the raw loss $\tilde{\ell}_m^{(t)} := -\log p_{t,m}(y_t \mid x_t)$. To ensure bounded losses as required by AdaHedge, we apply an affine normalization and clipping: 
\begin{equation}
\ell_m^{(t)} :=
\min(1,\;(\tilde{\ell}_m^{(t)} - \min_{j} \tilde{\ell}_j^{(t)})/C),\label{eq:nll_loss}
\end{equation}
where $C>0$ is a fixed scaling constant and is a hyperparameter. This transformation preserves relative loss differences while ensuring $\ell_m^{(t)} \in [0,1]$. AdaHedge is invariant to additive shifts of the losses. We additionally note that the scaling and clipping are intended to ensure weight stability and avoid abrupt changes observed empirically~without this transformation.

\subsection{Query Selection}
We propose two different methods of using the expert weight distribution to decide the next query point: \emph{mixture of Gaussians (MoG)} and \emph{categorical sampling}. 

\subsubsection{Mixture of Gaussians (MoG)}
\label{sec:mog}
At the start of round $t$, each expert $m$ provides a posterior predictive
distribution $p_{t,m}(\cdot\mid x)$.
To combine experts at the predictive level, we introduce a kernel index variable
$M_t\sim\mathrm{Cat}(w^{(t)})$. Conditional on $M_t=m$, the distribution of the observation is
\(p_{t,m}(\cdot\mid x) \sim \mathcal N\!\big(\mu_{t,m}(x),\,\sigma^2_{t,m}(x)\big).\)
Marginalizing over $M_t$ yields the MoG predictive distribution:
\begin{equation*}
p_{t,\mathrm{MoG}}(y\mid x)
\;=\;
\sum_{m=1}^M w^{(t)}_m\,p_{t,m}(y\mid x).
\label{eq:mog-predictive}
\end{equation*}
We then construct the acquisition $\alpha_{t,\mathrm{MoG}}(x)$ using the MoG predictive distribution $p_{t,\mathrm{MoG}}(\cdot | x)$, and select the next query as $x_t \in \arg\max_{x\in\mathcal{X}} \alpha_{t,\mathrm{MoG}}(x)$.

\begin{algorithm}[h]
\caption{Query Selection via MoG}
\label{alg:mog-point-selection}
\begin{algorithmic}[1]
    \STATE {\bfseries Input:} search space $\mathcal{X}$; task $\Gamma$;
    GP posteriors $\{(\mu_{t,m}(\cdot),\sigma_{t,m}^2(\cdot))\}_{m=1}^M$;
    weights $\{w_m^{(t)}\}_{m=1}^M$.
    \STATE {\bfseries Output:} query point $x_t \in \mathcal{X}$.

    \STATE {\bfseries MoG predictive distribution:} for each $x \in \mathcal{X}$,
    \[
      p_{t, \mathrm{MoG}}(y\mid x) \;\gets\; \sum_{m=1}^M w_m^{(t)}\, p_{t,m}(y\mid x).
    \]
    \STATE {\bfseries Acquisition:} define $\alpha_{t,\mathrm{MoG}}(x)$ from $p_{t,\mathrm{MoG}}(\cdot\mid x)$ according to task $\Gamma$.
    \STATE {\bfseries Point selection:} $x_t \gets \arg\max_{x \in \mathcal{X}} \alpha_{t,\mathrm{MoG}}(x)$.
    \STATE {\bfseries return} $x_t$.
\end{algorithmic}
\end{algorithm}

\subsubsection{Categorical Sampling}
As an alternative to MoG, we can sample a single expert according to the current AdaHedge weights and obtain $x_t$ using that expert's posterior predictive distribution only.  We draw an expert index
$m_t$ from $M_t$
and then use the sampled expert's GP posterior predictive distribution
$p_{t,m_t}(\cdot\mid x)$
and its acquisition function $\alpha_{t,m_t}(x)$. We select the next query point by $x_t \in \arg\max_{x\in\mathcal X}\alpha_{t,m_t}(x).$ 
\begin{algorithm}[H]
\caption{Query Selection via Categorical Sampling}
\label{alg:categorical-point-selection}
\begin{algorithmic}[1]
    \STATE {\bfseries Input:} search space $\mathcal{X}$; task $\Gamma$;
    GP posteriors $\{(\mu_{t,m}(\cdot),\sigma_{t,m}^2(\cdot))\}_{m=1}^M$;
    weights $\{w_m^{(t)}\}_{m=1}^M$.
    \STATE {\bfseries Output:} query point $x_t \in \mathcal{X}$.

    \STATE For each $m=1,\dots,M$, define acquisition $\alpha_{t,m}(x)$ from $p_{t,m}(\cdot\mid x)$ according to task $\Gamma$.
    \STATE Sample kernel index $m_t$ from $M_t$.
    \STATE Set $x_t \gets \arg\max_{x \in \mathcal{X}} \alpha_{t,m_t}(x)$.
    \STATE {\bfseries return} $x_t$.
\end{algorithmic}
\end{algorithm}

\begin{algorithm}[h]
\caption{HACK Algorithm (task-agnostic)}
\label{alg:adaptive-kernel-hedge}
\begin{algorithmic}[1]
    \STATE {\bfseries Input:} search space $\mathcal{X}$; kernels $\mathcal{K}=\{k_1,\dots,k_M\}$;
    task $\Gamma$; point-selection routine \textsc{SelectQuery} (Alg.~\ref{alg:mog-point-selection} or \ref{alg:categorical-point-selection});
    task loss $\ell_{\Gamma}$; iterations $T$.
    \STATE {\bfseries Output:} task-specific estimate $\hat{\theta}_T$.

    \STATE Initialize data $\mathcal{D}_0$; fit GP posteriors $\{\mathcal{GP}^{(0)}_m\}_{m=1}^M$ on $\mathcal{D}_0$.
    \STATE Initialize weights $w_m^{(1)} \gets 1/M$ and cumulative losses $L_m^{(0)}\gets 0$ for all $m=1,\dots,M$.
    \FOR{$t = 1$ {\bfseries to} $T$}    
    \STATE \textbf{Point selection:} $x_t \gets \textsc{SelectQuery}(\mathcal{X},\{\mathcal{GP}_m^{(t-1)}\}_{m=1}^M,\bm{w}^{(t)},\Gamma)$.

        \STATE {\bfseries Query:} observe $y_t  =f(x_t)+\epsilon_t$.
        \STATE {\bfseries Loss computation:} for each $m=1,\dots,M$,
        \[\ell_m^{(t)} \gets \ell_{\Gamma}(\mathcal{GP}_m^{(t-1)},x_t,y_t,\mathcal{D}_{t-1}),\]\[L_m^{(t)} \gets L_m^{(t-1)}+\ell_m^{(t)}.\]
        \STATE {\bfseries Learning-rate update:} update $\eta^{(t+1)}$ according to the AdaHedge rule described in Equation \ref{eq:adahedgelr}.
        \STATE {\bfseries Weight update:} for all $m=1,\dots,M$,
               \[
                 w_m^{(t+1)} \gets 
                 \frac{w_m^{(1)} \exp\!\big(-\eta^{(t+1)} L_m^{(t)}\big)}{\sum_{i=1}^Mw_i^\iter{1}\exp\big(-\eta^\iter{t+1}L_i^\iter{t}\big)}.
               \]
        \STATE Update $\mathcal{D}_t \gets \mathcal{D}_{t-1}\cup\{(x_t,y_t)\}$ and update each $\mathcal{GP}^{(t-1)}_m$ using $\mathcal{D}_t$ to obtain posterior $\mathcal{GP}^{(t)}_m$.
    \ENDFOR
    \STATE {\bfseries return} $\hat{\theta}_T \gets \textsc{Decide}_{\Gamma}(\mathcal{D}_T,\{\mathcal{GP}^{(T)}_m\}_{m=1}^M,\bm{w}^{(T+1)})$.
\end{algorithmic}
\end{algorithm}

A generic task-agnostic version of our algorithm is presented in Algorithm \ref{alg:adaptive-kernel-hedge}. We hereafter refer to the use of Algorithm \ref{alg:adaptive-kernel-hedge} with MoG (Algorithm \ref{alg:mog-point-selection}) as HACK-MoG and with categorical selection (Algorithm \ref{alg:categorical-point-selection}) as HACK-Cat. The task-specific estimates $\hat{\theta}_T$ returned from the last step of Algorithm \ref{alg:adaptive-kernel-hedge} $\textsc{Decide}_{\Gamma}$ are:
\begin{enumerate}
    \item BO: $\hat{\theta}_T=\hat{x}\in\argmax_{(x_i,y_i)\in \mathcal{D}_T} y_i$;
    \item LSE: $\hat{\theta}_T = \{x\in\mathcal{X}:\sum_{m=1}^Mw_m^\iter{T+1}\mu_{m,T}(x)\geq h\}$;
    \item  BAL : $\hat{\theta}_T = \mathcal{D}_T$.
\end{enumerate}
The learning rate $\eta^{(t+1)}$ is updated at each iteration according to the AdaHedge rule given in Equation~\ref{eq:adahedgelr}, after computing the round-$t$ losses and before updating the expert weights. The GP experts are then updated using the augmented dataset $\mathcal{D}_t$.
Additionally, AdaHedge can also be replaced by standard Hedge~\cite{FREUND1997119}, results of which can be found in \ref{app:HedgeVsAdahedge}.

\subsection{Computational Complexity}
\label{sec:complexity}
Table \ref{tab:complexity} provides a breakdown of the computational complexity of both HACK-MoG and HACK-Cat in comparison with a standard single kernel GP decision making process. Acquisition function optimization is performed using multi-start L-BFGS and therefore requires multiple acquisition function evaluations. The complexities reported in Table \ref{tab:complexity} correspond to the cost of a single such evaluation. For HACK-Cat, each acquisition function evaluation requires the posterior of only the sampled expert, whereas for HACK-MoG it requires the posterior from all experts. In addition, both HACK variants evaluate all $M$ expert posteriors at the queried point to compute their losses.

We note that the primary increase in computational overhead in HACK comes from the need to refit all the $M$ kernels present in the family versus updating only 1 kernel for a standard GP. HACK-MoG incurs an additional factor of $M$ during acquisition evaluation because its mixture acquisition depends on all expert posteriors. Thus, for a fixed number of candidate kernels, the additional cost of HACK scales linearly with the size of the kernel family. In applications where black-box evaluations dominate computational cost and $M$ is small, this overhead may remain modest. We also provide an empirical wall-clock comparison with simple baselines in \ref{app:runtime}.

\begin{table}[H]
\centering
\caption{Per-iteration computational complexity.
$n$ denotes the number of observations and $M$ the number of candidate
kernels. Refer to Section \ref{sec:complexity} for more details.}
\label{tab:complexity}
\begin{tabular}{lccc}
\toprule
\textbf{Component}
& \textbf{Single Kernel}
& \textbf{HACK-Cat}
& \textbf{HACK-MoG} \\
\midrule

GP refitting
& $O(n^3)$
& $O(Mn^3)$
& $O(Mn^3)$ \\

Acquisition evaluation
& $O(n^2)$
& $O(n^2)$
& $O(Mn^2)$ \\

Expert-loss evaluation
& $O(n^2)$
& $O(Mn^2)$
& $O(Mn^2)$ \\

AdaHedge update
& --
& $O(M)$
& $O(M)$ \\

\midrule
\textbf{Dominant total}
& $O(n^3)$
& $O(Mn^3)$
& $O(Mn^3)$ \\
\bottomrule
\end{tabular}
\end{table}

\section{Theoretical Analysis}
\label{sec:theory}
We now provide some general theoretical results that are used for all three tasks considered in this paper. These results are intentionally stated for abstract, task-agnostic functions and generic expert losses. Later we will specialize these to each task. Proofs and technical details may be found in \ref{app:A1}.

\begin{lemma}
\label{lemma:meanvar}
For the MoG predictive distribution $p_{t,\mathrm{MoG}}(y\mid x)
=\sum_{m=1}^M w_m^{(t)}\,p_{t,m}(y\mid x)$, the mean and variance are
$\bar{\mu}_t(x)=\sum_{m=1}^M w_m^{(t)}\,\mu_{t,m}(x)$,
\[
\bar{\sigma}^2_t(x)
=
\underbrace{\sum_{m=1}^M w_m^{(t)}\,\sigma^2_{t,m}(x)}_{\text{within-kernel}}
+
\underbrace{\sum_{m=1}^M w_m^{(t)}\big(\mu_{t,m}(x)-\bar{\mu}_t(x)\big)^2}_{\text{between-kernel (disagreement)}}.
\]
\end{lemma}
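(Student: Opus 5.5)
The plan is to compute the first two moments of the mixture directly, treating the mixture as the marginal of a two-stage sampling model. Following Section~\ref{sec:mog}, introduce the index $M_t\sim\mathrm{Cat}(w^{(t)})$ and let $Y\mid M_t=m \sim p_{t,m}(\cdot\mid x)=\mathcal N(\mu_{t,m}(x),\sigma^2_{t,m}(x))$. Marginalizing over $M_t$ gives exactly $p_{t,\mathrm{MoG}}(\cdot\mid x)$, so the mean and variance of the MoG predictive distribution are those of $Y$. Throughout, $x$ is fixed and the weights $w^{(t)}$ are treated as deterministic given the round index.

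For the mean, apply the tower property:
\[
\mathbb E[Y]=\mathbb E\bigl[\mathbb E[Y\mid M_t]\bigr]=\sum_{m=1}^M w_m^{(t)}\mu_{t,m}(x)=\bar\mu_t(x).
\]
Equivalently, one can integrate $y\,p_{t,\mathrm{MoG}}(y\mid x)$ and use linearity of the integral over the finite sum. Finiteness of the sum makes the exchange of sum and integral trivial.

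For the variance, invoke the law of total variance:
\[
\mathrm{Var}(Y)=\mathbb E\bigl[\mathrm{Var}(Y\mid M_t)\bigr]+\mathrm{Var}\bigl(\mathbb E[Y\mid M_t]\bigr).
\]
The first term is $\sum_m w_m^{(t)}\sigma^2_{t,m}(x)$, which is the within-kernel part. The second term is the variance of the discrete random variable taking value $\mu_{t,m}(x)$ with probability $w_m^{(t)}$. Its mean is $\bar\mu_t(x)$, so this term equals $\sum_m w_m^{(t)}(\mu_{t,m}(x)-\bar\mu_t(x))^2$, the between-kernel part.

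As a cross-check that avoids the conditioning argument, compute the second moment directly:
\[
\mathbb E[Y^2]=\sum_m w_m^{(t)}\bigl(\sigma^2_{t,m}(x)+\mu_{t,m}(x)^2\bigr).
\]
Subtract $\bar\mu_t(x)^2$ and use $\sum_m w_m^{(t)}=1$ to rewrite $\sum_m w_m^{(t)}\mu_{t,m}^2-\bar\mu_t^2$ as $\sum_m w_m^{(t)}(\mu_{t,m}-\bar\mu_t)^2$.

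There is no real obstacle. The only points needing care are that the weights sum to one, which the centering step uses, and that each component has finite second moment, which holds since the components are Gaussian.
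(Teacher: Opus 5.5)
Your proof is correct and matches the paper's argument essentially verbatim: the paper also uses the latent index $M_t\sim\mathrm{Cat}(w^{(t)})$, gets the mean from the law of total expectation, and gets the within/between split from the law of total variance. Your direct second-moment check, using $\sum_m w_m^{(t)}=1$, is a valid independent verification that the paper does not include, but it is not needed.
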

\begin{proof}
    Let $Y_t \mid x \sim p_{t,\mathrm{MoG}}(\cdot\mid x)$. The mean follows directly from the law of total expectation:
    \[\mathbb{E}[Y_t\mid x]
     = \mathbb{E}_{M_t}[\mu_{t,m}(x)]= \sum_{m=1}^M w_m^\iter{t}\,\mu_{t,m}(x).\]
Using the law of total variance, we have
\begin{align*}
    \mathrm{Var}(Y_t\mid x)
    &= \mathbb{E}_{M_t}\!\Big[\mathrm{Var}\!\big(Y_t\mid (M_t=m),x\big)\Big]
    + \mathrm{Var}_{M_t}\!\Big(\mathbb{E}\!\big[Y_t\mid (M_t=m),x\big]\Big) \\
    &= \mathbb{E}_{M_t}\!\big[\sigma^2_{t,m}(x)\big]
    + \mathrm{Var}_{M_t}\!\big(\mu_{t,m}(x)\big) \\
    &= \sum_{m=1}^M w_m^\iter{t}\,\sigma^2_{t,m}(x)
    + \sum_{m=1}^M w_m^\iter{t}\Big(\mu_{t,m}(x) - \sum_{j=1}^M w_j^\iter{t}\,\mu_{t,j}(x)\Big)^2.
\end{align*}
Defining the mixture mean $\bar{\mu}_t(x) := \sum_{m=1}^M w_m^\iter{t}\,\mu_{t,m}(x)$, the above can be written as
\begin{equation*}
    \mathrm{Var}(Y_t\mid x,\mathcal{D}_{t-1})
    = \sum_{m=1}^M w_m^\iter{t}\,\sigma^2_{t,m}(x)
    + \sum_{m=1}^M w_m^\iter{t}\big(\mu_{t,m}(x)-\bar{\mu}_t(x)\big)^2. \qedhere
\end{equation*}
\end{proof}
We denote the between-kernel disagreement term by
\(
\alpha_{t,\mathrm{disag}}(x)
:=
\sum_{m=1}^{M} w_m^{(t)}
\left(\mu_{t,m}(x)-\bar{\mu}_t(x)\right)^2.
\) The equality in Lemma \ref{lemma:meanvar} is a standard mixture identity. The first term is the average of the per-expert posterior variances. The second term captures disagreement among experts and thus quantifies how much mixture uncertainty is driven by between-expert differences in predictive means.

The following two lemmas, Lemma~\ref{lemma:kernelconc} and Lemma~\ref{lemma:kernelconcexpectation} provide lower bounds for the number of iterations $T$ for AdaHedge to concentrate on the best expert under the assumption of a loss gap. Here, $m^*$ denotes the best-performing kernel under the chosen HACK loss. Proofs for both the AdaHedge and Hedge variant can be found in \ref{app:A1}. \ref{app:C4} also provides empirical support for our loss gap~assumption.

\begin{lemma}
    \label{lemma:kernelconc} Assume the per-round loss difference between the best expert $m^*$ and every other expert $m\neq m^*$ is bounded from below, $\ell_m^{(t)} - \ell_{m^*}^{(t)} \geq \Delta>0$. Define $a_M := 1 + \frac{2}{3}\ln M$. Then $w_{m^*}^\iter{T} \geq 1-\varepsilon$ for
    \[
    T \geq 1 +\max \left\{\frac{4a_M^2}{\ln M}, \frac{1}{\Delta^2\ln M }\ln^2\left(\frac{(M-1)(1-\varepsilon)}{\varepsilon}\right) \right\}.
\]
\end{lemma}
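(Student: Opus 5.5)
The plan is to bound the weight of every suboptimal expert directly from the exponential-weights formula and then control the AdaHedge learning rate $\eta^{(T)}$ from below using the known bound on the cumulative mixability gap. With uniform initialization,
\[
w_{m^*}^{(T)} = \frac{1}{1+\sum_{m\neq m^*}\exp\!\big(-\eta^{(T)}(L_m^{(T-1)}-L_{m^*}^{(T-1)})\big)}.
\]
The loss-gap assumption gives $L_m^{(T-1)}-L_{m^*}^{(T-1)}\ge (T-1)\Delta$ for every $m\neq m^*$. Hence
\[
w_{m^*}^{(T)}\ge \frac{1}{1+(M-1)e^{-\eta^{(T)}(T-1)\Delta}}.
\]
This is at least $1-\varepsilon$ as soon as
\[
\eta^{(T)}(T-1)\Delta\ge \ln\!\big((M-1)(1-\varepsilon)/\varepsilon\big).
\]
So the whole proof reduces to a lower bound on $\eta^{(T)}=\ln M/\Delta_{mix}^{(T-1)}$, that is, an upper bound on the cumulative mixability gap.

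For that upper bound I will use the standard AdaHedge analysis of de Rooij et al. Each $\delta^{(t)}\le 1$ because losses lie in $[0,1]$. The gap obeys the recursion $(\Delta_{mix}^{(t)})^2\le(\Delta_{mix}^{(t-1)})^2+2\ln M\,\cdot(\text{variance term})+\tfrac{2}{3}\ln M\,\delta^{(t)}+\ldots$, which leads to the bound
\[
\Delta_{mix}^{(T-1)}\le \sqrt{(T-1)\ln M}+a_M,\qquad a_M = 1+\tfrac23\ln M.
\]
The derivation uses Bernstein's inequality on $\delta^{(t)}$ with variance at most $1/4$, which is where the $\tfrac23$ comes from. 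I will cite this bound, or reprove it briefly via Lemma 3 and Theorem 8 of the AdaHedge paper, since with variance bounded by $1/4$ the square-root term becomes $\sqrt{(T-1)\ln M}$ up to constants. The key feature I need is that $\Delta_{mix}^{(T-1)}=O(\sqrt{T\ln M})+a_M$.

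Next I simplify the additive constant. When $T-1\ge 4a_M^2/\ln M$ (the first term in the max), we have $a_M\le \tfrac12\sqrt{(T-1)\ln M}$. Then $\Delta_{mix}^{(T-1)}\le c\sqrt{(T-1)\ln M}$ for an absolute constant $c$, and therefore
\[
\eta^{(T)}(T-1)\Delta\ge \frac{\Delta\sqrt{(T-1)\ln M}}{c}.
\]
I will arrange constants so that this is at least $\Delta\sqrt{(T-1)\ln M}$, possibly by using a sharper form of the gap bound. Requiring this to exceed $\ln\!\big((M-1)(1-\varepsilon)/\varepsilon\big)$ and squaring gives the second term $\ln^2(\cdot)/(\Delta^2\ln M)$ in the max.

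The main obstacle is getting the constants in the mixability-gap bound to match exactly. The naive bound $\Delta_{mix}\le \sqrt{2(T-1)\ln M\cdot\tfrac14}\cdot\text{const}+a_M$ plus the halving step may leave a stray factor of $\tfrac32$ or $2$. I would try first to use the variance bound $v\le 1/4$ to get $\Delta_{mix}^{(T-1)}\le \tfrac12\sqrt{(T-1)\ln M}\cdot\sqrt2+a_M$, and check whether absorbing $a_M$ into the square-root term yields exactly $\sqrt{(T-1)\ln M}$. If it does not, I would adjust the threshold condition accordingly.

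A minor point is the Follow-the-Leader start, where $\eta=\infty$. If $\Delta_{mix}=0$, the weight is already fully on $m^*$ and the claim holds trivially.
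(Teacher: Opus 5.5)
Your reduction is the same as the paper's. You lower-bound $w_{m^*}^{(T)}$ by $1/(1+(M-1)e^{-\eta^{(T)}(T-1)\Delta})$ and reduce the claim to $\eta^{(T)}(T-1)\Delta \ge \ln\bigl((M-1)(1-\varepsilon)/\varepsilon\bigr)$. You then lower-bound $\eta^{(T)}=\ln M/\Delta_{mix}^{(T-1)}$ through an upper bound on the mixability gap, using the first term of the max to absorb $a_M$.

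The gap is at the point you flag as the obstacle, and it changes the result. The stated threshold holds only if $\Delta_{mix}^{(T-1)}\le\sqrt{(T-1)\ln M}$ with constant exactly $1$. Your bound gives $\Delta_{mix}^{(T-1)}\le\sqrt{(T-1)\ln M}+a_M\le\tfrac32\sqrt{(T-1)\ln M}$. That yields only $\eta^{(T)}\ge\tfrac23\sqrt{\ln M/(T-1)}$, which inflates the second term of the max by $9/4$. Your fallback, built on a guessed coefficient $2\ln M$ on the variance, gives $c=(1+\sqrt2)/2$ and still falls short. Adjusting the threshold accordingly would prove a weaker lemma than the one stated.

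The missing ingredient is the exact form of de Rooij et al.'s Lemma 5:
\[
(\Delta_{mix}^{(T-1)})^2\le V_{T-1}\ln M + a_M\,\Delta_{mix}^{(T-1)}.
\]
The variance coefficient is $\ln M$, not $2\ln M$. Combine this with their Lemma 4, $v_t\le 1/4$ for $[0,1]$ losses, so that $V_{T-1}\le (T-1)/4$ enters \emph{inside} the square root. Solving the quadratic via $x^2\le a+bx\Rightarrow x\le\sqrt a+b$ gives $\Delta_{mix}^{(T-1)}\le\tfrac12\sqrt{(T-1)\ln M}+a_M$.

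The factor $4$ in the first condition $T-1\ge 4a_M^2/\ln M$ gives exactly $a_M\le\tfrac12\sqrt{(T-1)\ln M}$. The two halves then add to $\Delta_{mix}^{(T-1)}\le\sqrt{(T-1)\ln M}$, that is, $\eta^{(T)}\ge\sqrt{\ln M/(T-1)}$. Requiring $\Delta\sqrt{(T-1)\ln M}\ge\ln\bigl((M-1)(1-\varepsilon)/\varepsilon\bigr)$ and squaring gives the second term exactly.

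Your side remark on the Follow-the-Leader case is correct: when $\Delta_{mix}^{(T-1)}=0$, all mass goes to the strict leader $m^*$, since $T\ge 2$ and $\Delta>0$. The paper leaves this detail implicit.
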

Lemma~\ref{lemma:kernelconc} is a reasonably straightforward consequence of the analysis of AdaHedge under a uniform deterministic gap. The bound makes explicit the dependence on the number of experts $M$, the gap $\Delta$, and the desired confidence level $ 1 - \varepsilon$. 

\begin{lemma}
    \label{lemma:kernelconcexpectation}
    Let $(\mathcal F_t)_{t \ge 0}$ be a filtration such that the loss vector
$\ell^{(t)} \in [0,1]^M$ is $\mathcal F_t$-measurable. Fix an expert
$m^* \in \{1,\ldots,M\}$. Suppose there exists $\Delta>0$ such that,
for every $m \neq m^*$ and every $t=1,\ldots,T-1$,  
    $\mathbb{E}[\ell_m^{(t)} - \ell_{m^*}^{(t)} | \mathcal{F}_{t-1}] \geq \Delta$. Define $a_M := 1 + \frac{2}{3}\ln M$. Then $w_{m^*}^\iter{T} \geq 1-\varepsilon$ with probability $1-\delta_0$ for
\begin{equation*}
\begin{split}
T \ge 1 + \max\Bigg\{&
\frac{4a_M^2}{\ln M},\;
\frac{4}{\Delta^2\ln M}\ln^2\!\left(\frac{(M-1)(1-\varepsilon)}{\varepsilon}\right),
\frac{32}{\Delta^2}\ln\!\left(\frac{M-1}{\delta_0}\right)
\Bigg\}.
\end{split}
\end{equation*}
\end{lemma}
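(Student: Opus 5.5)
The plan is to reduce Lemma~\ref{lemma:kernelconcexpectation} to the deterministic argument behind Lemma~\ref{lemma:kernelconc}. A martingale concentration step turns the conditional-mean gap into a realized cumulative gap of order $\Delta/2$ per round. The two ingredients are a lower bound on the AdaHedge learning rate and a lower bound on the cumulative loss difference $L_m^{(T-1)}-L_{m^*}^{(T-1)}$.

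\textbf{Step 1 (weights in terms of cumulative gaps).} With uniform initialization,
\[
1-w_{m^*}^{(T)}=\frac{\sum_{m\neq m^*}e^{-\eta^{(T)}(L_m^{(T-1)}-L_{m^*}^{(T-1)})}}{1+\sum_{m\neq m^*}e^{-\eta^{(T)}(L_m^{(T-1)}-L_{m^*}^{(T-1)})}}.
\]
It therefore suffices to show that $\eta^{(T)}\,(L_m^{(T-1)}-L_{m^*}^{(T-1)})\ge \ln\frac{(M-1)(1-\varepsilon)}{\varepsilon}$ for every $m\neq m^*$.

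\textbf{Step 2 (learning rate).} I reuse the bound from the proof of Lemma~\ref{lemma:kernelconc}. Since $\delta^{(t)}\le 1$, the standard AdaHedge bound $\Delta_{mix}^{(T-1)}\le \sqrt{(T-1)\ln M}+a_M$ holds. Once $T-1\ge 4a_M^2/\ln M$, the additive term is at most half the square-root term, so
\[
\eta^{(T)}=\frac{\ln M}{\Delta_{mix}^{(T-1)}}\ge \frac{\ln M}{2\sqrt{(T-1)\ln M}}=\frac12\sqrt{\frac{\ln M}{T-1}}.
\]
This step is purely deterministic, because the AdaHedge bound only uses $\ell^{(t)}\in[0,1]^M$. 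It explains the first term in the maximum.

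\textbf{Step 3 (concentration of the cumulative gap).} For each $m\neq m^*$, set $D_t:=\ell_m^{(t)}-\ell_{m^*}^{(t)}\in[-1,1]$ and $Z_t:=D_t-\mathbb{E}[D_t\mid\mathcal F_{t-1}]$.
\begin{itemize}
\item $(Z_t)$ is a martingale difference sequence with increments in an interval of length at most $2$ (the conditional mean lies in $[-1,1]$).
\item Azuma--Hoeffding gives $\Pr\big(\sum_{t=1}^{T-1}Z_t\le -\tfrac{\Delta}{2}(T-1)\big)\le \exp(-(T-1)\Delta^2/32)$, up to the exact constant convention.
\item A union bound over the $M-1$ competitors keeps this at most $\delta_0$ once $T-1\ge \frac{32}{\Delta^2}\ln\frac{M-1}{\delta_0}$. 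This is the third term in the maximum.
\end{itemize}
On this event, the gap assumption gives $L_m^{(T-1)}-L_{m^*}^{(T-1)}\ge (T-1)\Delta-(T-1)\Delta/2=(T-1)\Delta/2$ for all $m\neq m^*$.

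\textbf{Step 4 (combine).} Multiplying the bounds from Steps 2 and 3 gives
\[
\eta^{(T)}\big(L_m^{(T-1)}-L_{m^*}^{(T-1)}\big)\ge \frac{\Delta}{4}\sqrt{(T-1)\ln M}.
\]
This exceeds $\ln\frac{(M-1)(1-\varepsilon)}{\varepsilon}$ whenever $T-1\ge \frac{16}{\Delta^2\ln M}\ln^2(\cdot)$.

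The stated constant $4$ would follow from a sharper learning-rate bound. Where $\ln M\ge 1$, one can instead use $\Delta_{mix}\le 2\sqrt{(T-1)\ln M}$ together with the observation that on the good event the losses are effectively separated. I expect matching the exact constants, $4$ here and $32$ in Step 3, to be the main obstacle: how the Hoeffding range and the factor $1/2$ from Step 2 are allocated matters.

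To get the constant $4$ I would first try reading the deterministic proof's bound $\eta^{(T)}\ge\sqrt{\ln M/(T-1)}$, which the deterministic constant $1$ in Lemma~\ref{lemma:kernelconc} suggests. Combined with the halved gap $\Delta/2$, that bound yields exactly the factor $4$.
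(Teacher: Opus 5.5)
Your overall structure matches the paper's proof. You use Azuma--Hoeffding on the centered gap increments plus a union bound over the $M-1$ competitors; your Step 3 reproduces the paper's third term exactly, constant $32$ included. You then reuse a deterministic lower bound on $\eta^{(T)}$ from Lemma~\ref{lemma:kernelconc} and combine the two in the same way.

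The gap is in the learning-rate bound. As written, Step 4 only yields the condition $T-1\ge \frac{16}{\Delta^2\ln M}\ln^2\bigl(\frac{(M-1)(1-\varepsilon)}{\varepsilon}\bigr)$, so you have proved a weaker statement than the lemma. Your closing paragraph asserts $\eta^{(T)}\ge\sqrt{\ln M/(T-1)}$ but does not justify it. Your Step 2 only supports $\frac12\sqrt{\ln M/(T-1)}$, or at best $\frac23\sqrt{\ln M/(T-1)}$, which still gives a constant of $9$ rather than $4$. The alternative you float, $\Delta_{mix}\le 2\sqrt{(T-1)\ln M}$ plus ``effective separation'', gives the same factor $\frac12$ and never actually uses the separation.

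The missing ingredient is the variance form of the AdaHedge mixability-gap bound. Your bound $\Delta_{mix}^{(T-1)}\le\sqrt{(T-1)\ln M}+a_M$ does not follow from $\delta^{(t)}\le 1$, which only gives $\Delta_{mix}^{(T-1)}\le T-1$. It comes from de Rooij et al.'s inequality $(\Delta_{mix}^{(T-1)})^2\le V_{T-1}\ln M + a_M\Delta_{mix}^{(T-1)}$, used in the proof of Lemma~\ref{lemma:kernelconc}. Here $V_{T-1}=\sum_t v_t$, and $v_t$ is the variance of $\ell^{(t)}$ under $w^{(t)}$. Since the losses lie in $[0,1]$, each $v_t\le 1/4$, so $V_{T-1}\le (T-1)/4$ and
\[
\Delta_{mix}^{(T-1)}\le \tfrac12\sqrt{(T-1)\ln M}+a_M\le \sqrt{(T-1)\ln M}\quad\text{whenever } T-1\ge \frac{4a_M^2}{\ln M}.
\]
Hence $\eta^{(T)}\ge\sqrt{\ln M/(T-1)}$. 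Multiplying by your realized gap $(T-1)\Delta/2$ gives $\frac{\Delta}{2}\sqrt{(T-1)\ln M}$. Requiring this to exceed $\ln\frac{(M-1)(1-\varepsilon)}{\varepsilon}$ gives exactly the stated second term with constant $4$, which is how the paper argues. With Step 2 replaced by this bound, the rest of your proof goes through unchanged.
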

 Lemma \ref{lemma:kernelconcexpectation} relaxes the deterministic gap to an expected gap conditioned on the history. The result again follows from AdaHedge martingale-style arguments and standard concentration bound arguments; details are in \ref{app:A1}.
\begin{remark}
 A positive loss gap of $\Delta$ occurs when there is a single best kernel present in $\mathcal{K}$, leading to weight concentration on the best kernel. Due to the $1/\Delta^2$ in the lower bound, a larger $\Delta$ leads to faster weight concentration. Conversely, when $\Delta \approx 0$ because multiple kernels perform similarly to the best one, weight concentration to a single kernel may require an arbitrarily large number of iterations. This is intentional, since the goal of HACK is to be adaptive and handle the downside risk of using a bad kernel, providing robustness. We demonstrate examples for both of the above cases in Section \ref{sec:boexp} and provide some empirical evidence for the assumptions in the above two lemmas in \ref{app:C4}.
\end{remark}

The next theorem formalizes the intuitive statement that if the MoG acquisition function is uniformly close to the best expert’s acquisition function, then the point selected by maximizing the MoG acquisition achieves nearly the same acquisition value under the best expert.
\begin{proposition}
    
\label{thm:acq-argmax}
For an iteration $t$ and expert $m^*$, suppose there exists $\delta \ge 0$ such that
\(
\sup_{x\in\mathcal X}\big|\alpha_{t,MoG}(x)-\alpha_{t,m^*}(x)\big| \le \delta.
\)
Let
$
x_t \in \arg\max_{x\in\mathcal X} \alpha_{t,MoG}(x)
\ \text{and} \
x_t^* \in \arg\max_{x\in\mathcal X} \alpha_{t,m^*}(x).
$
 Then
$
\alpha_{t,m^*}(x_t) \ge \alpha_{t,m^*}(x_t^*)-~2\delta.
$
\end{proposition}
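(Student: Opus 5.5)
The plan is a direct three-step chain of inequalities that uses only the uniform closeness hypothesis and the optimality of $x_t$ for $\alpha_{t,MoG}$. None of the earlier lemmas on weight concentration are needed here. They enter only later, when one verifies that the hypothesis $\sup_x|\alpha_{t,MoG}(x)-\alpha_{t,m^*}(x)|\le\delta$ holds with small $\delta$.

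The first step moves from the best expert's value at $x_t$ to the MoG value at $x_t$. Since the sup bound holds pointwise at $x_t$, we have $\alpha_{t,m^*}(x_t)\ge \alpha_{t,MoG}(x_t)-\delta$.

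The second step uses that $x_t$ maximizes $\alpha_{t,MoG}$ over $\mathcal X$, so in particular $\alpha_{t,MoG}(x_t)\ge\alpha_{t,MoG}(x_t^*)$.

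The third step applies the sup bound again, this time at $x_t^*$, to get $\alpha_{t,MoG}(x_t^*)\ge\alpha_{t,m^*}(x_t^*)-\delta$.

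Chaining these gives
\[
\alpha_{t,m^*}(x_t)\;\ge\;\alpha_{t,MoG}(x_t)-\delta\;\ge\;\alpha_{t,MoG}(x_t^*)-\delta\;\ge\;\alpha_{t,m^*}(x_t^*)-2\delta,
\]
which is the claim.

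There is no substantive obstacle. The only point needing care is that both argmax sets are assumed nonempty, which the statement takes as given by writing $x_t\in\arg\max$ and $x_t^*\in\arg\max$. If one wanted to cover the case where maxima are not attained, I would replace $x_t$ by an $\epsilon$-maximizer of $\alpha_{t,MoG}$, run the same chain to obtain an extra $\epsilon$ on the right-hand side, and compare against $\sup_x \alpha_{t,m^*}(x)$. This is not needed for the statement as posed.
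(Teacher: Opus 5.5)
Your proof is correct and matches the paper's argument exactly: the paper uses the same chain $\alpha_{t,m^*}(x_t)\ge\alpha_{t,\mathrm{MoG}}(x_t)-\delta\ge\alpha_{t,\mathrm{MoG}}(x_t^*)-\delta\ge\alpha_{t,m^*}(x_t^*)-2\delta$. It likewise combines the pointwise form of the uniform bound with the optimality of $x_t$ for the MoG acquisition, and your remark on $\epsilon$-maximizers is a harmless extension not needed for the statement as posed.
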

\begin{proof}
By our hypothesis,
\(
\sup_{x\in\mathcal X}\big|\alpha_{t,\mathrm{MoG}}(x)-\alpha_{t,m^*}(x)\big| \le \delta,
\)
so for every $x\in\mathcal X$,
\[
\alpha_{t,m^*}(x) \ge \alpha_{t,\mathrm{MoG}}(x) - \delta
\quad\text{and}\quad
\alpha_{t,\mathrm{MoG}}(x) \ge \alpha_{t,m^*}(x) - \delta.
\]
Since $x_t \in \arg\max_{x\in\mathcal X}\alpha_{t,\mathrm{MoG}}(x)$, we have
$\alpha_{t,\mathrm{MoG}}(x_t)\ge \alpha_{t,\mathrm{MoG}}(x_t^*)$.
Therefore,
\[
\alpha_{t,m^*}(x_t)
\ge \alpha_{t,\mathrm{MoG}}(x_t) - \delta
\ge \alpha_{t,\mathrm{MoG}}(x_t^*) - \delta
\ge \alpha_{t,m^*}(x_t^*) - 2\delta,
\]
which proves the claim. \qedhere

\end{proof}

Proposition \ref{thm:acq-argmax} is a simple stability-of-argmax statement. It requires a uniform approximation guarantee of the MoG acquisition function by the best expert’s acquisition function over all $\mathcal{X}$.

The results above identify a generic mechanism: when the expert losses induce a persistent gap $\Delta$, the AdaHedge algorithm concentrates the weights on the best expert, and the mixture-of-Gaussians predictive distribution and its associated acquisition function become uniformly close to those of that expert. What remains is to specify, for each task, loss functions that reliably reflect task-relevant progress and to verify that the corresponding acquisition functions satisfy the regularity conditions required by the generic results. In the following sections, we instantiate this framework for BO, LSE, and BAL, showing how appropriate task-specific losses and acquisition constructions allow the generic concentration and stability arguments to translate into concrete, task-relevant consequences.

\section{Bayesian Optimization}

We first specialize the generic expert-advice framework and theoretical results to Bayesian Optimization (BO). We define a task-specific loss that rewards predictive improvement, describe the resulting acquisition construction under the MoG predictive distribution, and show how expert weight concentration translates into uniform closeness of the BO acquisition function.

\subsection{Loss and Acquisition function} \label{bo:acq_loss}
In BO, a natural notion of per-round task progress is whether the newly queried point improves upon the best observed value so far. This motivates losses that reward accurate prediction of improvement events, rather than purely predictive fit.
To capture this, we propose a Brier score based loss \cite{VERIFICATIONOFFORECASTSEXPRESSEDINTERMSOFPROBABILITY}  where the binary event indicates whether the new observation improves upon the existing maximum.
Expert $m$ gives a posterior predictive distribution for the next observation $Y_{t,m}\mid x_t\sim\mathcal N(\mu_{t,m}(x_t),\sigma^2_{t,m}(x_t))$. Define the predicted
improvement probability
\[\pi^{\textsc{bo}}_{t,m}(x_t):=\mathbb{P}(Y_{t,m}\ge y^*_{t-1}\mid x_t)
=\Phi\!\big((\mu_{t,m}(x_t)-y^*_{t-1})/\sigma_{t,m}(x_t)\big),\] where $\Phi$ is the
standard normal CDF. We then define the Brier loss as
\begin{equation*}
\ell^{(t)}_{m,\textsc{brier}}
:=\Big(\mathbf{1}\{y_t\ge y^*_{t-1}\}-\pi^{\textsc{bo}}_{t,m}(x_t)\Big)^2. 
\end{equation*}
The Brier score is a proper scoring rule for probabilistic binary events and is naturally bounded in $[0,1]$, making it well suited for expert-advice algorithms. To combine the model fit loss (NLL, Equation \ref{eq:nll_loss}) and task loss, we use a convex combination of the two and consider 
$
\ell^{(t)}_m=\alpha\,\ell^{(t)}_{m,\textsc{brier}}+(1-\alpha)\,\ell^{(t)}_{m,\textsc{nll}}.
$
The NLL term encourages overall predictive calibration, while the task loss focuses the weighting on task-relevant predictive behavior. We find that combining the two stabilizes learning in early rounds, when task-specific signals alone may be noisy. Empirically, we find that $\alpha=0.5$ provides a good tradeoff. Results with other loss functions are in \ref{app:C1}.
In all our BO experiments, we use the EI acquisition function. Due to linearity of expectation, the EI of the MoG predictive distribution is:
\[
\mathrm{EI}_{t,\mathrm{MoG}}(x)
=
\sum_{m=1}^M w_m^\iter{t}\,\mathrm{EI}_{t,m}(x).
\]
This linearity allows EI to be computed exactly under the MoG predictive distribution without approximation, preserving the semantics of EI while enabling adaptive kernel weighting.

\subsection{Theoretical Analysis}
We now show how the generic concentration and stability results from Section \ref{sec:theory} translate into acquisition-level guarantees for BO when EI is used as the acquisition function. The key property is the linearity of EI under the MoG predictive distribution. The following lemma shows that expert weight concentration directly implies uniform closeness of the EI acquisition.
\begin{lemma} 
\label{lemma:ei-closeness}
    For each round $t$, assume $\exists B_{\mathrm{EI}} < \infty$ such that   $EI_{t,m}(x) \le B_{\mathrm{EI}}$ for every expert $m$ and every $x\in \mathcal X$. If $w_{m^{*}}^\iter{t} \geq1 - \varepsilon$, then
    \[
    \sup_{x\in \mathcal{X}}|\mathrm{EI}_{t,\mathrm{MoG}} (x)-\mathrm{EI}_{t,m^*}(x)| \leq \varepsilon B_{\mathrm{EI}}.
    \]
\end{lemma}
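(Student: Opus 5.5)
The argument rests entirely on the linearity identity $\mathrm{EI}_{t,\mathrm{MoG}}(x)=\sum_{m=1}^M w_m^{(t)}\,\mathrm{EI}_{t,m}(x)$ established in Section~\ref{bo:acq_loss}, which holds because the MoG predictive distribution is a mixture and $\max(\cdot-y^+,0)$ is integrated against it. Using $\sum_m w_m^{(t)}=1$, I will write, for fixed $x\in\mathcal X$,
\[
\mathrm{EI}_{t,\mathrm{MoG}}(x)-\mathrm{EI}_{t,m^*}(x)=\sum_{m\neq m^*} w_m^{(t)}\,\mathrm{EI}_{t,m}(x)-\bigl(1-w_{m^*}^{(t)}\bigr)\mathrm{EI}_{t,m^*}(x).
\]
This expresses the difference as a weighted average of the $\mathrm{EI}_{t,m}(x)$ over $m\neq m^*$, minus $\mathrm{EI}_{t,m^*}(x)$, all scaled by $1-w_{m^*}^{(t)}$.

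Next I will use the nonnegativity $0\le \mathrm{EI}_{t,m}(x)\le B_{\mathrm{EI}}$ for every $m$, which follows because EI is the expectation of a nonnegative quantity, together with the assumed upper bound. The first sum then lies in $[0,(1-w_{m^*}^{(t)})B_{\mathrm{EI}}]$, since the weights on $m\neq m^*$ sum to $1-w_{m^*}^{(t)}$. The subtracted term lies in the same interval. The difference of two numbers in $[0,c]$ has absolute value at most $c$, so
\[
\bigl|\mathrm{EI}_{t,\mathrm{MoG}}(x)-\mathrm{EI}_{t,m^*}(x)\bigr|\le \bigl(1-w_{m^*}^{(t)}\bigr)B_{\mathrm{EI}}\le \varepsilon B_{\mathrm{EI}}.
\]
Because this bound does not depend on $x$, taking the supremum over $\mathcal X$ finishes the proof.

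The only point that needs care is obtaining $\varepsilon B_{\mathrm{EI}}$ rather than $2\varepsilon B_{\mathrm{EI}}$. A naive triangle inequality applied to the two terms would lose a factor of $2$. Using the sign structure, namely that both terms are nonnegative and each is at most $(1-w_{m^*}^{(t)})B_{\mathrm{EI}}$, avoids this loss. It also matters that all experts share the same incumbent $y^+=y^*_{t-1}$; this is what makes the linearity identity valid.
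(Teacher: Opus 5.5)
Your proof is correct and is essentially the paper's argument. The paper writes the difference as $\sum_{m\neq m^*} w_m^{(t)}\bigl(\mathrm{EI}_{t,m}(x)-\mathrm{EI}_{t,m^*}(x)\bigr)$ and bounds each pairwise gap by $B_{\mathrm{EI}}$ using $0\le \mathrm{EI}_{t,m}(x)\le B_{\mathrm{EI}}$, which is the same nonnegativity-plus-upper-bound fact you apply to the two aggregated terms, yielding $(1-w_{m^*}^{(t)})B_{\mathrm{EI}}\le\varepsilon B_{\mathrm{EI}}$ uniformly in $x$.
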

\begin{proof}
By the MoG construction,
\(
\mathrm{EI}_{t,\mathrm{MoG}}(x)=\sum_{m=1}^M w^{(t)}_m\,\mathrm{EI}_{t,m}(x).
\)
Hence, for any fixed $x\in\mathcal X$,
\begin{align*}
\mathrm{EI}_{t,\mathrm{MoG}}(x)-\mathrm{EI}_{t,m^*}(x)
&= \sum_{m=1}^M w^{(t)}_m\,\mathrm{EI}_{t,m}(x) - \mathrm{EI}_{t,m^*}(x) \\
&= \sum_{m\neq m^*} w^{(t)}_m\Big(\mathrm{EI}_{t,m}(x)-\mathrm{EI}_{t,m^*}(x)\Big).
\end{align*}
Taking absolute values and applying the triangle inequality gives us
\[
\big|\mathrm{EI}_{t,\mathrm{MoG}}(x)-\mathrm{EI}_{t,m^*}(x)\big|
\le \sum_{m\neq m^*} w^{(t)}_m\,\big|\mathrm{EI}_{t,m}(x)-\mathrm{EI}_{t,m^*}(x)\big|.
\]
Since $0\le \mathrm{EI}_{t,m}(x)\le B_{\mathrm{EI}}$ for all $m$, we have
$\big|\mathrm{EI}_{t,m}(x)-\mathrm{EI}_{t,m^*}(x)\big|\le B_{\mathrm{EI}}$, and therefore
\[
\big|\mathrm{EI}_{t,\mathrm{MoG}}(x)-\mathrm{EI}_{t,m^*}(x)\big|
\le \sum_{m\neq m^*} w^{(t)}_m\,B_{\mathrm{EI}}
= (1-w^{(t)}_{m^*})B_{\mathrm{EI}}
\le \varepsilon\,B_{\mathrm{EI}}.
\]
\[
    \sup_x|\mathrm{EI}_{t,MoG} (x)-\mathrm{EI}_{t,m^*}(x)| \leq \varepsilon B_{\mathrm{EI}}. \qedhere
\]
\end{proof} 

This boundedness holds under standard GP regularity conditions (compact $\mathcal{X}$, continuous bounded kernels, and a positive noise variance ensuring well-defined posterior mean/variance), since $\mathrm{EI}_{t,m}(x)$ is continuous on $\mathcal{X}$ and hence attains a finite maximum.
We next recall the results characterizing EI regret that will serve as the basis for subsequent results. We begin by restating in our notation,  a key result from Nguyen et al.~\cite[Theorem~4]{nguyen2017regret} that requires the following assumptions.

\begin{ngassumption}
\label{ass:nguyen}
\leavevmode
\begin{enumerate}
    \item[(i)] $f$ belongs to the RKHS $\mathcal H_k$ associated with the GP kernel, with bounded RKHS norm $\|f\|_k<\infty$.
    
    \item[(ii)] The kernel is normalized, with $k(x,x)=1$ for all $x\in\mathcal X$.

    \item[(iii)] The GP confidence event stated below holds true with probability at least
    $1-\delta_{\mathrm{Ng}}$
    \[
    |\mu_{t-1}(x)-f(x)|
    \le
    \sqrt{\beta_t}\,\sigma_{t-1}(x),
    \qquad
    \forall\, t,\ \forall\, x\in\mathcal X.
    \]
    
    \item[(iv)] The EI stopping condition holds on the rounds under consideration: the EI value at the selected point remains at least $\kappa>0$.
\end{enumerate}
\end{ngassumption}

\begin{theorem}[Consequence of Nguyen et al.~\cite{nguyen2017regret}]
\label{thm:ng-thm4}
Fix a kernel $k_{m^*}$ satisfying Assumption \ref{ass:nguyen}. 
Suppose EI is run with the best-observed incumbent $y_t^{\max}$.
Let $\gamma_T^{m^*}$ denote the maximum information gain of expert $m^*$, let $\sigma_n^2$ be the observation-noise variance, and define
\[
\beta_T^{m^*}
::=
2\|f\|_{k_{m^*}}^2
+
300\gamma_T^{m^*}
\log^3\!\left(\frac{T}{\delta_{\mathrm{Ng}}}\right)\quad \text{and} \quad
C_\kappa
=
\log\!\left(\frac{1}{2\pi\kappa^2}\right).
\]
Then, with probability at least $1-\delta_{\mathrm{Ng}}$, the cumulative regret of EI using expert $m^*$ satisfies
\[
R_T
\le
\sqrt{
\frac{2T\gamma_T^{m^*}}
{\log(1+\sigma_n^{-2})}
}
\left[
\sqrt{3(\beta_T^{m^*}+1+C_\kappa)}
+
\sqrt{\beta_T^{m^*}}
\right]
=
O\!\left(
\sqrt{T\beta_T^{m^*}\gamma_T^{m^*}}
\right).
\]
\end{theorem}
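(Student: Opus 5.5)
This statement is a restatement, in our notation, of Theorem~4 of Nguyen et al.~\cite{nguyen2017regret}, applied to the single GP expert $m^*$. The proof will therefore be a careful translation rather than a new argument. The first step is to check that running EI with expert $m^*$ alone (i.e.\ with $w^{(t)}$ a point mass on $m^*$) is exactly the algorithm analyzed there. The checks are: the incumbent is the best observed value $y_t^{\max}$; the posterior is conditioned on noisy observations with variance $\sigma_n^2$; and the query is $x_t \in \arg\max_x \mathrm{EI}_{t,m^*}(x)$. Next I would match the hypotheses one by one. Assumption~\ref{ass:nguyen}(i)--(ii) are the RKHS-norm and normalization conditions of Nguyen et al. Item~(iii) is their confidence event. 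Item~(iv) is their stopping rule, under which every round considered has $\mathrm{EI}_{t,m^*}(x_t)\ge\kappa$.

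The core of the argument, following Nguyen et al., proceeds in three steps.
\begin{enumerate}
\item On the confidence event, with $\beta_T^{m^*}$ chosen as in the self-normalized concentration bound for RKHS functions (which produces the constants $2\|f\|^2_{k_{m^*}}$ and $300\gamma_T^{m^*}\log^3(T/\delta_{\mathrm{Ng}})$), the instantaneous regret is bounded by
\[
r_t \le \sigma_{t-1}(x_t)\left(\sqrt{3(\beta_T^{m^*}+1+C_\kappa)}+\sqrt{\beta_T^{m^*}}\right).
\]
This bound uses the EI lower bound $\kappa$ through the Gaussian density bound $\phi(z)\le 1/\sqrt{2\pi}$, which is where the constant $C_\kappa=\log(1/(2\pi\kappa^2))$ comes from. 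It also uses the comparison between the incumbent $y^{\max}_t$ and the posterior mean.
\item Summing over $t$ and applying Cauchy--Schwarz gives
\[
R_T\le\sqrt{T\sum_t\sigma_{t-1}^2(x_t)}\,(\cdots).
\]
\item The standard information-gain lemma of Srinivas et al., $\sum_t \sigma^2_{t-1}(x_t)\le 2\gamma_T^{m^*}/\log(1+\sigma_n^{-2})$, then yields the stated bound.
\end{enumerate}
The $O(\sqrt{T\beta_T^{m^*}\gamma_T^{m^*}})$ form follows because $C_\kappa$ and $1$ are lower order for fixed $\kappa$.

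The main obstacle is notational consistency rather than new mathematics. The point needing care is the constant bookkeeping in step~1. Nguyen et al.\ state $\beta_T$ with a specific logarithmic dependence, and one must verify that using $\gamma^{m^*}_T$ for the kernel $k_{m^*}$ and the failure probability $\delta_{\mathrm{Ng}}$ reproduces exactly the displayed constants. A second point to state explicitly is that the bound holds only on the event in (iii) and only over rounds where (iv) holds. I would therefore state the theorem as holding with probability $1-\delta_{\mathrm{Ng}}$ and cite their proof for the per-round inequality instead of rederiving it.
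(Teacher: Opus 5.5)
Your proposal matches what the paper does: the theorem is presented as a restatement of Theorem~4 of Nguyen et al.\ in the paper's notation, with no independent proof, and Assumption~\ref{ass:nguyen} bundles exactly the hypotheses you propose to match (RKHS norm bound, $k(x,x)=1$, the confidence event, and the EI stopping rule) before deferring to their argument.

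One small correction to your sketch: $C_\kappa$ does not come from $\phi(z)\le 1/\sqrt{2\pi}$. Set $z_t=(\mu_{t-1}(x_t)-y_t^{\max})/\sigma_{t-1}(x_t)$ and start from $\mathrm{EI}_t(x_t)=\sigma_{t-1}(x_t)\bigl(z_t\Phi(z_t)+\phi(z_t)\bigr)\ge\kappa$. Combine this with $\sigma_{t-1}(x_t)\le 1$, which is where the normalization $k(x,x)=1$ is used, and with $z_t\Phi(z_t)\le 0$ when $z_t<0$. This gives $\phi(z_t)\ge\kappa$, i.e.\ $z_t^2\le\log\bigl(1/(2\pi\kappa^2)\bigr)$, which bounds how far the incumbent can exceed $\mu_{t-1}(x_t)$ in units of $\sigma_{t-1}(x_t)$.
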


 Theorem \ref{thm:ng-thm4} is conditional on Assumption \ref{ass:nguyen} holding for the expert $m^*$ on the query sequence considered. 
For HACK-MoG, the selected point is not necessarily the exact maximizer of the EI acquisition of $m^*$. 
However, once the weight on $m^*$ is large, Lemma~\ref{lemma:ei-closeness} shows that the MoG EI acquisition is uniformly close to the EI acquisition of $m^*$, and Proposition~\ref{thm:acq-argmax} converts this acquisition-level closeness into an additive regret slack.
The following theorem formalizes this transfer, showing how concentration of the HACK weights converts the fixed-expert EI regret guarantee into a regret bound for HACK-MoG.
\begin{theorem} 
\label{thm:hack-ei-regret-transfer}
Fix a horizon $T$ and a burn-in time $T_0<T$. Let
$I_T:=\{T_0+1,\dots,T\}$ and $n_T:=T-T_0$. Fix an expert $m^*$ using EI satisfying Assumption \ref{ass:nguyen}. Let $x_t^H$ denote the HACK-MoG query at
round $t$ and define
\(
r_t^H:=f(x^*)-f(x_t^H)
\) and \(
s_T^H:=f(x^*)-\max_{1\le t\le T}f(x_t^H).
\)
Assume that the event
$\mathcal E_{\mathrm{conc}}
:=
\{w_{m^*}^{(t)}\ge 1-\epsilon_t \text{ for every } t\in I_T\}$
holds with probability at least $1-\delta_{\mathrm{conc}}$, and that
there exists $B_{\mathrm{EI}}<\infty$ such that
$\mathrm{EI}_{t,m}(x)\le B_{\mathrm{EI}}$ for all
$t\in I_T$, $m\in\{1,\dots,M\}$, and $x\in\mathcal X$.
Then, with probability at least $1-\delta_{\mathrm{Ng}}-\delta_{\mathrm{conc}}$,
\[
\sum_{t=T_0+1}^{T} r_t^H
\le
\mathcal R_{T,T_0}^{\mathrm{Ng}}(m^*)
+
2\sum_{t=T_0+1}^{T}\epsilon_tB_{\mathrm{EI}}
\quad \text{and} \quad
s_T^H
\le
\frac{\mathcal R_{T,T_0}^{\mathrm{Ng}}(m^*)}{n_T}
+
\frac{2}{n_T}
\sum_{t=T_0+1}^{T}\epsilon_tB_{\mathrm{EI}},
\]
where \(\mathcal R_{T,T_0}^{\mathrm{Ng}}(m^*)\) denotes the Nguyen et al. cumulative regret term applied to expert \(m^*\) over the post-burn-in horizon \(n_T\), and is given by
\[
\mathcal R_{T,T_0}^{\mathrm{Ng}}(m^*)
:=
\sqrt{
\frac{2n_T\gamma_T^{m^*}}
{\log(1+\sigma_n^{-2})}
}
\left[
\sqrt{3(\beta_T^{m^*}+1+C_\kappa)}
+
\sqrt{\beta_T^{m^*}}
\right].
\]
\end{theorem}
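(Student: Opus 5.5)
The plan is to work on the intersection $\mathcal E := \mathcal E_{\mathrm{Ng}}\cap\mathcal E_{\mathrm{conc}}$, where $\mathcal E_{\mathrm{Ng}}$ is the GP confidence event of Assumption~\ref{ass:nguyen}(iii) for expert $m^*$. By a union bound, $\mathcal E$ has probability at least $1-\delta_{\mathrm{Ng}}-\delta_{\mathrm{conc}}$, and all remaining steps will be deterministic on $\mathcal E$.

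\emph{Step 1 (acquisition closeness).} Fix $t\in I_T$. On $\mathcal E_{\mathrm{conc}}$ we have $w_{m^*}^{(t)}\ge 1-\epsilon_t$, and $\mathrm{EI}_{t,m}\le B_{\mathrm{EI}}$ holds uniformly. Lemma~\ref{lemma:ei-closeness} then gives
\[
\sup_x|\mathrm{EI}_{t,\mathrm{MoG}}(x)-\mathrm{EI}_{t,m^*}(x)|\le \epsilon_t B_{\mathrm{EI}}.
\]
Proposition~\ref{thm:acq-argmax} with $\delta=\epsilon_tB_{\mathrm{EI}}$ converts this into
\[
\mathrm{EI}_{t,m^*}(x_t^H)\ge \max_x \mathrm{EI}_{t,m^*}(x)-2\epsilon_tB_{\mathrm{EI}}.
\]
So at every post-burn-in round, $x_t^H$ is a $2\epsilon_tB_{\mathrm{EI}}$-approximate maximizer of expert $m^*$'s EI.

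\emph{Step 2 (per-round regret with approximate maximizer).} Next I will revisit the proof of Nguyen et al.'s Theorem~4 (Theorem~\ref{thm:ng-thm4}) and check that it survives approximate maximization. That argument bounds $r_t$ by relating it to $\mathrm{EI}_{t,m^*}(x_t)$ and $\sigma_{t-1}(x_t)$ under the confidence event. The one place exact maximization enters is the lower bound $\mathrm{EI}_{t,m^*}(x_t)\ge \mathrm{EI}_{t,m^*}(x^*)$, which is used to control $f(x^*)-y^{\max}$. Replacing it with the bound from Step~1 introduces an additive $2\epsilon_tB_{\mathrm{EI}}$, giving a per-round inequality of the form
\[
r_t^H\le(\text{Nguyen per-round term at }x_t^H)+2\epsilon_tB_{\mathrm{EI}}.
\]
Summing over $t\in I_T$, the Nguyen per-round terms are combined as in the original proof: Cauchy--Schwarz over $n_T$ rounds together with the information-gain bound $\sum\sigma_{t-1}^2(x_t)\le 2\gamma_T^{m^*}/\log(1+\sigma_n^{-2})$, with $\beta_T^{m^*}$ monotone in $t$. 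This yields $\mathcal R^{\mathrm{Ng}}_{T,T_0}(m^*)+2\sum_t\epsilon_tB_{\mathrm{EI}}$. The information-gain bound holds for any query sequence, so it applies to the HACK queries, and $\gamma$ over a subsequence is at most $\gamma_T^{m^*}$.

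\emph{Step 3 (simple regret).} The bound on $s_T^H$ is the standard average argument:
\[
s_T^H\le \min_{t\in I_T} r_t^H\le \frac{1}{n_T}\sum_{t\in I_T} r_t^H,
\]
which gives the second inequality directly.

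\emph{Main obstacle.} The delicate point is Step~2. Nguyen's analysis is written for the exact EI maximizer, and its stopping condition (iv) is stated at the selected point. I would verify that (iv) is assumed for the HACK query sequence itself, which is how I read ``rounds under consideration.'' I would also locate the single inequality that uses optimality of $x_t$, and confirm that the slack enters additively rather than being multiplied by a $\sqrt{\beta}$ factor. If it does get scaled, I would instead bound $f(x^*)-y^{\max}$ using $\mathrm{EI}_{t,m^*}(x^*)\le \mathrm{EI}_{t,m^*}(x_t^H)+2\epsilon_tB_{\mathrm{EI}}$ directly in the step where $\tau(\cdot)$ is inverted, which keeps the slack additive.
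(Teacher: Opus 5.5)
Your proposal is correct and follows essentially the same route as the paper. You work on the intersection of the concentration and GP-confidence events, and use Lemma~\ref{lemma:ei-closeness} with Proposition~\ref{thm:acq-argmax} to make $x_t^H$ a $2\epsilon_tB_{\mathrm{EI}}$-approximate maximizer of $\mathrm{EI}_{t,m^*}$. You then feed this into Nguyen et al.'s argument on $I_T$ and average to get the simple regret. The paper settles your ``main obstacle'' as your fallback suggests: it inserts the slack through Nguyen et al.'s Lemma~6 bound $I_t(x^*)\le \mathrm{EI}_{t,m^*}(x^*)+\sqrt{\beta_t^{m^*}}\sigma_{t-1,m^*}(x^*)$, so the slack enters additively, and then simply invokes the remainder of Nguyen et al.'s argument.
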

\begin{proof}
    We work on the event $\mathcal E_{\mathrm{conc}}$ and on the GP confidence event
from assumption \ref{ass:nguyen}; by a union bound, their
intersection has probability at least
$1-\delta_{\mathrm{conc}}-\delta_{\mathrm{Ng}}$. We analyze only the post-burn-in
interval $I_T=\{T_0+1,\dots,T\}$. This is sufficient for simple regret because
\[
s_T^H
=
f(x^*)-\max_{1\le t\le T}f(x_t^H)
\le
f(x^*)-\max_{t\in I_T}f(x_t^H).
\]
Thus, ignoring the first $T_0$ rounds can only make the upper bound more
conservative. Fix $t\in I_T$. Since $w_{m^*}^{(t)}\ge 1-\epsilon_t$, Lemma~\ref{lemma:ei-closeness}
implies
\(
\sup_{x\in\mathcal X}
\left|
\mathrm{EI}_{t,\mathrm{MoG}}(x)
-
\mathrm{EI}_{t,m^*}(x)
\right|
\le
\epsilon_tB_{\mathrm{EI}}.
\)
Since $x_t^H\in\arg\max_{x\in\mathcal X}\mathrm{EI}_{t,\mathrm{MoG}}(x)$,
Proposition~\ref{thm:acq-argmax} gives
\(
\mathrm{EI}_{t,m^*}(x_t^H)
\ge
\max_{x\in\mathcal X}\mathrm{EI}_{t,m^*}(x)
-
2\epsilon_tB_{\mathrm{EI}}.
\)
Therefore, in particular,
\begin{equation}
\mathrm{EI}_{t,m^*}(x^*)
\le
\mathrm{EI}_{t,m^*}(x_t^H)
+
2\epsilon_tB_{\mathrm{EI}}.
\label{eq:hack-ei-approx}
\end{equation}

\noindent On the GP confidence event in Assumption~\ref{ass:nguyen},
Nguyen et al.~\cite[Lemma~6]{nguyen2017regret} implies
\[I_t(x^*)\le\mathrm{EI}_{t,m^*}(x^*)+\sqrt{\beta_t^{m^*}}\sigma_{t-1,m^*}(x^*).\]
Substituting Equation \ref{eq:hack-ei-approx} into the previous confidence bound yields
\[
I_t(x^*)
\le
\mathrm{EI}_{t,m^*}(x_t^H)
+
\sqrt{\beta_t^{m^*}}\sigma_{t-1,m^*}(x^*)
+
2\epsilon_tB_{\mathrm{EI}}.
\]
Applying the remainder of the EI regret argument of
Nguyen et al.~\cite[Theorem~4]{nguyen2017regret} on the post-burn-in rounds
$I_T$, using the stopping condition in Assumption~\ref{ass:nguyen}, yields
\[
\sum_{t=T_0+1}^{T} r_t^H
\le
\mathcal R_{T,T_0}^{\mathrm{Ng}}(m^*)
+
2B_{\mathrm{EI}}
\sum_{t=T_0+1}^{T}\epsilon_t.
\]
Finally,
\[
s_T^H
\le
f(x^*)-\max_{t\in I_T}f(x_t^H)
=
\min_{t\in I_T} r_t^H
\le
\frac{1}{n_T}\sum_{t=T_0+1}^{T}r_t^H.
\]
Substituting the cumulative-regret bound gives
\[
s_T^H
\le
\frac{\mathcal R_{T,T_0}^{\mathrm{Ng}}(m^*)}{n_T}
+
\frac{2B_{\mathrm{EI}}}{n_T}
\sum_{t=T_0+1}^{T}\epsilon_t.
\qedhere
\]
\end{proof}

Theorem \ref{thm:hack-ei-regret-transfer} cleanly transfers the regret statement of Nguyen et al. \cite{nguyen2017regret} with the inclusion of an additional error term. The second term is the cost of not using \(m^*\) directly quantified using Lemma \ref{lemma:ei-closeness} and Proposition \ref{thm:acq-argmax}. Thus, the adaptive mixture inherits the fixed-expert EI guarantee whenever the cumulative residual mass \(\sum_{t>T_0}\epsilon_t\) is small. We now evaluate the order of this residual term under the AdaHedge dynamics and the loss gap assumptions from Lemma \ref{lemma:kernelconcexpectation}. It shows that, once the burn-in is large enough for the gap to be detected with high probability, the post-burn-in excess simple-regret term decays as \(O(1/n_T)\).

\begin{corollary}
\label{cor:error-rate}
Assume the conditional loss-gap condition of Lemma \ref{lemma:kernelconcexpectation} holds for \(m^*\) with gap \(\Delta > 0\).
Fix a horizon \(T\), let \(\delta_0 = \delta_{\mathrm{conc}}/n_T\), and define
\[
\epsilon_t
=
\frac{M-1}{M-1+\exp\!\left(\frac{\Delta}{2}\sqrt{(t-1)\ln M}\right)},
\]
for $t\in I_T$ and let $T_0$ satisfy
\[
T_0
\ge
\max\left\{
\frac{4a_M^2}{\ln M},\;
\frac{32}{\Delta^2}\ln\!\left(\frac{M-1}{\delta_0}\right)
\right\}.
\]
Then, for this fixed horizon, with $c=\tfrac{\Delta}{2}\sqrt{\ln M}$, the summation of $\epsilon_t$ post-burn-in is bounded by
\[
\sum_{t=T_0+1}^{T}\epsilon_t
\;\le\;
(M-1)\!\left[
1 + \frac{2}{c^2}\!\left(1+c\sqrt{T_0}\right)
\right]e^{-c\sqrt{T_0}}.
\]
Consequently, in
Theorem~\ref{thm:hack-ei-regret-transfer} the additional simple-regret term satisfies
\[
\frac{2B_{\mathrm{EI}}}{n_T}
\sum_{t=T_0+1}^{T}\epsilon_t
=
O\!\left(\frac{1}{n_T}\right).
\]
\end{corollary}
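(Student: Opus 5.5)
The corollary has two parts: a deterministic summation bound on the $\epsilon_t$, and an application of Theorem~\ref{thm:hack-ei-regret-transfer}. The probabilistic part only needs to certify the event $\mathcal E_{\mathrm{conc}}$ for this choice of $\epsilon_t$. I will go back to the proof of Lemma~\ref{lemma:kernelconcexpectation} and extract its pointwise form. On the concentration event at round $t$, the cumulative gap satisfies $L_m^{(t-1)}-L_{m^*}^{(t-1)}\ge (t-1)\Delta/2$. The AdaHedge learning rate satisfies $\eta^{(t)}\ge$ a constant times $\sqrt{\ln M/(t-1)}$, once $t-1\ge 4a_M^2/\ln M$. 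Together these give
\[
w_{m^*}^{(t)} \ge \frac{1}{1+(M-1)\exp\!\big(-\tfrac{\Delta}{2}\sqrt{(t-1)\ln M}\big)} = 1-\epsilon_t .
\]
This is exactly the formula for $\epsilon_t$ in the statement. The two conditions on $T_0$ guarantee that every $t\in I_T$ is past the learning-rate burn-in and past the martingale-concentration threshold with $\delta_0$. A union bound over the $n_T$ rounds, with $\delta_0=\delta_{\mathrm{conc}}/n_T$, then gives $\mathbb P(\mathcal E_{\mathrm{conc}})\ge 1-\delta_{\mathrm{conc}}$. This matches the hypothesis of Theorem~\ref{thm:hack-ei-regret-transfer}.

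For the sum, I will first use the crude bound $\epsilon_t\le (M-1)e^{-c\sqrt{t-1}}$ with $c=\tfrac{\Delta}{2}\sqrt{\ln M}$. Reindexing with $s=t-1$ gives
\[
\sum_{t=T_0+1}^{T}\epsilon_t\le (M-1)\sum_{s=T_0}^{\infty}e^{-c\sqrt{s}}.
\]
The summand is decreasing in $s$. So I will separate the first term, $e^{-c\sqrt{T_0}}$, and bound the rest by $\int_{T_0}^{\infty}e^{-c\sqrt{u}}\,du$. Substituting $v=\sqrt u$ turns this into $\int_{\sqrt{T_0}}^\infty 2v e^{-cv}\,dv$. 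Integrating by parts evaluates it exactly as $\frac{2}{c^2}(1+c\sqrt{T_0})e^{-c\sqrt{T_0}}$. Adding the separated first term gives the stated bound
\[
(M-1)\Big[1+\tfrac{2}{c^2}\big(1+c\sqrt{T_0}\big)\Big]e^{-c\sqrt{T_0}} .
\]
This bound is independent of $T$ for a fixed $T_0$. Multiplying by $2B_{\mathrm{EI}}/n_T$ therefore gives the $O(1/n_T)$ claim, with a constant depending only on $M,\Delta,T_0,B_{\mathrm{EI}}$.

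The main obstacle is the first step. Lemma~\ref{lemma:kernelconcexpectation} is stated as a threshold on $T$ for a single fixed $\varepsilon$. I need the uniform-in-$t$ pointwise version with the explicit rate $\exp(-\tfrac{\Delta}{2}\sqrt{(t-1)\ln M})$. To get it, I will invert the middle term of the lemma's bound: solve $\tfrac{4}{\Delta^2\ln M}\ln^2\!\big(\tfrac{(M-1)(1-\varepsilon)}{\varepsilon}\big)=t-1$ for $\varepsilon$. This gives $(1-\varepsilon)/\varepsilon=\exp(\tfrac{\Delta}{2}\sqrt{(t-1)\ln M})/(M-1)$, which is precisely $\varepsilon=\epsilon_t$. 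I also need the high-probability event at each round to carry failure probability $\delta_0$. The third term of the lemma, $\tfrac{32}{\Delta^2}\ln\!\big(\tfrac{M-1}{\delta_0}\big)\le T_0\le t-1$, ensures this. The union bound then closes the argument. A minor point: the burn-in bound is stated for $T_0$ rather than $T_0-1$, so I will check the index shift; the $t-1\ge T_0$ convention makes it work.
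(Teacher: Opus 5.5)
Your proposal is correct and follows essentially the same route as the paper: apply Lemma~\ref{lemma:kernelconcexpectation} at each terminal time $t\in I_T$ with error level $\epsilon_t$, chosen so that the middle threshold equals exactly $t-1$, and union-bound over $I_T$ with $\delta_0=\delta_{\mathrm{conc}}/n_T$; then use $\epsilon_t\le (M-1)e^{-c\sqrt{t-1}}$ and bound the sum by $e^{-c\sqrt{T_0}}+\int_{T_0}^{\infty}e^{-c\sqrt{x}}\,dx$. One small refinement for the $O(1/n_T)$ step: $\delta_0$ forces $T_0$ to grow like $\ln n_T$, so rather than treating $T_0$ as fixed, note that the summation bound is decreasing in $T_0$ and hence at most $(M-1)(1+2/c^2)$, a constant independent of $T$ and $T_0$.
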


Corollary~\ref{cor:error-rate} shows that, under a persistent loss gap, the HACK-specific error term in Theorem~\ref{thm:hack-ei-regret-transfer} contributes only $O(1/n_T)$ to the post-burn-in simple regret. Therefore,
\[
s_T^H
\le
\frac{\mathcal R_{T,T_0}^{\mathrm{Ng}}(m^*)}{n_T}
+
O\!\left(\frac{1}{n_T}\right).
\]
Thus, once the weights concentrate on a clearly favored expert, the dominant term is the EI regret term inherited from Nguyen et al.~\cite{nguyen2017regret}, averaged over the post-burn-in horizon. The size of this inherited term is governed by the maximum information gain $\gamma_T^{m^*}$. For the squared exponential kernel in $d$ dimensions, Nguyen et al.~\cite{nguyen2017regret} use $\gamma_T = O((\log T)^{d+1})$ and obtain the cumulative regret order
$
R_T
=
O\!\left(
\sqrt{T(\log T)^{d+4}}
\right).
$
Consequently, when the $m^*$ kernel is SE, the corresponding post-burn-in contribution in Theorem~\ref{thm:hack-ei-regret-transfer} is
\[
\frac{\mathcal R_{T,T_0}^{\mathrm{Ng}}(m^*)}{n_T}
=
O\!\left(
\sqrt{\frac{(\log n_T)^{d+4}}{n_T}}
\right),
\]
replacing $T$ by the post-burn-in horizon $n_T$.

Overall, the BO guarantee is most informative in the separated-expert regime: one kernel has a persistent loss advantage, AdaHedge concentrates on it, and the remaining HACK-specific term vanishes as $O(1/n_T)$.

\subsection{Experiments}

\label{sec:boexp}
We evaluate on (i) synthetic functions from the Virtual Library of Simulation Experiments~\cite {simulationlib} and (ii) the 3D RobotPushing simulator benchmark~\cite {airbo}. Log simple regret, $\ln\!\big(f(x^*)-\max_{t\le T} y_t\big)$, is used as the performance metric, capturing how close the current observations are to the function's global maxima. Across all tasks, we run BO for $T=100$ iterations with EI, initializing with 5 random points, and report final log simple regret averaged over 20 seeds. All ensemble/compositional methods use the same kernel set \[\mathcal{K}=\{\textsc{SE},\textsc{Matern-}5/2,\textsc{Matern-}3/2,\textsc{RQ},\textsc{LIN},\textsc{PER}\}\] and we compare against common choices for fixed-kernel GP-BO (SE, Matérn-$5/2$), adaptive kernel-selection heuristics~\cite {experimentalAdaptive}, EGP~\cite {lu2022surrogatemodelingbayesianoptimization}, and CAKE~\cite {suwandi2025adaptivekerneldesignbayesian}.  \textsc{HACK-MoG} performs best overall, achieving the lowest average rank and most wins on the synthetic suite and the best performance on RobotPushing (details in ~\ref{app:robotpushing}); see Table~\ref{tab:bo_synthetic} and Figure~\ref{fig:BO_performance}. Note that the wider confidence intervals in the higher-dimensional BO problems are expected, since the larger search spaces permit greater variation in the optimization trajectories across runs.
We reiterate that the intended goal of HACK is not to provide a guaranteed improvement in performance but rather to prevent the undesirable case of using a misspecified kernel for BO, or any other sequential decision-making task.

\begin{figure*}[h]
    \centering
    \includegraphics[width=1.0\linewidth]{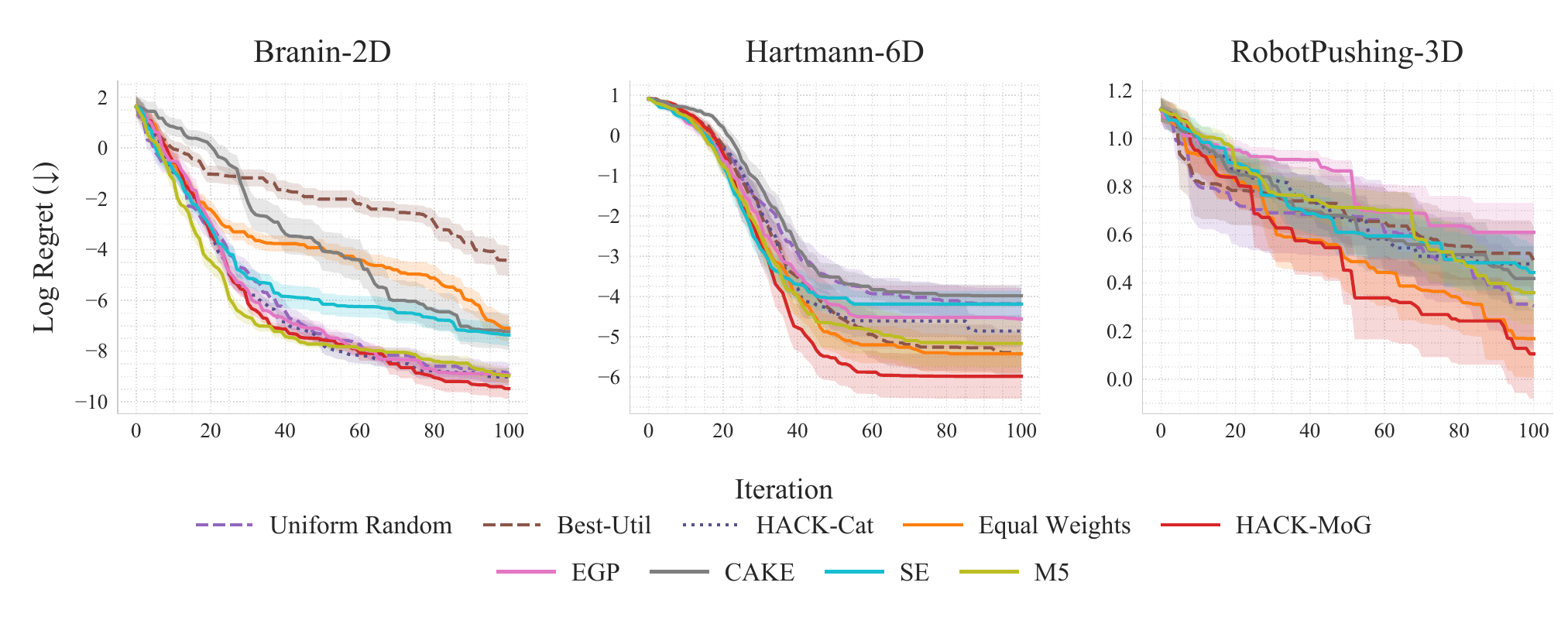}
    \caption{Log regret for the two synthetic (Branin-2D and Hartmann-6D) and the robot pushing task described in Section \ref{sec:boexp}. The lines and shaded regions represent the mean log regret and corresponding 95\% confidence interval for each method across 20 seeds. \textsc{HACK-MoG} is competitive across the three displayed tasks and achieves the best overall average rank in Table~\ref{tab:bo_synthetic}.}\label{fig:BO_performance}
\end{figure*}

\label{sec:unknown-smooth}
\subsubsection*{Unknown smoothness and adaptive weighting}
We next isolate a common source of kernel misspecification: uncertainty about the smoothness of the objective. 
For this diagnostic, we run BO on Bukin-2D using the smoothness-ordered dictionary
\[
\mathcal{K}_{\mathrm{smooth}}
=
\{\textsc{Matern-}1/2,\textsc{Matern-}3/2,\textsc{Matern-}5/2,\textsc{SE}\}.
\]
This removes other kernel families from the comparison and focuses on whether HACK can adapt among rough and smooth priors during the BO run.

Figure~\ref{fig:BO_smooth} shows that Matérn-$1/2$ is the strongest fixed kernel among this dictionary on Bukin-2D,  a function that has sharp peaks. 
Starting from uniform weights, \textsc{HACK-MoG} rapidly places most of its mass on Matérn-$1/2$, and its regret curve follows the same trend as the best fixed kernel after the early iterations. 
This example illustrates an intended use case of HACK: when the candidate kernels encode different plausible smoothness assumptions, the online loss can shift the acquisition toward the kernel that is more useful for the observed BO trajectory. 

\begin{figure*}
    \centering
    \includegraphics[width=1.0\linewidth]{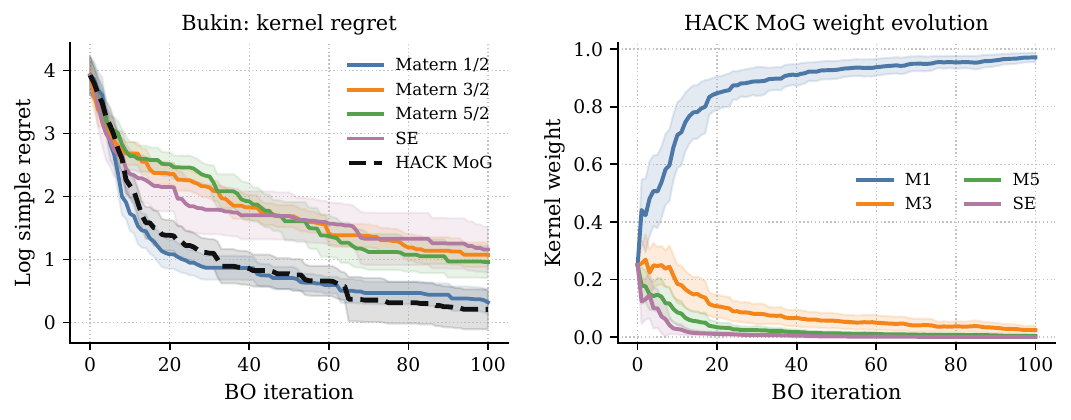}
        \caption{Unknown-smoothness diagnostic on Bukin-2D. Left: log simple regret for fixed smoothness-ordered kernels and \textsc{HACK-MoG}. The lines and shaded regions represent the mean log regret and corresponding 95\% confidence interval for each kernel/method across 20 seeds. Right: HACK kernel weights over BO iterations. The lines and shaded regions represent the mean AdaHedge weight and corresponding 95\% confidence interval for each kernel across 20 seeds. In this diagnostic, Matérn-$1/2$ performs best among the fixed kernels, and HACK rapidly shifts most of its mass to that kernel.}\label{fig:BO_smooth}
\end{figure*}

\label{sec:similar-kernels}
\subsubsection*{Similarly Performing Kernels}
When the candidate kernels induce similar BO behavior, aggressive concentration on a single expert is
neither expected nor necessarily desirable. We illustrate this regime on the Beale-2D benchmark using a
restricted kernel dictionary containing only the squared exponential and Matérn-5/2 kernels. In contrast to
the unknown-smoothness diagnostic in Figure~\ref{fig:BO_smooth}, these two kernels achieve
comparable simple-regret trajectories on this task. HACK-MoG correspondingly does not exhibit a sharp
collapse of the weight distribution onto one kernel. Instead, the AdaHedge weights remain distributed across
the two experts, reflecting the absence of a persistent loss gap large enough to justify decisive kernel
selection.

This behavior is important for interpreting HACK as an adaptive kernel-mixing method rather than a hard
model-selection procedure. When one kernel is clearly better, as in the Bukin smoothness experiment, the
weights rapidly concentrate on that kernel. When the experts are nearly tied, as in Figure~\ref{fig:beale_similar},
HACK preserves a mixture and obtains performance comparable to the fixed kernels. This is consistent with
the theory in Section~5.2: the concentration and regret-transfer guarantees are strongest under a separated
loss-gap condition, while near-tied experts naturally lead to slower concentration and a smaller practical
distinction between selecting and mixing kernels.

\begin{figure*}
    \centering
    \includegraphics[width=1.0\linewidth]{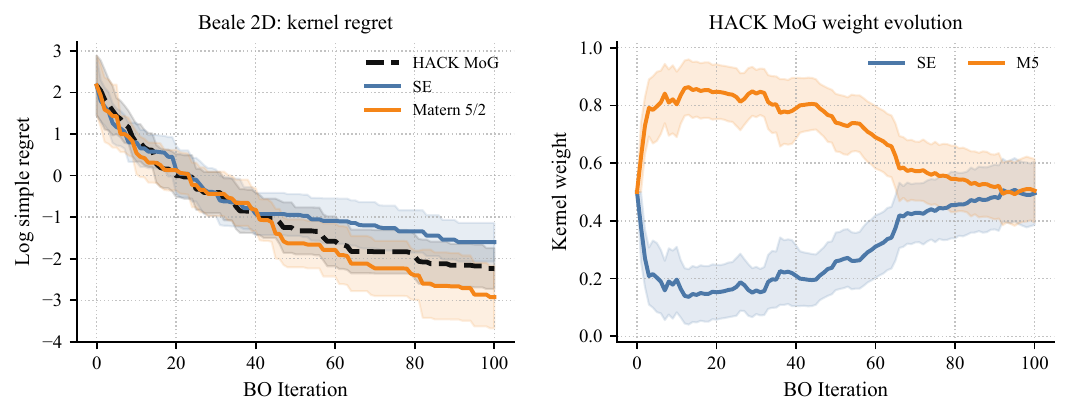}
        \caption{Similarly performing kernels on Beale-2D. Left: log simple regret for HACK-MoG compared with fixed SE and Matérn-5/2 kernels over 100 BO iterations. The lines and shaded regions represent the mean log regret and corresponding 95\% confidence interval for each method across 20 seeds. The two fixed kernels perform similarly, and HACK-MoG tracks their regret behavior without a large penalty. Right: HACK-MoG kernel weights over time. The lines and shaded regions represent the mean AdaHedge weight and corresponding 95\% confidence interval for each kernel across 20 seeds. Unlike the sharp concentration observed in the Bukin smoothness diagnostic, the weights remain shared between SE and Matérn-5/2, indicating that AdaHedge does not force premature selection when the experts have comparable task performance.}
\label{fig:beale_similar}
\end{figure*}

\begin{table*}[t]
  \caption{Final log simple regret after $T=100$ BO iterations (Lower is better) with EI averaged over 20 seeds. Mean values are shown with standard deviation in parentheses. Best performing method is in \textbf{bold}, second best is \underline{underlined}.}
\label{tab:bo_synthetic}
  \centering
  \scriptsize

  \rowcolors{3}{gray!6}{white}
  \setlength{\tabcolsep}{1.4pt}         

\begin{tabular}{l cc cc c c c cc} 
    \toprule
    \multirow{2}{*}{Test Function}
      & \multicolumn{2}{c}{Single Kernel}
      & \multicolumn{2}{c}{Adaptive}
      & \multicolumn{1}{c}{Equal} 
      & \multirow{2}{*}{EGP}
      & \multirow{2}{*}{CAKE}
      & \multicolumn{2}{c}{HACK} \\
    \cmidrule(lr){2-3} \cmidrule(lr){4-5} \cmidrule(lr){9-10}
      \rowcolor{white}
      & SE & Mat-5/2 & Random & Best Util & \multicolumn{1}{c}{Weights} 
      & & & Cat & MoG \\
    \midrule
    Branin-2D           & -7.36 (1.94) & -8.97 (1.37) & -8.88 (1.80) & -4.47 (2.70) & -7.10 (2.32) & -8.93 (1.57) & -7.23 (3.24) & \underline{-9.02 (1.54)} & \textbf{-9.48 (1.82)} \\
    Bukin-2D            &  1.52 (0.77) & \underline{ 1.06 (0.60)} &  1.45 (0.52) &  1.21 (0.83) &  1.27 (0.57) &  1.21 (0.60) &  1.56 (0.54) &  1.07 (0.81) & \textbf{ 0.88 (0.76)} \\
    Powell-4D           &  2.02 (0.98) & \textbf{ 0.89 (0.95)} &  2.90 (0.78) &  2.90 (1.32) &  2.41 (0.77) &  1.47 (1.32) &  3.26 (1.21) &  2.24 (1.13) & \underline{ 0.95 (1.24)} \\
    Ackley-5D           &  1.76 (0.32) & \underline{ 1.65 (0.24)} &  1.70 (0.31) & \textbf{ 1.44 (0.25)} &  1.66 (0.35) &  2.11 (0.61) &  1.87 (0.24) &  1.77 (0.34) &  1.69 (0.25) \\
    Hartmann-6D         & -4.19 (1.49) & -5.17 (2.72) & -4.19 (1.79) & -5.43 (2.48) & \underline{-5.43 (2.17)} & -4.57 (3.55) & -3.99 (1.19) & -4.86 (1.78) & \textbf{-5.99 (2.54)} \\
    Rastrigin-8D        &  3.99 (0.26) &  3.95 (0.17) &  3.97 (0.23) &  3.91 (0.21) & \textbf{ 3.85 (0.20)} &  3.97 (0.37) &  3.98 (0.25) &  4.25 (0.17) & \underline{ 3.90 (0.24)} \\
    Rosenbrock-8D       &  7.65 (0.71) &  7.52 (0.64) &  7.66 (0.71) &  8.87 (0.82) &  7.70 (0.66) & \textbf{ 7.08 (0.96)} &  8.86 (1.23) &  7.58 (0.80) & \underline{ 7.34 (0.70)} \\
    DixonPrice-8D       &  6.48 (0.91) &  5.49 (0.62) &  5.46 (0.86) &  5.43 (0.93) &  5.88 (0.74) & \underline{ 5.24 (1.06)} &  5.66 (1.05) &  5.46 (0.94) & \textbf{ 5.22 (1.03)} \\
    \midrule
    \textbf{Total Wins} & 0 & \underline{1} & 0 & \underline{1} & \underline{1} & \underline{1} & 0 & 0 & \textbf{4} \\
    \textbf{Avg. Rank}  & 6.62 & \underline{3.12} & 6.12 & 5.00 & 5.12 & 4.38 & 8.00 & 4.88 & \textbf{1.75} \\
    \bottomrule
  \end{tabular}
\end{table*}

\section{Level Set Estimation}
We next specialize the expert-advice framework to Level Set Estimation (LSE). In this setting, the objective is to identify the superlevel set of the unknown function relative to a fixed threshold, which naturally induces binary outcomes at queried points. We define task-specific losses that reward accurate threshold classification and analyze how expert weight concentration affects the resulting LSE acquisition function.
\subsection{Loss and Acquisition function}
For the expert loss, LSE naturally induces a binary label at the queried point:
$z_t:=\mathbf{1}\{y_t\ge h\}$. Define the predicted above-threshold probability
\[\pi^{\textsc{lse}}_{t,m}(x_t):=\mathbb{P}(Y_{t,m}\ge h\mid x_t)
=\Phi\!\big((\mu_{t,m}(x_t)-h)/\sigma_{t,m}(x_t)\big).\] We then use the Brier score
$\ell^{(t)}_{m,\textsc{brier}}=\Big(z_t-\pi^{\textsc{lse}}_{t,m}(x_t)\Big)^2,$
a proper scoring rule for probabilistic~binary predictions, naturally bounded in $[0,1]$. As in BO, we combine the task-specific loss with NLL (Equation~\ref{eq:nll_loss}) to balance threshold classification accuracy with overall predictive calibration: 
\(
\ell^{(t)}_m=\alpha\,\ell^{(t)}_{m,\textsc{brier}}+(1-\alpha)\,\ell^{(t)}_{m,\textsc{nll}}.
\)
We use EI-LSE \cite {ravishankarimprovement} as the primary acquisition function for LSE.
For the MoG predictive distribution, EI-LSE retains the same linearity-of-expectation
advantage as EI:
$\mathrm{EI\text{-}LSE}_{t,\textsc{MoG}}(x)
=\sum_{m=1}^M w_m^{(t)}\,\mathrm{EI\text{-}LSE}_{t,m}(x)\;+\;\beta\,\bar\sigma_t^2(x),$
where $\bar\sigma_t^2(x)$ is the MoG predictive variance and $\beta\ge 0$ promotes exploration. The variance-based term is included to encourage exploration near the decision boundary.

\subsection{Theoretical Analysis}

The following result specializes the generic acquisition stability argument to EI-LSE, accounting for the additional variance-based exploration term. Proofs can be found in \ref{app:lse-proofs}.
\begin{ngassumption}
\label{ass:bounded-diagonal}
Assume $\kappa^2 := \max_{m}\sup_{x\in\mathcal X} k_m(x,x) < \infty$.
\end{ngassumption}
\noindent This implies that $\sigma^2_{t,m}(x)\le k_m(x,x)\le \kappa^2$ for all $t \in \{1,\dots,T\}$, $m\in\{1,\dots,M\}$, and $x\in \mathcal X$.
\begin{lemma}
\label{thm:lse-acq-closeness}
Under Assumption~\ref{ass:bounded-diagonal}, suppose $\mathrm{EI\text{-}LSE}_{t,m}(x)\le B_{\mathrm{LSE}}$
for all $m\in\{1,\dots,M\}$ and $x\in \mathcal X$. If $w_{m^*}^\iter{t}\ge 1-\varepsilon$, then
\begin{equation*}
\sup_x \big| \mathrm{EI\text{-}LSE}_{t,\mathrm{MoG}}(x) - \mathrm{EI\text{-}LSE}_{t,m^*}(x) \big| 
\le \varepsilon B_{\mathrm{LSE}} + \beta\Big(\varepsilon \kappa^2 + \sup_x \alpha_{t,\mathrm{disag}}(x)\Big).
\end{equation*}
\end{lemma}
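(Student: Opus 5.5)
The plan is to split both acquisition functions into an exploitation part and an exploration part, and bound each difference separately. I take the single-expert acquisition to be $\mathrm{EI\text{-}LSE}_{t,m^*}(x)=\mathrm{EI\text{-}LSE}^{0}_{t,m^*}(x)+\beta\,\sigma^2_{t,m^*}(x)$, with the same $\beta$ as in the MoG version. Here $\mathrm{EI\text{-}LSE}^{0}_{t,m}$ denotes the expectation part. The quantity bounded by $B_{\mathrm{LSE}}$ in the hypothesis is read as this expectation part; if $B_{\mathrm{LSE}}$ were instead meant to bound the full acquisition, the argument below would need adjusting. For fixed $x$, write
\[
\mathrm{EI\text{-}LSE}_{t,\mathrm{MoG}}(x)-\mathrm{EI\text{-}LSE}_{t,m^*}(x)
=\Big(\sum_{m} w_m^{(t)}\mathrm{EI\text{-}LSE}^{0}_{t,m}(x)-\mathrm{EI\text{-}LSE}^{0}_{t,m^*}(x)\Big)
+\beta\big(\bar\sigma_t^2(x)-\sigma^2_{t,m^*}(x)\big).
\]
Then apply the triangle inequality.

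The first bracket is handled exactly as in Lemma~\ref{lemma:ei-closeness}. Since the weights sum to one, it equals $\sum_{m\neq m^*}w_m^{(t)}\big(\mathrm{EI\text{-}LSE}^0_{t,m}(x)-\mathrm{EI\text{-}LSE}^0_{t,m^*}(x)\big)$. Each summand lies in $[0,B_{\mathrm{LSE}}]$, because the EI-LSE expectation is of a nonnegative quantity, so each difference has absolute value at most $B_{\mathrm{LSE}}$. The bracket is therefore at most $(1-w_{m^*}^{(t)})B_{\mathrm{LSE}}\le\varepsilon B_{\mathrm{LSE}}$.

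For the exploration bracket, use Lemma~\ref{lemma:meanvar} to write
\[
\bar\sigma_t^2(x)=\sum_m w_m^{(t)}\sigma^2_{t,m}(x)+\alpha_{t,\mathrm{disag}}(x).
\]
This gives $\bar\sigma_t^2(x)-\sigma^2_{t,m^*}(x)=\sum_{m\neq m^*}w_m^{(t)}\big(\sigma^2_{t,m}(x)-\sigma^2_{t,m^*}(x)\big)+\alpha_{t,\mathrm{disag}}(x)$. By Assumption~\ref{ass:bounded-diagonal}, all posterior variances lie in $[0,\kappa^2]$, so each variance difference is at most $\kappa^2$ in absolute value. The within-kernel part is then bounded by $\varepsilon\kappa^2$. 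The disagreement term is nonnegative and is bounded by its supremum over $\mathcal X$.

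Multiplying the exploration bound by $\beta\ge0$, adding the two bounds, and taking the supremum over $x$ gives the claim. The computations are routine. The main point needing care is fixing the convention for $\mathrm{EI\text{-}LSE}_{t,m^*}$ and for $B_{\mathrm{LSE}}$ as above. I would also remark that the disagreement term is not controlled by $\varepsilon$ alone, because the predictive means may differ. It can be bounded further by $\varepsilon$ times the squared spread of the means, but the statement keeps it explicit, so no such bound is needed.
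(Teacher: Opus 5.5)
Your argument is correct and has the same skeleton as the paper's proof. The weighted EI-LSE part becomes $\sum_{m\neq m^*}w_m^{(t)}(\cdot)$ and is bounded by $\varepsilon B_{\mathrm{LSE}}$, and the exploration part is handled through Lemma~\ref{lemma:meanvar} and $\kappa^2$. Where you differ is exactly the convention you flagged, and your choice is the one under which the lemma is actually true.

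The paper's proof takes $\mathrm{EI\text{-}LSE}_{t,m^*}$ to be the bare expectation, consistent with the MoG formula $\sum_m w_m^{(t)}\mathrm{EI\text{-}LSE}_{t,m}(x)+\beta\bar\sigma_t^2(x)$. So its difference carries the full term $\beta\bar\sigma_t^2(x)$. It bounds this by $\beta\bigl((1-w_{m^*}^{(t)})\kappa^2+\sigma^2_{t,m^*}(x)+\alpha_{t,\mathrm{disag}}(x)\bigr)$ and then drops $\sigma^2_{t,m^*}(x)$ when combining the two bounds. Under that literal reading the inequality fails. If all experts coincide, then $\alpha_{t,\mathrm{disag}}\equiv 0$ and the difference is exactly $\beta\sigma^2_{t,m^*}(x)$, which does not shrink as $\varepsilon\to 0$.

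By comparing instead against $\mathrm{EI\text{-}LSE}^0_{t,m^*}(x)+\beta\sigma^2_{t,m^*}(x)$, you only need to control $\bar\sigma_t^2(x)-\sigma^2_{t,m^*}(x)$, which can have either sign. Your two-sided estimate $\bigl|\sum_{m\neq m^*}w_m^{(t)}(\sigma^2_{t,m}(x)-\sigma^2_{t,m^*}(x))\bigr|\le\varepsilon\kappa^2$, together with $\alpha_{t,\mathrm{disag}}\ge 0$, handles both signs. This is the same device the paper uses correctly in its proof of Lemma~\ref{lemma:bal-closeness}. So your proof establishes the stated bound under the only reading for which it holds, and in doing so it repairs the dropped term in the paper's derivation.

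Your closing remark is also right. Since $\alpha_{t,\mathrm{disag}}(x)\le\sum_{m\neq m^*}w_m^{(t)}(\mu_{t,m}(x)-\mu_{t,m^*}(x))^2$, the disagreement term is itself $O(\varepsilon)$ whenever the expert means are uniformly bounded.
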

The bound consists of two terms. The first  reflects expert weight concentration, as in the BO setting. The second one arises from the additional variance-based exploration component in EI-LSE and captures residual disagreement between experts. As weights concentrate and predictive means align, this term becomes small. Beyond acquisition-level stability, weight concentration also has a direct structural implication for the inferred level set. Intuitively, once the mixture-of-Gaussians behaves similarly to the best expert, the superlevel sets they induce can differ only near the decision boundary, where posterior uncertainty is inherently high. The following corollary makes this intuition precise by localizing the discrepancy between the MoG and best-expert level sets to a narrow boundary region.

\begin{corollary}
\label{cor:lse-boundary-band}
For $\tau\in(0,1)$, define the set $S_{t,m}(\tau):=\{x:\ \pi^{\textsc{lse}}_{t,m}(x)\ge \tau\}$ and define $S_{t,\mathrm{MoG}}(\tau)$ similarly. If $w_{m^*}^{(t)}\ge 1-\varepsilon$, then the symmetric difference of the two sets satisfies  
\(
S_{t,\mathrm{MoG}}(\tau)\triangle S_{t,m^*}(\tau)
\subseteq
\left\{
x\in\mathcal X:
\left|
\pi^{\textsc{lse}}_{t,m^*}(x)-\tau
\right|
\le\varepsilon
\right\}.\)
\end{corollary}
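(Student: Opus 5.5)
The plan mirrors the proof of Lemma~\ref{lemma:ei-closeness}: first show that the MoG above-threshold probability is uniformly within $\varepsilon$ of the best expert's, then turn this into a statement about the superlevel sets. For the MoG predictive distribution I would define $\pi^{\textsc{lse}}_{t,\mathrm{MoG}}(x) := \mathbb{P}(Y_t \ge h \mid x)$ with $Y_t\mid x\sim p_{t,\mathrm{MoG}}(\cdot\mid x)$. Conditioning on the kernel index $M_t\sim\mathrm{Cat}(w^{(t)})$, as in Section~\ref{sec:mog}, the law of total probability gives the exact linear identity
\[
\pi^{\textsc{lse}}_{t,\mathrm{MoG}}(x)=\sum_{m=1}^M w_m^{(t)}\,\pi^{\textsc{lse}}_{t,m}(x).
\]
This is the analogue of the linearity used for EI.

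Next, fix $x$ and subtract $\pi^{\textsc{lse}}_{t,m^*}(x)$. Using $\sum_m w_m^{(t)}=1$, this gives
\[
\pi^{\textsc{lse}}_{t,\mathrm{MoG}}(x)-\pi^{\textsc{lse}}_{t,m^*}(x)=\sum_{m\ne m^*} w_m^{(t)}\big(\pi^{\textsc{lse}}_{t,m}(x)-\pi^{\textsc{lse}}_{t,m^*}(x)\big).
\]
All these quantities are probabilities in $[0,1]$, so each difference has absolute value at most $1$. The triangle inequality then yields
\[
\big|\pi^{\textsc{lse}}_{t,\mathrm{MoG}}(x)-\pi^{\textsc{lse}}_{t,m^*}(x)\big|\le 1-w^{(t)}_{m^*}\le\varepsilon
\]
uniformly in $x$. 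This step is the same computation as in Lemma~\ref{lemma:ei-closeness}, with $B_{\mathrm{EI}}$ replaced by $1$.

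Finally, take $x$ in the symmetric difference $S_{t,\mathrm{MoG}}(\tau)\triangle S_{t,m^*}(\tau)$. Then exactly one of $\pi^{\textsc{lse}}_{t,\mathrm{MoG}}(x)$ and $\pi^{\textsc{lse}}_{t,m^*}(x)$ is at least $\tau$ and the other is below $\tau$. Hence $\tau$ lies in the closed interval between the two values. It follows that $|\pi^{\textsc{lse}}_{t,m^*}(x)-\tau|\le|\pi^{\textsc{lse}}_{t,\mathrm{MoG}}(x)-\pi^{\textsc{lse}}_{t,m^*}(x)|\le\varepsilon$, which places $x$ in the stated boundary band. I do not expect a real obstacle. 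The only point needing care is the first step, namely making explicit that $S_{t,\mathrm{MoG}}(\tau)$ is defined through the mixture tail probability, which is linear in the weights, and not through a Gaussian with mean $\bar\mu_t$ and variance $\bar\sigma_t^2$ from Lemma~\ref{lemma:meanvar}. Under that Gaussian reading the linearity identity would fail.
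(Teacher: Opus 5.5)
Your proof is correct and follows essentially the same route as the paper: the law of total probability makes $\pi^{\textsc{lse}}_{t,\mathrm{MoG}}$ linear in the weights, and the triangle inequality gives the uniform bound $1-w^{(t)}_{m^*}\le\varepsilon$. Your step that $\tau$ lies between the two probabilities is a compact version of the paper's two-case analysis of the symmetric difference, and your caveat about reading $S_{t,\mathrm{MoG}}(\tau)$ through the mixture tail probability (not a moment-matched Gaussian) matches how the paper defines $\pi^{\textsc{lse}}_{t,\mathrm{MoG}}$ in its proof.
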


\subsection{Experiments} \label{lse:experiments}
Our evaluations consist of two synthetic test functions, Himmelblau and GaussianModulatedSinusoid (GMS) (Details in \ref{app:exp_dets}), and the motorcycle dataset from \cite{silverman1985some}. The synthetic functions are evaluated on a $50 \times 50$ grid uniformly and motorcycle is evaluated on 133 uniform points from their bounds. We use the mean of the posterior model to classify all the points in the grid as above/below the threshold. For HACK,  $\bar{\mu}_t$ is used as the mean. EI-LSE is run for $T=50$ iterations and results are averaged over 30 seeds. 
Because LSE can be framed as predicting whether a function value exceeds a given threshold, the F1-score serves as a natural evaluation metric. We compare against 4 baselines: (i) EI-LSE with SE and Mat\'ern-5/2, (ii) RMILE \cite{zanette2018robust} with SE and Mat\'ern-5/2, (iii) Equally weighted MoG, (iv) Random Uniform selection. All ensemble methods use the same set of kernels $\mathcal{K}=\{\textsc{SE},\textsc{Matern-}5/2,\textsc{Matern-}3/2,\textsc{RQ},\textsc{LIN},\textsc{PER}\}$.

Uniform Random and Equal Weights exhibit significantly lower performance in the Himmelblau task, even compared to standard single kernels like \textsc{SE} and \textsc{Matern-}5/2. While most methods reach comparable F1 scores by iteration 50, HACK and RMILE effectively address model misspecification, particularly within the SE kernel on the GMS function. On the Motorcycle dataset, RMILE performs slightly worse than HACK, where SE kernel misspecification is again observed. Overall, we note that HACK is able to match the performance of standard kernels while remaining robust to kernel misspecification. RMILE provides a strong baseline as a non-kernel-based robust LSE method, yet HACK performs comparably or better across tasks. We note that the per-iteration runtime of RMILE is $O(|\mathcal{X}|^2)$ \cite{zanette2018robust}, compared to $O(|\mathcal{X}|)$ for standard acquisition functions such as 
EI-LSE used by HACK.

\begin{figure*}[t]
    \centering
    \includegraphics[width=1.0\linewidth]{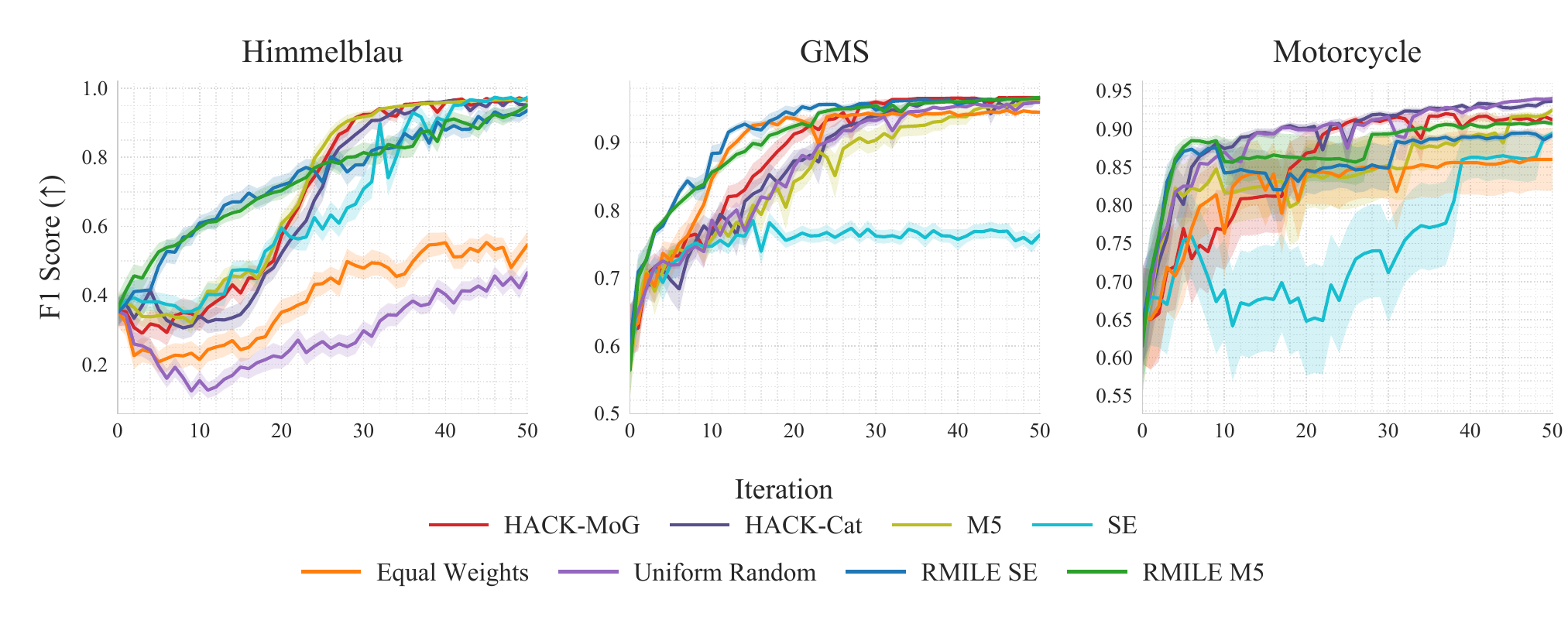}
    \caption{F1-scores for the 2 synthetic (Himmelblau and GMS) and 1 real-world data task (Motorcycle) described in \ref{lse:experiments}. The lines and shaded regions represent the mean F1 score and corresponding 95\% confidence interval for each method across 30 seeds. We see that HACK achieves similar to better performance compared to SE and Matern 5/2. We note the apparent misspecification of SE in the GMS and to a lesser extent in Motorcycle. Equal Weights and Uniform Random show the necessity of AdaHedge in rebalancing the weights of the kernels effectively.}
    \label{fig:LSE_performance}
\end{figure*}

\section{Bayesian Active Learning}
We finally specialize the framework to Bayesian Active Learning (BAL), 
where the goal is to sequentially select informative observations so as to improve the predictive model of the unknown function. In contrast to BO, which focuses on locating an optimum, or LSE, which focuses on identifying a thresholded region, BAL aims to reduce predictive uncertainty over the input space. Unlike BO and LSE, it does not naturally provide a binary task-progress event analogous to improvement or threshold crossing.
The acquisition itself is uncertainty reduction, so we use predictive variance directly and score experts primarily through predictive calibration. In this setting, the MoG variance decomposition is especially interpretable because the acquisition separates within-expert uncertainty from between-kernel disagreement.
\subsection{Loss and Acquisition Function}

In BAL for regression, a common acquisition strategy is to query points with high posterior uncertainty. Accordingly, we use predictive variance as the acquisition function in our experiments. For the MoG predictive distribution, this acquisition reduces exactly to the mixture variance $\bar{\sigma}^2_t(x)$, which decomposes into within-expert variance and between-expert disagreement as in Lemma~\ref{lemma:meanvar}. For expert weighting, we use the normalized NLL loss (Equation \ref{eq:nll_loss}). Unlike BO and LSE, BAL does not naturally provide a binary task-specific event for defining a Brier-style loss.

\subsection{Theoretical Analysis}
We now show that expert weight concentration implies uniform closeness of the BAL acquisition function under the MoG predictive distribution. Proofs can be found in \ref{app:bal-proofs}.
\begin{lemma}
\label{lemma:bal-closeness}
Assume Assumption~\ref{ass:bounded-diagonal}. If $w_{m^*}^{\iter{t}} \ge 1-\varepsilon$, then
\[
\sup_{x\in \mathcal{X}}
\big|\sigma^2_{t,\mathrm{MoG}}(x)-\sigma^2_{t,m^*}(x)\big|
\le
\varepsilon\kappa^2 + \sup_{x\in\mathcal X}\alpha_{t,\mathrm{disag}}(x).
\]
\end{lemma}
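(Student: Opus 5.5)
We would start from the decomposition in Lemma~\ref{lemma:meanvar}, which gives, for every $x\in\mathcal X$,
\[
\sigma^2_{t,\mathrm{MoG}}(x)=\sum_{m=1}^M w_m^{(t)}\sigma^2_{t,m}(x)+\alpha_{t,\mathrm{disag}}(x).
\]
We would then subtract $\sigma^2_{t,m^*}(x)$ from both sides. Writing $\sigma^2_{t,m^*}(x)=\sum_m w_m^{(t)}\sigma^2_{t,m^*}(x)$, as in the proof of Lemma~\ref{lemma:ei-closeness}, the difference becomes
\[
\sum_{m\neq m^*} w_m^{(t)}\bigl(\sigma^2_{t,m}(x)-\sigma^2_{t,m^*}(x)\bigr)+\alpha_{t,\mathrm{disag}}(x).
\]

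Next we would take absolute values and apply the triangle inequality. For the within-kernel term we use that posterior variances are nonnegative and, by Assumption~\ref{ass:bounded-diagonal}, bounded by $\kappa^2$. Hence each difference $\sigma^2_{t,m}(x)-\sigma^2_{t,m^*}(x)$ lies in $[-\kappa^2,\kappa^2]$, and its absolute value is at most $\kappa^2$. The within-kernel term is therefore bounded in absolute value by $\kappa^2\sum_{m\neq m^*}w_m^{(t)}=(1-w_{m^*}^{(t)})\kappa^2\le\varepsilon\kappa^2$.

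The disagreement term is a weighted sum of squares, so it is nonnegative and its absolute value is $\alpha_{t,\mathrm{disag}}(x)\le\sup_{x'}\alpha_{t,\mathrm{disag}}(x')$. Adding the two bounds and taking the supremum over $x$ gives the claim.

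The only delicate point concerns the variance bound if $\sigma^2_{t,m}$ is read as the predictive variance including noise. In that case the bound $\kappa^2$ could fail, but both variances carry the same additive $\sigma_n^2$, which cancels in the difference. So we would note that the differences are still controlled by $\kappa^2$, consistent with the remark following Assumption~\ref{ass:bounded-diagonal}. No further obstacle is expected. One could additionally sharpen the disagreement term to be $O(\varepsilon)$, but the statement does not require this.
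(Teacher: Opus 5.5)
Your proof is correct and is essentially the paper's argument. The paper also starts from the decomposition of Lemma~\ref{lemma:meanvar} and writes the within-kernel residual as $\sum_{m\neq m^*} w_m^{(t)}\sigma^2_{t,m}(x)-(1-w_{m^*}^{(t)})\sigma^2_{t,m^*}(x)$, which is algebraically identical to your $\sum_{m\neq m^*} w_m^{(t)}\bigl(\sigma^2_{t,m}(x)-\sigma^2_{t,m^*}(x)\bigr)$, bounds it by $(1-w_{m^*}^{(t)})\kappa^2\le\varepsilon\kappa^2$, and adds the nonnegative disagreement term before taking the supremum.
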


Lemma~\ref{lemma:bal-closeness} shows that once the expert weights concentrate, the predictive variance under the MoG closely tracks that of the best expert, up to residual disagreement across experts. In particular, when expert predictions align, BAL queries selected using the MoG acquisition behave similarly to those selected using the best kernel.
In Bayesian Active Learning, the variance-based acquisition admits a particularly transparent decomposition under the MoG predictive distribution. Expert weight concentration suppresses disagreement-driven uncertainty, causing the MoG acquisition to approximate that of the best expert while retaining robustness to kernel misspecification in earlier rounds.

\begin{figure}[H]
    \centering
    \includegraphics[width=1.0\linewidth]{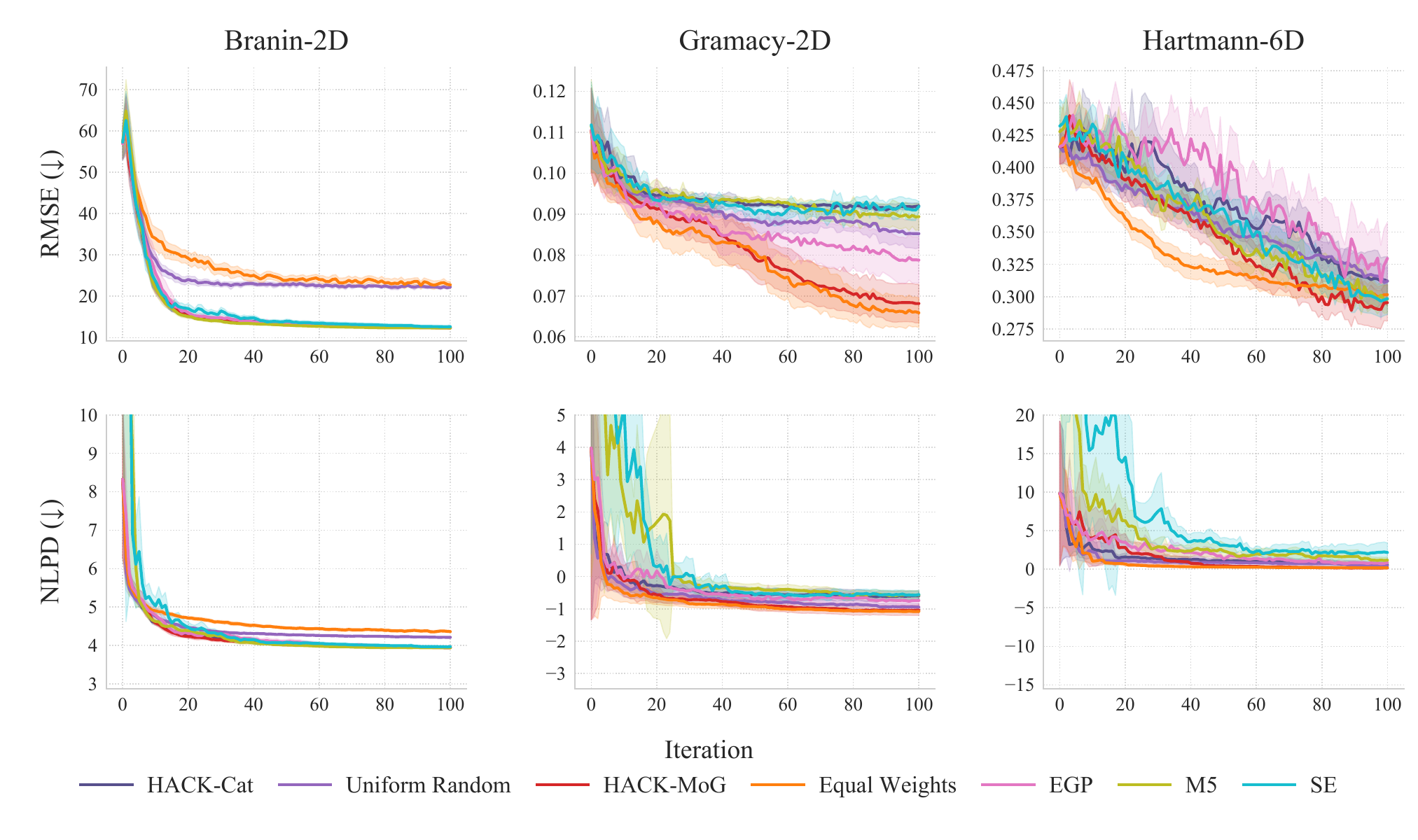}
    \caption{RMSE and NLPD for the 3 BAL synthetic test functions. The lines and shaded regions represent the mean RMSE (Top) and NLPD (Bottom) score and corresponding 95\% confidence interval for each method across 30 seeds.}\label{fig:BAL_performance}
\end{figure}

\subsection{Experiments}
We evaluate on three synthetic functions (Branin-2D, Gramacy-2D, and Hartmann-6D) with the same settings as \cite{riis2022bayesian}. Predictive performance is evaluated using Root Mean Squared Error (RMSE) and Negative Log Predictive Density (NLPD). RMSE measures the accuracy of the posterior predictive mean, while NLPD evaluates the probability assigned by the predictive distribution to the observed values and therefore also reflects the quality of its uncertainty estimates. Lower values are better for both metrics. Reporting both is useful in BAL because accurate mean predictions alone do not guarantee well-calibrated uncertainty, which directly influences where the model chooses to query next.

We run BAL for $T=100$ iterations and results are averaged over 30 seeds. All ensemble methods use the same set of kernels $\mathcal{K}=\{\textsc{SE},\textsc{Matern-}5/2,\textsc{Matern-}3/2,\textsc{RQ},\textsc{LIN},\textsc{PER}\}$. We compare against: (i) fixed kernels {SE} and Matérn-$5/2$; (ii) Uniform Random; (iii) Equally weighted MoG; and (iv) EGP with Variance of GP Mixtures from \cite{EnsemblesBAL}. 
HACK performs better than EGP on Gramacy-2D and Hartmann-6D, while remaining competitive on Branin-2D. Overall, the results support the robustness of adaptive kernel weighting across problems with varying best kernels, rather than a consistent large improvement on every benchmark.

\section{Conclusion}
We provide a framework, HACK, an online mechanism for adapting over a finite set of GP kernels during sequential decision-making. Rather than committing to a single kernel before data collection, it uses task-aware losses to reweight candidate GP experts as iterations pass. The method is most beneficial when kernel misspecification would otherwise cause poor BO, LSE, or BAL decisions, and it reduces to conservative model averaging when several kernels are similarly useful. Its guarantees and empirical behavior are therefore best understood through the size of the loss gap between candidate kernels: large gaps lead to concentration and best-expert-like behavior, while small gaps lead to shared weights without necessarily harming task~performance.

\section*{Acknowledgments}
This work was supported by the ANRF MATRICS project grant number MTR/2023/000042 and the AlphaGrep Quantitative Research lab at IIIT-H. Most of this work was done while GD was on (sabbatical) leave from ASU, visiting IIIT-H.

\bibliographystyle{elsarticle-num}
\bibliography{references}

\newpage
\appendix

\section{Proofs}
\subsection{} \label{app:A1}

We first provide proofs of the weight-concentration results with AdaHedge and Standard Hedge used as the online learning algorithms.

\paragraph{Lemma \ref{lemma:kernelconc}}
Assume the per-round loss difference between the best kernel $m^*$ and any other kernel is bounded from below,
$\ell_m^{(t)} - \ell_{m^*}^{(t)} \geq \Delta > 0$.
Then $w_{m^*}^\iter{T} \geq 1-\varepsilon$ under either of the following conditions:

\begin{enumerate}
    \item with $\eta^\iter{T}$ given by Equation \ref{eq:adahedgelr} (AdaHedge),  
    \[
    T \geq 1 +\max \left\{
    \frac{4a_M^2}{\ln M},
    \frac{1}{\Delta^2\ln M }
    \ln^2\left(\frac{(M-1)(1-\varepsilon)}{\varepsilon}\right)
    \right\},
    \]
    where $a_M = 1 + \frac{2}{3}\ln M$.

    \item with a constant $\eta^\iter{T}=\eta$ (Hedge),  
    \[
    T\geq 1 + \frac{1}{\eta\Delta}
    \ln\left(\frac{(M-1)(1-\varepsilon)}{\varepsilon}\right).
    \]
\end{enumerate}

\begin{proof}
For both update rules,
\begin{align*}
w_{m^*}^\iter{T}
&= \frac{\exp\left(-\eta^\iter{T}L_{m^*}^\iter{T-1}\right)}
{\sum_{i=1}^M\exp\left(-\eta^\iter{T}L_i^\iter{T-1}\right)} \\
&= \frac{1}{1 +\sum_{i=1, i\neq {m^*}}^M
\exp\left(-\eta^\iter{T}(L_i^\iter{T-1} - L_{m^*}^\iter{T-1}) \right)}\\
&\geq \frac{1}{1 +(M-1)
\exp\left(-\eta^\iter{T}(T-1)\Delta \right)}.
\end{align*}

We will have $w_{m^*}^\iter{T} \geq 1 - \varepsilon$ if
\begin{align}
\frac{1}{1 +(M-1)\exp\left(-\eta^\iter{T}(T-1)\Delta \right)}
&\geq 1-\varepsilon \notag \\
\eta^\iter{T}(T-1)\Delta
&\geq
\ln\left(\frac{(M-1)(1-\varepsilon)}{\varepsilon}\right).
\label{eq:T_bound_1_adahedge}
\end{align}

\noindent\textbf{Case 1: AdaHedge.}

From De Rooij et al.~\cite[Lemma~5]{adahedge},
\[
(\Delta_{mix}^\iter{T-1})^2
\leq
V_{T-1}\ln M
+
\left(1 + \frac{2}{3}\ln M\right)
\Delta_{mix}^\iter{T-1}.
\]
From De Rooij et al.~\cite[Lemma~4]{adahedge}, $v_t\leq1/4$, thus
$V_{T-1} \leq \frac{T-1}{4}$.
Using $x^2 \leq a +bx \implies x \leq \sqrt{a} + b$,
\[
\Delta_{mix}^\iter{T-1}
\leq
\frac{1}{2}\sqrt{(T-1)\ln M}
+
1 + \frac{2}{3}\ln M.
\]

\[
\eta^\iter{T}
= \frac{\ln M}{\Delta_{mix}^\iter{T-1}} \geq
\frac{2\ln M}
{\sqrt{(T-1)\ln M}
+ 2(1 + \frac{2}{3}\ln M)}.
\label{eq:eta_ineq_adahedge}
\]

Let $a_M:=1 +\frac{2}{3}\ln M$.
If $\sqrt{(T-1)\ln M} \geq 2a_M$, then
\[
T \geq 1 + \frac{4 a_M^2}{\ln M}
\]
and
\[
\eta^\iter{T} \geq \sqrt{\frac{\ln M}{T -1}}.
\]
Substituting this into
\eqref{eq:T_bound_1_adahedge} gives
\[
T \geq
1 + \frac{1}{\Delta^2\ln M }
\ln^2\left(\frac{(M-1)(1-\varepsilon)}{\varepsilon}\right).
\]

Hence the stated AdaHedge condition is sufficient.

\medskip
\noindent\textbf{Case 2: Hedge.}

For Hedge, $\eta^\iter{T}=\eta$ is fixed.
Substituting this into
\eqref{eq:T_bound_1_adahedge} gives
\[
T\geq
1 + \frac{1}{\eta\Delta}
\ln\left(\frac{(M-1)(1-\varepsilon)}{\varepsilon}\right).
\]

Hence the stated Hedge condition is sufficient.
\end{proof}

\paragraph{Lemma \ref{lemma:kernelconcexpectation}}
Let $\mathcal{F}_t$ be the filtration generated by all randomness and observations up to round $t$.
Assume the per-round loss difference between the best kernel $m^*$ and any other kernel is bounded from below in expectation,
\[
\mathbb{E}\left[
\ell_m^{(t)} - \ell_{m^*}^{(t)}
\mid \mathcal{F}_{t-1}
\right]
\geq \Delta,
\qquad
\forall m\neq m^*,\; t\in\{1,\dots,T-1\}.
\]
Then $w_{m^*}^\iter{T} \geq 1-\varepsilon$ with probability at least $1-\delta$
under either of the following conditions:

\begin{enumerate}
    \item with $\eta^\iter{T}$ given by Equation \ref{eq:adahedgelr} (AdaHedge),
    \[
    T \geq 1 +\max \left\{
    \frac{4a_M^2}{\ln M},
    \frac{4}{\Delta^2\ln M}
    \ln^2\left(\frac{(M-1)(1-\varepsilon)}{\varepsilon}\right),
    \frac{32}{\Delta^2}
    \ln\left(\frac{M-1}{\delta}\right)
    \right\},
    \]
    where $a_M = 1 + \frac{2}{3}\ln M$.

    \item with a constant $\eta^\iter{T}=\eta$ (Hedge), 
    \[
    T \geq 1 +\max\left\{
    \frac{2}{\eta\Delta}
    \ln\left(\frac{(M-1)(1-\varepsilon)}{\varepsilon}\right),
    \frac{32}{\Delta^2}
    \ln\left(\frac{M-1}{\delta}\right)
    \right\}.
    \]
\end{enumerate}

\begin{proof}
Let
\[
X_i^\iter{t}
:=
(\ell_i^\iter{t}-\ell_{m^*}^\iter{t})
-
\mathbb{E}\left[
\ell_i^\iter{t}-\ell_{m^*}^\iter{t}
\mid \mathcal{F}_{t-1}
\right].
\]
Then $\{X_i^\iter{t}\}$ is a martingale difference sequence.
Using Azuma--Hoeffding and noting that
$(\ell_i^\iter{t}-\ell_j^\iter{t})\in[-1,1]$, which implies
$|X_i^\iter{t}|\leq 2$, we have
\begin{equation*}
\mathbb{P}\left(
\sum_{t=1}^{T-1} X_i^\iter{t} \leq -a
\right)
\leq
\exp\left(-\frac{a^2}{8(T-1)}\right).
\end{equation*}

Using $
\mathbb{E}\left[
\ell_i^\iter{t}-\ell_{m^*}^\iter{t}
\mid \mathcal{F}_{t-1}
\right]
\geq \Delta,$
we obtain
\begin{equation*}
\begin{aligned}
\mathbb{P}\left(
L_i^\iter{T-1} - L_{m^*}^\iter{T-1}
\leq -a + (T-1)\Delta
\right)
&\leq
\mathbb{P}\Bigg(
L_i^\iter{T-1} - L_{m^*}^\iter{T-1}
\leq
-a + \sum_{t=1}^{T-1}
\mathbb{E}\left[
\ell_i^\iter{t}-\ell_{m^*}^\iter{t}
\mid \mathcal{F}_{t-1}
\right]
\Bigg) \\
&\leq
\exp\left(-\frac{a^2}{8(T-1)}\right).
\end{aligned}
\end{equation*}
Setting
$
a =
2\sqrt{2(T-1)\ln\left(\frac{M-1}{\delta}\right)}$
gives
\[
\mathbb{P}\left(
L_i^\iter{T-1} - L_{m^*}^\iter{T-1}
\leq
(T-1)\Delta
-
2\sqrt{2(T-1)\ln\left(\frac{M-1}{\delta}\right)}
\right)
\leq
\frac{\delta}{M-1}.
\]
Using a union bound over all $i\neq m^*$, with probability at least $1-\delta$,
\[
L_i^\iter{T-1} - L_{m^*}^\iter{T-1}
\geq
(T-1)\Delta
-
2\sqrt{2(T-1)\ln\left(\frac{M-1}{\delta}\right)}
\]
for every $i\neq m^*$. If
$T
\geq
1+\frac{32}{\Delta^2}
\ln\left(\frac{M-1}{\delta}\right),
$
then
\[
L_i^\iter{T-1} - L_{m^*}^\iter{T-1}
\geq
\frac{1}{2}(T-1)\Delta
\]
for every $i\neq m^*$. Therefore,
\begin{align*}
w_{m^*}^\iter{T}
&=
\frac{1}{
1+\sum_{i=1,i\neq m^*}^M
\exp\left(
-\eta^\iter{T}
(L_i^\iter{T-1}-L_{m^*}^\iter{T-1})
\right)
} \\
&\geq
\frac{1}{
1+(M-1)
\exp\left(
-\eta^\iter{T}\frac{(T-1)\Delta}{2}
\right)
}.
\end{align*} We will have $w_{m^*}^\iter{T}\geq 1-\varepsilon$ if
\begin{align}
\frac{1}{
1+(M-1)
\exp\left(
-\eta^\iter{T}\frac{(T-1)\Delta}{2}
\right)}
&\geq
1-\varepsilon, \notag \\
\eta^\iter{T}\frac{(T-1)\Delta}{2}
&\geq
\ln\left(
\frac{(M-1)(1-\varepsilon)}{\varepsilon}
\right).
\label{eq:T_bound_2_adahedge}
\end{align}

\noindent\textbf{Case 1: AdaHedge.}

As shown in the proof of Lemma \ref{lemma:kernelconc}, if
\(
T \geq 1 +\frac{4a_M^2}{\ln M},
\)
then
\[
\eta^\iter{T} \geq \sqrt{\frac{\ln M}{T -1}}.
\]
Substituting this into Inequality
\ref{eq:T_bound_2_adahedge}, we require
\[
\frac{\Delta}{2}\sqrt{(T-1)\ln M}
\geq
\ln\left(\frac{(M-1)(1-\varepsilon)}{\varepsilon}\right),
\]
which gives
\[
T \geq 1 + \frac{4}{\Delta^2\ln M }
\ln^2\left(\frac{(M-1)(1-\varepsilon)}{\varepsilon}\right).
\]
Together with
\[
T \geq 1 +\frac{32}{\Delta^2}
\ln\left(\frac{M-1}{\delta}\right),
\]
the AdaHedge condition follows.

\noindent\textbf{Case 2: Hedge.}

For Hedge, $\eta^\iter{T}=\eta$. Therefore, Inequality
\ref{eq:T_bound_2_adahedge} becomes
\[
\eta\frac{(T-1)\Delta}{2}
\geq
\ln\left(\frac{(M-1)(1-\varepsilon)}{\varepsilon}\right),
\]
which gives
\[
T \geq
1+\frac{2}{\eta\Delta}
\ln\left(\frac{(M-1)(1-\varepsilon)}{\varepsilon}\right).
\]
Together with
\[
T \geq 1+\frac{32}{\Delta^2}
\ln\left(\frac{M-1}{\delta}\right),
\]
the Hedge condition follows. \qedhere
\end{proof}

\subsection{}

\paragraph{Corollary \ref{cor:error-rate}}
Assume the conditional loss-gap condition of Lemma \ref{lemma:kernelconcexpectation} holds for \(m^*\) with gap \(\Delta > 0\).
Fix a horizon \(T\), let \(\delta_0 = \delta_{\mathrm{conc}}/n_T\), and define
\[
\epsilon_t
=
\frac{M-1}{M-1+\exp\!\left(\frac{\Delta}{2}\sqrt{(t-1)\ln M}\right)},
\]
for $t\in I_T$ and let $T_0$ satisfy
\[
T_0
\ge
\max\left\{
\frac{4a_M^2}{\ln M},\;
\frac{32}{\Delta^2}\ln\!\left(\frac{M-1}{\delta_0}\right)
\right\}.
\]
Then, for this fixed horizon, with $c=\tfrac{\Delta}{2}\sqrt{\ln M}$, the summation of $\epsilon_t$ post-burn-in is bounded by
\[
\sum_{t=T_0+1}^{T}\epsilon_t
\;\le\;
(M-1)\!\left[
1 + \frac{2}{c^2}\!\left(1+c\sqrt{T_0}\right)
\right]e^{-c\sqrt{T_0}}.
\]
Consequently, in
Theorem~\ref{thm:hack-ei-regret-transfer} the additional simple-regret term satisfies
\[
\frac{2B_{\mathrm{EI}}}{n_T}
\sum_{t=T_0+1}^{T}\epsilon_t
=
O\!\left(\frac{1}{n_T}\right).
\]
\begin{proof}
Fix $t\in I_T$ and apply Lemma \ref{lemma:kernelconcexpectation} with terminal time $t$, confidence level
$\delta_0$, and error level $\epsilon_t$. Since $t-1\ge T_0$, the first and
third terms in the maximum of Lemma \ref{lemma:kernelconcexpectation} are bounded by $t-1$ by assumption.
For the second term, the definition of $\epsilon_t$ gives
\[
\ln\left(
\frac{(M-1)(1-\epsilon_t)}{\epsilon_t}
\right)
=
\frac{\Delta}{2}\sqrt{(t-1)\ln M}.
\]
Hence
\[
\frac{4}{\Delta^2\ln M}
\ln^2\left(
\frac{(M-1)(1-\epsilon_t)}{\epsilon_t}
\right)
=
t-1.
\]
Thus Lemma \ref{lemma:kernelconcexpectation} yields
\(
\mathbb P\left(w_{m^*}^{(t)}\ge 1-\epsilon_t\right)
\ge
1-\delta_0 .
\)
Taking a union bound over $t\in I_T$ gives the stated simultaneous event.
Finally,
\[
\epsilon_t
=
\frac{M-1}{M-1+e^{c\sqrt{t-1}}}
\le
(M-1)e^{-c\sqrt{t-1}},
\]
so
\[
\sum_{t=T_0+1}^{T}\epsilon_t
\le
(M-1)\sum_{s=T_0}^{T-1}e^{-c\sqrt{s}}
\le
(M-1)\left(
e^{-c\sqrt{T_0}}
+
\int_{T_0}^{\infty}e^{-c\sqrt{x}}\,dx
\right).
\]
The change of variables $u=\sqrt{x}$ gives
\[
\int_{T_0}^{\infty}e^{-c\sqrt{x}}\,dx
=
2\int_{\sqrt{T_0}}^{\infty}u e^{-cu}\,du
=
\frac{2}{c^2}(1+c\sqrt{T_0})e^{-c\sqrt{T_0}},
\]
which proves the summation bound.
\end{proof}

\subsection{}
\label{app:lse-proofs}

\paragraph{Lemma \ref{thm:lse-acq-closeness}}
Under Assumption~\ref{ass:bounded-diagonal}, suppose $\mathrm{EI\text{-}LSE}_{t,m}(x)\le B_{\mathrm{LSE}}$
for all $m\in\{1,\dots,M\}$ and $x\in \mathcal X$. If $w_{m^*}^\iter{t}\ge 1-\varepsilon$, then
\begin{equation*}
\sup_x \big| \mathrm{EI\text{-}LSE}_{t,\mathrm{MoG}}(x) - \mathrm{EI\text{-}LSE}_{t,m^*}(x) \big| 
\le \varepsilon B_{\mathrm{LSE}} + \beta\Big(\varepsilon \kappa^2 + \sup_x \alpha_{t,\mathrm{disag}}(x)\Big).
\end{equation*}
\begin{proof}
By the MoG construction (linearity of expectation for the EI--LSE term),
\[
\mathrm{EI\text{-}LSE}_{t,\mathrm{MoG}}(x)
=\sum_{m=1}^M w_m^{(t)}\,\mathrm{EI\text{-}LSE}_{t,m}(x)\;+\;\beta\,\bar\sigma_t^2(x),
\]
where $\bar\sigma_t^2(x)$ is the MoG posterior predictive variance.  Let
$m^*\in\{1,\dots,M\}$ satisfy $w_{m^*}^{(t)}\ge 1-\varepsilon$.
Fix any $x\in\mathcal X$,
\begin{align*}
\mathrm{EI\text{-}LSE}_{t,\mathrm{MoG}}(x)-\mathrm{EI\text{-}LSE}_{t,m^*}(x)
&=\sum_{m=1}^M w_m^{(t)}\,\mathrm{EI\text{-}LSE}_{t,m}(x)-\mathrm{EI\text{-}LSE}_{t,m^*}(x)
\;+\;\beta\,\bar\sigma_t^2(x)\\
&=\sum_{m\neq m^*} w_m^{(t)}\Big(\mathrm{EI\text{-}LSE}_{t,m}(x)-\mathrm{EI\text{-}LSE}_{t,m^*}(x)\Big)
\;+\;\beta\,\bar\sigma_t^2(x).
\end{align*}
Taking absolute values and applying the triangle inequality yields
\begin{align*}
\big|\mathrm{EI\text{-}LSE}_{t,\mathrm{MoG}}(x)-\mathrm{EI\text{-}LSE}_{t,m^*}(x)\big|
&\le
\sum_{m\neq m^*} w_m^{(t)}\,
\big|\mathrm{EI\text{-}LSE}_{t,m}(x)-\mathrm{EI\text{-}LSE}_{t,m^*}(x)\big|
\;+\;\beta\,\bar\sigma_t^2(x).
\end{align*}
Since $0\le \mathrm{EI\text{-}LSE}_{t,m}(x)\le B_{\mathrm{LSE}}$ for all $m$ and $x$,
\(
\big|\mathrm{EI\text{-}LSE}_{t,m}(x)-\mathrm{EI\text{-}LSE}_{t,m^*}(x)\big|\le B_{\mathrm{LSE}},
\)
and therefore
\[
\sum_{m\neq m^*} w_m^{(t)}\,
\big|\mathrm{EI\text{-}LSE}_{t,m}(x)-\mathrm{EI\text{-}LSE}_{t,m^*}(x)\big|
\le \sum_{m\neq m^*} w_m^{(t)}\,B_{\mathrm{LSE}}
=(1-w_{m^*}^{(t)})B_{\mathrm{LSE}}
\le \varepsilon B_{\mathrm{LSE}}.
\]
Next, use the MoG posterior predictive variance decomposition
\[
\bar\sigma_t^2(x)=\sum_{m=1}^M w_m^{(t)}\,\sigma_{t,m}^2(x)\;+\;\alpha_{t,\mathrm{disag}}(x),
\]
Hence,
\[
\bar\sigma_t^2(x)
=\sum_{m\neq m^*} w_m^{(t)}\,\sigma_{t,m}^2(x) \;+\; w_{m^*}^{(t)}\sigma_{t,m^*}^2(x)
\;+\;\alpha_{t,\mathrm{disag}}(x)
\le (1-w_{m^*}^{(t)})\kappa^2 \;+\;\sigma_{t,m^*}^2(x)\;+\;\alpha_{t,\mathrm{disag}}(x),
\]
where the inequality uses Assumption \ref{ass:bounded-diagonal}: $\sigma_{t,m}^2(x)\le \kappa^2$ for all $m,x$.
Combining the two bounds gives, for every $x$,
\[
\big|\mathrm{EI\text{-}LSE}_{t,\mathrm{MoG}}(x)-\mathrm{EI\text{-}LSE}_{t,m^*}(x)\big|
\le \varepsilon B_{\mathrm{LSE}}
\;+\;\beta\Big(\varepsilon\kappa^2+\alpha_{t,\mathrm{disag}}(x)\Big).
\]
Taking $\sup_{x\in\mathcal X}$ finishes the proof:
\[
\sup_{x\in\mathcal X}
\big|\mathrm{EI\text{-}LSE}_{t,\mathrm{MoG}}(x)-\mathrm{EI\text{-}LSE}_{t,m^*}(x)\big|
\le \varepsilon B_{\mathrm{LSE}}
\;+\;\beta\Big(\varepsilon\kappa^2+\sup_{x\in\mathcal X}\alpha_{t,\mathrm{disag}}(x)\Big). \qedhere
\]
\end{proof}

\paragraph{Corollary \ref{cor:lse-boundary-band}}
For $\tau\in(0,1)$, define the set $S_{t,m}(\tau):=\{x:\ \pi^{\textsc{lse}}_{t,m}(x)\ge \tau\}$ and define $S_{t,\mathrm{MoG}}(\tau)$ similarly. If $w_{m^*}^{(t)}\ge 1-\varepsilon$, then the symmetric difference of the two sets satisfies  
\[
S_{t,\mathrm{MoG}}(\tau)\triangle S_{t,m^*}(\tau)
\subseteq
\left\{
x\in\mathcal X:
\left|
\pi^{\textsc{lse}}_{t,m^*}(x)-\tau
\right|
\le\varepsilon
\right\}.\]
\begin{proof}
Under the MoG predictive distribution,
the law of total probability gives, for every $x\in\mathcal X$,
\[
\pi^{\textsc{lse}}_{t,\mathrm{MoG}}(x)
:=\mathbb{P}_{\mathrm{MoG}}(Y_t\ge h\mid x, D_{t-1})
=\sum_{m=1}^M w_m^{(t)}\,\pi^{\textsc{lse}}_{t,m}(x).
\]
Fix $m^*$ such that $w_{m^*}^{(t)}\ge 1-\varepsilon$, and fix any $x\in\mathcal X$.
Then
\begin{align*}
\pi^{\textsc{lse}}_{t,\mathrm{MoG}}(x)-\pi^{\textsc{lse}}_{t,m^*}(x)
&=\sum_{m=1}^M w_m^{(t)}\,\pi^{\textsc{lse}}_{t,m}(x)-\pi^{\textsc{lse}}_{t,m^*}(x)\\
&=\sum_{m\neq m^*} w_m^{(t)}\Big(\pi^{\textsc{lse}}_{t,m}(x)-\pi^{\textsc{lse}}_{t,m^*}(x)\Big).
\end{align*}
Taking absolute values and applying the triangle inequality yields
\[
\Big|\pi^{\textsc{lse}}_{t,\mathrm{MoG}}(x)-\pi^{\textsc{lse}}_{t,m^*}(x)\Big|
\le \sum_{m\neq m^*} w_m^{(t)}\,
\Big|\pi^{\textsc{lse}}_{t,m}(x)-\pi^{\textsc{lse}}_{t,m^*}(x)\Big|.
\]
Since each $\pi^{\textsc{lse}}_{t,m}(x)\in[0,1]$, we have
$\big|\pi^{\textsc{lse}}_{t,m}(x)-\pi^{\textsc{lse}}_{t,m^*}(x)\big|\le 1$, hence
\[
\Big|\pi^{\textsc{lse}}_{t,\mathrm{MoG}}(x)-\pi^{\textsc{lse}}_{t,m^*}(x)\Big|
\le \sum_{m\neq m^*} w_m^{(t)}
=1-w_{m^*}^{(t)}
\le \varepsilon.
\]
Let $x\in S_{t,\mathrm{MoG}}(\tau)\setminus S_{t,m^*}(\tau)$.
Then $\pi^{\textsc{lse}}_{t,\mathrm{MoG}}(x)\ge \tau$ and $\pi^{\textsc{lse}}_{t,m^*}(x)<\tau$, so
\[
\tau \le \pi^{\textsc{lse}}_{t,\mathrm{MoG}}(x)
\le \pi^{\textsc{lse}}_{t,m^*}(x)+\varepsilon
\quad\Rightarrow\quad
\pi^{\textsc{lse}}_{t,m^*}(x)\ge \tau-\varepsilon
\quad\Rightarrow\quad
\big|\pi^{\textsc{lse}}_{t,m^*}(x)-\tau\big|\le \varepsilon.
\]
Similarly, if $x\in S_{t,m^*}(\tau)\setminus S_{t,\mathrm{MoG}}(\tau)$,
then $\pi^{\textsc{lse}}_{t,m^*}(x)\ge \tau$ and $\pi^{\textsc{lse}}_{t,\mathrm{MoG}}(x)<\tau$, so
\[
\pi^{\textsc{lse}}_{t,m^*}(x)
\le \pi^{\textsc{lse}}_{t,\mathrm{MoG}}(x)+\varepsilon
< \tau+\varepsilon
\quad\Rightarrow\quad
\big|\pi^{\textsc{lse}}_{t,m^*}(x)-\tau\big|\le \varepsilon.
\]
Combining the two cases,
\[
S^{\textsc{lse}}_{t,\mathrm{MoG}}(\tau)\,\triangle\, S^{\textsc{lse}}_{t,m^*}(\tau)
\subseteq
\left\{x\in\mathcal X:\big|\pi^{\textsc{lse}}_{t,m^*}(x)-\tau\big|\le \varepsilon\right\},
\]
which is the claimed inclusion.
\end{proof}

\subsection{}
\label{app:bal-proofs}

\paragraph{Lemma \ref{lemma:bal-closeness}}
Assume Assumption~\ref{ass:bounded-diagonal}. If $w_{m^*}^{\iter{t}} \ge 1-\varepsilon$, then
\[
\sup_{x\in \mathcal{X}}
\big|\sigma^2_{t,\mathrm{MoG}}(x)-\sigma^2_{t,m^*}(x)\big|
\le
\varepsilon\kappa^2 + \sup_{x\in\mathcal X}\alpha_{t,\mathrm{disag}}(x).
\]
\begin{proof}
Let $m^*\in\{1,\dots,M\}$ satisfy $w_{m^*}^{\iter{t}}\ge 1-\varepsilon$.
By the MoG posterior variance decomposition (Lemma~\ref{lemma:meanvar}),
\[
\sigma^2_{t,\mathrm{MoG}}(x)
=\sum_{m=1}^M w_m^{\iter{t}}\,\sigma^2_{t,m}(x)\;+\;\alpha_{t,\mathrm{disag}}(x),
\]
where $\alpha_{t,\mathrm{disag}}(x)\ge 0$ for all $x$.
Fix any $x\in\mathcal X$. Then
\begin{align*}
\sigma^2_{t,\mathrm{MoG}}(x)-\sigma^2_{t,m^*}(x)
&=\sum_{m=1}^M w_m^{\iter{t}}\,\sigma^2_{t,m}(x)-\sigma^2_{t,m^*}(x)\;+\;\alpha_{t,\mathrm{disag}}(x)\\
&=\sum_{m\neq m^*} w_m^{\iter{t}}\,\sigma^2_{t,m}(x)
-\big(1-w_{m^*}^{\iter{t}}\big)\sigma^2_{t,m^*}(x)
\;+\;\alpha_{t,\mathrm{disag}}(x).
\end{align*}
By Assumption~\ref{ass:bounded-diagonal}, $\sigma^2_{t,m}(x)\le \kappa^2$ for all $m,x$, so
\[
0\le \sum_{m\neq m^*} w_m^{\iter{t}}\,\sigma^2_{t,m}(x)\le (1-w_{m^*}^{\iter{t}})\kappa^2,
\qquad
0\le (1-w_{m^*}^{\iter{t}})\sigma^2_{t,m^*}(x)\le (1-w_{m^*}^{\iter{t}})\kappa^2.
\]
Hence the term
\[
A(x):=\sum_{m\neq m^*} w_m^{\iter{t}}\,\sigma^2_{t,m}(x)
-\big(1-w_{m^*}^{\iter{t}}\big)\sigma^2_{t,m^*}(x)
\]
satisfies $|A(x)|\le (1-w_{m^*}^{\iter{t}})\kappa^2$. Using $\alpha_{t,\mathrm{disag}}(x)\ge 0$,
\[
\big|\sigma^2_{t,\mathrm{MoG}}(x)-\sigma^2_{t,m^*}(x)\big|
=\big|A(x)+\alpha_{t,\mathrm{disag}}(x)\big|
\le |A(x)|+\alpha_{t,\mathrm{disag}}(x)
\le (1-w_{m^*}^{\iter{t}})\kappa^2+\alpha_{t,\mathrm{disag}}(x)
\le \varepsilon\kappa^2+\alpha_{t,\mathrm{disag}}(x).
\]
Taking $\sup_{x\in\mathcal X}$ yields
\[
\sup_{x\in \mathcal{X}}
\big|\sigma^2_{t,\mathrm{MoG}}(x)-\sigma^2_{t,m^*}(x)\big|
\le
\varepsilon\kappa^2 + \sup_{x\in\mathcal X}\alpha_{t,\mathrm{disag}}(x),
\]
as claimed.
\end{proof}

\section{Experimental Details}\label{app:exp_dets}
We use a set of Automatic Relevance Determination (ARD) kernels as the candidate kernel dictionary for our adaptive method. Here are the kernels we used:
\begin{itemize}
    \item \textbf{Squared Exponential (SE):}
    \[
    k(x,x') = \sigma^2 \exp\!\left(
    -\frac{1}{2}\sum_{i=1}^d \frac{(x_i - x_i')^2}{\ell_i^2}
    \right)
    \]
    Infinitely smooth kernel modeling very smooth functions with dimension-specific lengthscales.

    \item \textbf{Matérn 5/2 (M5):}
    \[
    \begin{aligned}
    k(x,x') = {} & \sigma^2\left(
    1 + \sqrt{5}r + \frac{5}{3}r^2
    \right)\exp(-\sqrt{5}r), \\
    & r = \sqrt{\sum_{i=1}^d \frac{(x_i - x_i')^2}{\ell_i^2}}
    \end{aligned}
    \]
    Models moderately smooth functions with twice mean-square differentiability.

    \item \textbf{Matérn 3/2 (M3):}
    \[
    \begin{aligned}
    k(x,x') = {} & \sigma^2\left(1 + \sqrt{3}r\right)\exp(-\sqrt{3}r), \\
    & r = \sqrt{\sum_{i=1}^d \frac{(x_i - x_i')^2}{\ell_i^2}}
    \end{aligned}
    \]
    Models rougher functions with once mean-square differentiability.

    \item \textbf{Rational Quadratic (RQ):}
    \[
    k(x,x') = \sigma^2
    \left(
    1 + \frac{1}{2\alpha}
    \sum_{i=1}^d \frac{(x_i - x_i')^2}{\ell_i^2}
    \right)^{-\alpha}
    \]
    Equivalent to a scale mixture of SE kernels with dimension-wise lengthscales.

    \item \textbf{Periodic (PER):}
    \[
    k(x,x') = \sigma^2 \exp\!\left(
    -2 \sum_{i=1}^d \frac{\sin^2\!\big(\pi|x_i - x_i'|/p_i\big)}{\ell_i^2}
    \right)
    \]
    Models strictly periodic functions with possibly different periods \(p_i\) per dimension.

    \item \textbf{Linear (LIN):}
    \[
    k(x,x') = \sigma^2 \sum_{i=1}^d (x_i - c_i)(x_i' - c_i)
    \]
    Models linear trends with dimension-specific offsets.
\end{itemize}

The training data $(X, y)$ are preprocessed by scaling the inputs $X$ to the unit hypercube $[0,1]$ and standardizing the outputs $y$ to have zero mean and unit variance. At each iteration, the hyperparameters of the GP kernel are re-optimized. For all experiments, we fit a \texttt{SingleTaskGP} model from \texttt{botorch.models}. Most synthetic test functions are taken directly from \texttt{botorch.test\_functions} or \texttt{botorch.test\_functions.synthetic}. For test functions not available in BoTorch, we implement custom classes inheriting from \texttt{SyntheticTestFunction} to ensure consistent behavior and interfaces.

Before being used by the algorithm, the negative log-likelihood (NLL) losses are further transformed to ensure numerical stability and comparability across kernels. Specifically, the losses are first shifted by subtracting the minimum NLL value across candidates, making the smallest loss equal to zero. The shifted losses are then scaled by a constant factor \( C=10 \) and clipped to the interval \([0,1]\). This normalization prevents excessive clamping and also manages to ensure the losses lie in \([0,1]\) for Hedge regret guarantees.

\subsection{Bayesian Optimization}
Log simple regret for each test mentioned below is averaged over 20 seeds and 100 iterations per seed.
\subsubsection{Synthetic Test functions}
\begin{table}[H]
\centering
\caption{Details of the test functions used in the experiments for Bayesian Optimization.}
\label{tab:synthetic_test_functions_bo}
\begin{tabular}{lcc}
\toprule
\textbf{Function} & \textbf{Domain} & \textbf{$d$} \\
\midrule
Branin          & $[-5,10] \times [0,15]$     & $2$ \\
Bukin           & $[-15,-5] \times [-3,3]$   & $2$  \\
Powell          & $[-4,5]^4$                 & $4$  \\
Ackley          & $[-5,5]^d$                 & $5$  \\
Hartmann        & $[0,1]^6$                  & $6$  \\
Rastrigin       & $[-5.12,5.12]^d$           & $8$  \\
Rosenbrock      & $[-5,10]^d$                & $8$  \\
DixonPrice      & $[-10,10]^d$               & $8$  \\
\bottomrule
\end{tabular}
\end{table}

\subsubsection{Robot Pushing Task}
\label{app:robotpushing}
We evaluate on the 3D robot pushing benchmark implemented in a 2D Box2D physics simulator, originally used for Bayesian optimization in robot pre-image learning~\cite{wang2017max} and later adapted as a robust multi-target benchmark~\cite{airbo}. The environment contains a single object (a puck/``ball'') initialized at the origin $o=(0,0)$ and a planar end-effector initialized at $r=(r_x,r_y)$. An action is parameterized by
$
x = (r_x, r_y, t_r)\in [-5,5]\times[-5,5]\times[0,30],
$
where $t_r$ is the push duration.

\paragraph{Action execution}
To keep the action space 3-dimensional, we do not optimize over a push angle. Instead, the end-effector is always commanded to move directly toward the object, i.e., along the deterministic direction
$
u(x) \propto (o-r),
$
equivalently fixing the push angle to face the object (as in prior work). The simulator is stepped for $N=\mathrm{round}(10\,t_r)$ steps and we record the final object position $l(x)\in\mathbb{R}^2$.

\paragraph{Objective}
Following the robust robot pushing setup of~\cite{airbo}, we define four target locations
$
g_1=(-3,-3),\;
g_2=(-3,3),\;
g_3=(4.3,4.3),\;
g_4=(5.1,3.0),
$
and measure the terminal loss as a minimum over targets with mixed squared/linear distances:
\begin{equation}
\ell(x) \;=\; \min\big( \|l(x)-g_1\|_2^2,\; \|l(x)-g_2\|_2,\; \|l(x)-g_3\|_2,\; \|l(x)-g_4\|_2 \big).
\end{equation}
We maximize the reward $f(x)=-\ell(x)$, so that the optimum value is $0$.

\paragraph{Bayesian optimization protocol}
We run Bayesian optimization for $T=100$ iterations using Expected Improvement (EI). Performance is reported as the (log) simple regret, averaged over 20 random seeds. Since $f(x)\le 0$, the final simple regret can be written as
$
r_T = 0 - \max_{t\le T} f(x_t) = \min_{t\le T}\ell(x_t).
$
We use the same baselines and kernel families as in Section~\ref{sec:boexp}.

\subsubsection{Baseline Details}
\begin{enumerate}
    \item \textbf{Uniform Random:}
    A heuristic from~\cite{experimentalAdaptive} that selects a kernel uniformly
    at random from the candidate set.

    \item \textbf{Best Util:}
    A heuristic from~\cite{experimentalAdaptive} that selects, at iteration $t$,
    the kernel with the largest maximum acquisition value.

    \item \textbf{Equal Weights:}
    A uniform ensemble of GPs. It is equivalent to HACK with the MoG predictive
    distribution and a constant zero loss at every round.

    \item \textbf{EGP:}
    The primary method of~\cite{lu2022surrogatemodelingbayesianoptimization}.
    It corresponds to HACK with categorical sampling, Hedge with $\eta=1$ in
    place of AdaHedge, and unbounded NLL as the loss function.

    \item \textbf{CAKE:}
    We use the CAKE implementation provided in the authors' public GitHub
    repository~\cite{suwandi2025adaptivekerneldesignbayesian}.
    We follow their experimental settings
    ($n_c=5$, $p_m=0.7$, and $n_p=10$) and use
    \textsc{gemini-2.0-flash}, which Appendix C of
    \cite{suwandi2025adaptivekerneldesignbayesian} reports to achieve performance
    similar to \textsc{gpt-4o-mini}.
\end{enumerate}

\subsection{Level Set Estimation}
\subsubsection{Synthetic Test functions}

The synthetic test functions used for Level Set Estimation are defined as follows.

\paragraph{Shifted Negative Himmelblau \cite{zanette2018robust}}
Let $x = (x_1, x_2) \in [-5,5]^2$. The function is defined as
\begin{equation}
f_{\text{Him}}(x)
= -\Big[(x_1^2 + x_2 - 11)^2 + (x_1 + x_2^2 - 7)^2\Big] + 100,
\end{equation}

\paragraph{Gaussian Modulated Sinusoid}
Let $x = (x_1, x_2) \in [0,1]^2$. The function is defined as
\begin{equation}
\begin{aligned}
f_{\text{GMS}}(x)
= {} & \sin(2\pi x_1) + \sin(2\pi x_2) \\
& {} + 0.7 \exp\!\left(-\frac{\lVert x - c \rVert^2}{2 s^2}\right)
\sin(20\pi x_1)\cos(20\pi x_2),
\end{aligned}
\end{equation}
where $c = (0.7, 0.35)$ and $s = 0.10$.

\begin{table}[H]
\centering
\caption{Synthetic datasets used for Level Set Estimation}
\label{tab:synthetic_test_functions_lse}
\begin{tabular}{lccc}
\toprule
\textbf{Function} & \textbf{Threshold} & \textbf{$d$} & \textbf{Noise} \\
\midrule
ShiftedNegHimmelblau          & $100.0$     & $2$ & $7.389$ \\
GaussianModulatedSinusoid     & $0.0$       & $2$ & $0.05$ \\
\bottomrule
\end{tabular}
\end{table}

\subsubsection{Visualizations:}
\begin{figure}[H]
\centering
\begin{subfigure}[H]{0.45\linewidth}
    \centering
    \includegraphics[width=0.9\linewidth]{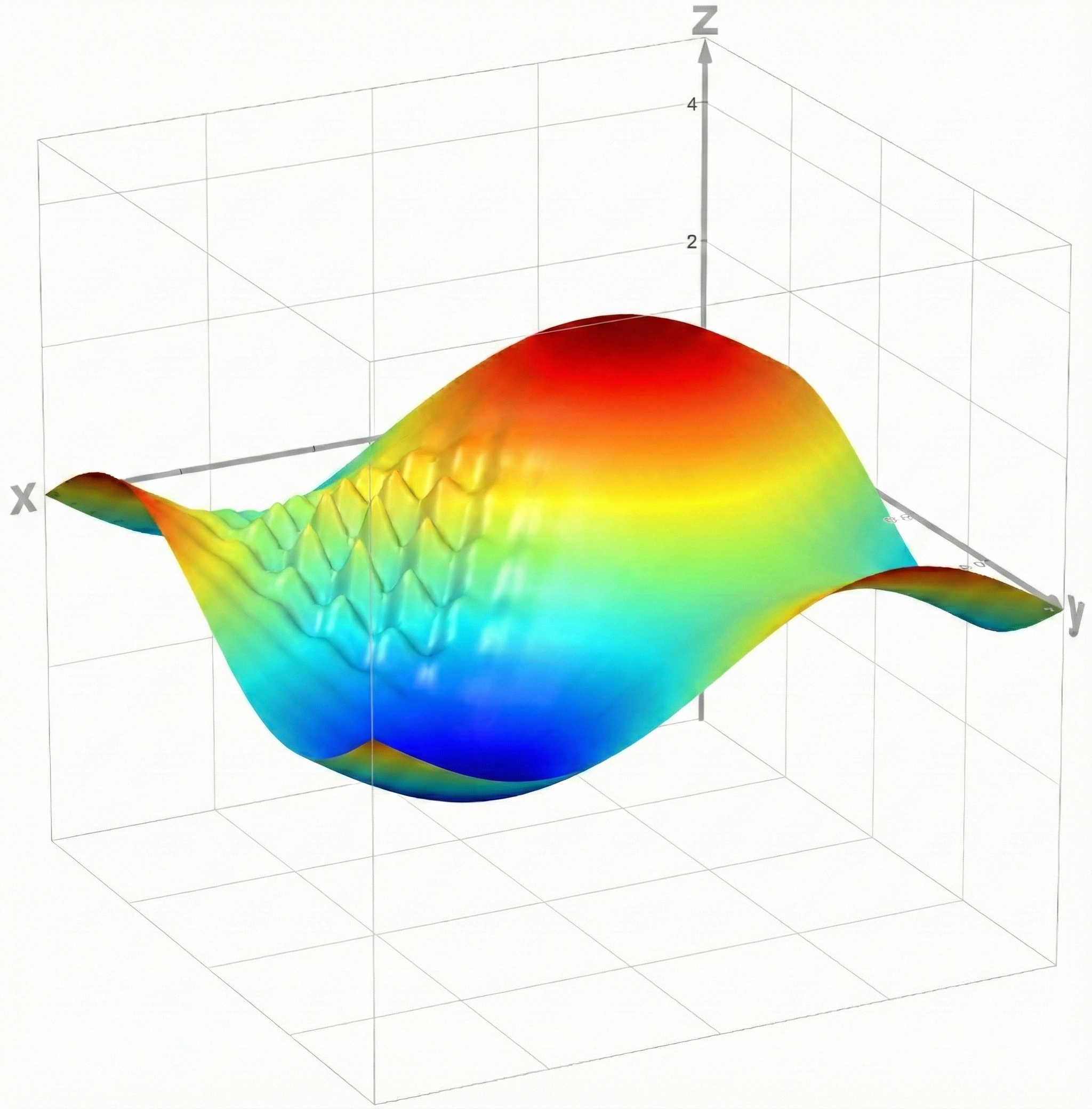}
    \caption{Gaussian Modulated Sinusoid}
\end{subfigure}
\hfill
\begin{subfigure}[H]{0.45\linewidth}
    \centering
    \includegraphics[width=0.9\linewidth]{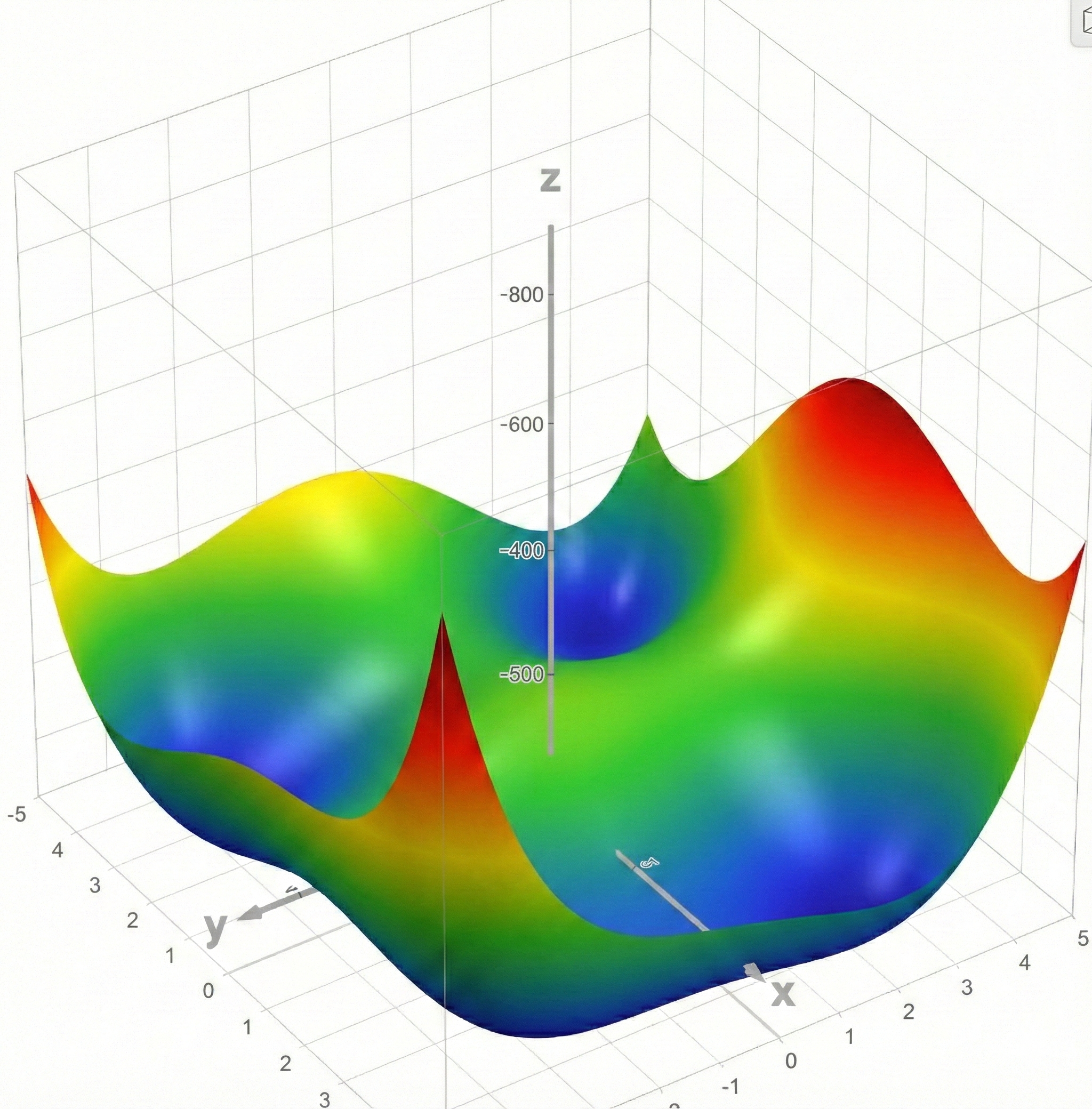}
    \caption{Himmelblau}
\end{subfigure}
\caption{Synthetic benchmark functions used in our experiments.}
\label{fig:synthetic_functions}
\end{figure}

\subsubsection{Real-world dataset}

We use the Motorcycle dataset used in \cite{silverman1985some}. This dataset gives a series of measurements of head acceleration in a simulated motorcycle accident, used to test crash helmets.

\begin{table}[H]
\centering
\caption{Motorcycle dataset}
\label{tab:test_functions_lse}
\begin{tabular}{lcccccc}
\toprule
\textbf{Function} & \textbf{Threshold} & \textbf{$d$} & \textbf{Feature(s)} & \textbf{Label} & \textbf{Number of Instances} & \textbf{Noise} \\
\midrule
Motorcycle          & $0$     & $1$  &  $t$ & $a(t)$  & $133$ & $0.0$ \\
\bottomrule
\end{tabular}
\end{table}

Each data point in this dataset corresponds to the instantaneous acceleration $a(t)$ of the motorcycle helmet measured at a specific time $t$ during the crash event.

\subsection{Bayesian Active Learning for Regression}

\begin{table}[H]
\centering
\caption{Details of the test functions (simulators) used in the experiments for Bayesian Active Learning.}
\label{tab:synthetic_test_functions_bal}
\begin{tabular}{lccc}
\toprule
\textbf{Function} & \textbf{Domain} & \textbf{$d$} & \textbf{Noise} \\
\midrule
Branin          & $[-5,10] \times [0,15]$     & $2$ & $11.32$\\
Gramacy2D        & $[-2, 6]^2$                  & $2$ & $0.05$  \\
Hartmann        & $[0,1]^6$                  & $6$ & $0.0192$ \\
\bottomrule
\end{tabular}
\end{table}

\section{Ablation Study}
\subsection{Choice of Loss Function}\label{app:C1}

\begin{table*}[h]
  \caption{Final log simple regret after $T=100$ BO iterations (Lower is better). Mean values are shown with standard deviation in parentheses. Best performing method is in \textbf{bold}, second best is \underline{underlined}.}
  \label{tab:bo_synthetic_loss}
  \centering
  \small
  
  \rowcolors{4}{white}{gray!6}
  \begin{tabular}{l cccccc cccccc} 
    \toprule
    \rowcolor{white}
    \multirow{2}{*}{Test Function}
      & \multicolumn{6}{c}{CatAcq}
      & \multicolumn{6}{c}{MixAcq} \\
    \cmidrule(lr){2-7} \cmidrule(lr){8-13}
    \rowcolor{white}
      & Zero & NLL & \multicolumn{3}{c}{Combo ($\alpha$)} & Brier 
      & Zero & NLL & \multicolumn{3}{c}{Combo ($\alpha$)} & Brier \\
    \cmidrule(lr){4-6} \cmidrule(lr){10-12}
    \rowcolor{white}
      & & & 0.25 & 0.50 & 0.75 & & & & 0.25 & 0.50 & 0.75 & \\
    \midrule
    
    Branin-2D 
      & \res{-8.81}{1.50} & \res{-8.47}{1.74} & \res{-9.35}{1.40} & \res{-9.02}{1.54} & \textbf{\res{-10.06}{1.76}} & \res{-9.39}{1.83} 
      & \res{-7.10}{2.32} & \res{-9.27}{2.25} & \res{-8.90}{1.74} & \res{-9.48}{1.82} & \res{-8.91}{1.62} & \underline{\res{-9.62}{1.61}} \\ 
      
    Bukin-2D 
      & \res{1.34}{0.48} & \res{1.10}{0.60} & \res{1.05}{0.61} & \res{1.07}{0.81} & \res{1.14}{0.89} & \res{1.37}{0.58} 
      & \res{1.27}{0.57} & \res{1.32}{0.45} & \textbf{\res{0.85}{0.73}} & \underline{\res{0.88}{0.76}} & \res{1.19}{0.46} & \res{1.21}{0.56} \\ 
      
    Powell-4D 
      & \res{2.80}{1.28} & \res{1.69}{0.70} & \res{1.99}{1.16} & \res{2.24}{1.13} & \res{3.14}{1.24} & \res{3.33}{1.08} 
      & \res{2.41}{0.77} & \res{1.44}{1.11} & \textbf{\res{0.90}{1.41}} & \underline{\res{0.95}{1.24}} & \res{1.33}{1.45} & \res{1.67}{0.87} \\ 
      
    Ackley-5D 
      & \res{1.85}{0.24} & \res{2.10}{0.27} & \res{1.86}{0.36} & \res{1.77}{0.34} & \res{1.73}{0.35} & \res{1.80}{0.36} 
      & \underline{\res{1.66}{0.35}} & \res{1.76}{0.23} & \textbf{\res{1.62}{0.23}} & \res{1.69}{0.25} & \res{1.72}{0.31} & \res{1.68}{0.33} \\ 
    
    Hartmann-6D 
      & \res{-4.57}{1.77} & \res{-5.23}{2.00} & \res{-4.92}{2.03} & \res{-4.86}{1.78} & \res{-4.52}{1.94} & \res{-5.07}{2.11} 
      & \res{-5.43}{2.17} & \underline{\res{-5.73}{2.81}} & \res{-5.47}{2.76} & \textbf{\res{-5.99}{2.54}} & \res{-5.39}{2.20} & \res{-5.64}{2.10} \\ 
      
    Rastrigin-8D 
      & \res{3.99}{0.21} & \res{4.20}{0.23} & \res{4.26}{0.20} & \res{4.25}{0.17} & \res{4.17}{0.24} & \res{4.08}{0.25} 
      & \underline{\res{3.85}{0.20}} & \res{4.01}{0.14} & \res{3.93}{0.23} & \res{3.90}{0.24} & \res{3.90}{0.28} & \textbf{\res{3.71}{0.29}} \\ 
    
    Rosenbrock-8D 
      & \res{7.93}{0.57} & \res{7.61}{0.67} & \res{7.38}{0.98} & \res{7.58}{0.80} & \res{7.85}{0.91} & \res{8.05}{0.75} 
      & \res{7.70}{0.66} & \textbf{\res{6.99}{0.69}} & \res{7.57}{0.64} & \underline{\res{7.34}{0.70}} & \res{7.61}{0.63} & \res{7.72}{0.75} \\ 
      
    DixonPrice-8D 
      & \res{5.45}{0.75} & \res{5.15}{0.94} & \res{5.38}{0.86} & \res{5.46}{0.94} & \res{5.61}{1.07} & \res{6.21}{1.36} 
      & \res{5.88}{0.74} & \textbf{\res{4.96}{0.81}} & \res{5.26}{0.66} & \res{5.22}{1.03} & \res{5.56}{0.70} & \underline{\res{5.02}{0.68}} \\ 
    
    \midrule
    \textbf{Total Wins} & 0 & 0 & 0 & 0 & 1 & 0 & 0 & 2 & 3 & 1 & 0 & 1 \\
    \textbf{Avg. Rank} & 9.50 & 7.50 & 7.00 & 7.63 & 8.13 & 9.63 & 7.25 & 4.75 & \underline{3.75} & \textbf{2.75} & 6.00 & 4.13 \\
    \bottomrule
  \end{tabular}
\end{table*}

\noindent\textbf{Table \ref{tab:bo_synthetic_loss}} reports the final log simple regret after \(T=100\) BO iterations across synthetic benchmark functions. We compare Zero, NLL, Brier, and their convex combination
\[
\text{Combo}(\alpha) = \alpha\,\ell_{\text{Brier}} + (1-\alpha)\,\ell_{\text{NLL}},
\]
where \(\alpha \to 0\) recovers NLL and \(\alpha \to 1\) recovers Brier. As \(\alpha\) varies, the combined loss smoothly interpolates between the two losses. Empirically, \textbf{MixAcq with \(\alpha=0.5\)} achieves the lowest average rank, indicating the best overall performance.

\subsection{Hedge vs AdaHedge}\label{app:HedgeVsAdahedge}
\begin{figure*}[h]
    \centering
    \includegraphics[width=1\linewidth]{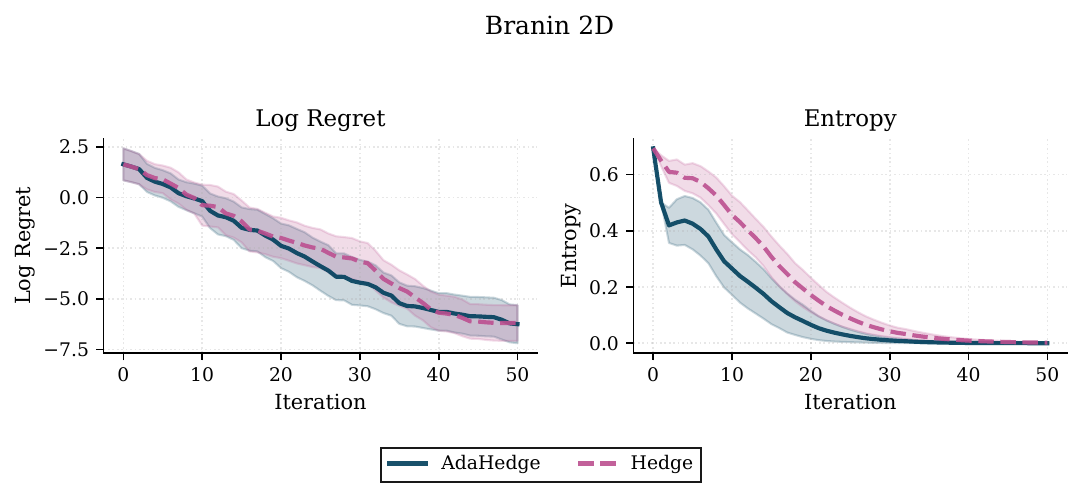}
\caption{
Comparison of AdaHedge and Hedge within HACK-MoG on the Branin-2D benchmark.
Left: log simple regret over BO iterations.
Right: entropy of the kernel-weight distribution, with lower entropy indicating
greater concentration of weight on a smaller number of experts.
Lines show the mean across runs and shaded regions show the corresponding
95\% confidence intervals.
AdaHedge concentrates its weights more rapidly while achieving comparable
or faster regret reduction.
}
\end{figure*}

The figure compares AdaHedge and Hedge as online learning algorithms  on the Branin 2D benchmark using HACK-MoG with the same loss described in Section 5, showing their evolution in terms of log regret and entropy over iterations. We note similar performance in terms of regret with AdaHedge being faster to reach the same regret value. In the entropy plot, AdaHedge’s entropy drops more rapidly, reflecting faster concentration of weights on the better-performing kernels, whereas Hedge maintains higher entropy for longer before gradually converging. This demonstrates the advantage of AdaHedge over Hedge in sample efficient settings where the budget is constrained. 

\subsection{Empirical Evidence for Loss Gap} \label{app:C4}
This ablation study empirically evaluates the loss gap assumptions underlying Lemmas~\ref{lemma:kernelconc} and~\ref{lemma:kernelconcexpectation}, which require a strictly positive loss difference between the best-performing kernel \(m^*\) and competing kernels, either per iteration or in expectation. Figure~\ref{fig:loss_analysis} shows the empirical loss difference \(\ell_m^{(t)} - \ell_{m^*}^{(t)}\) and its running average for the Bukin benchmark, where the Matérn~\(1/2\) kernel emerges as the best kernel, under the combined NLL--Brier loss.

\begin{figure*}[h]
    \centering
    \includegraphics[width=1\linewidth]{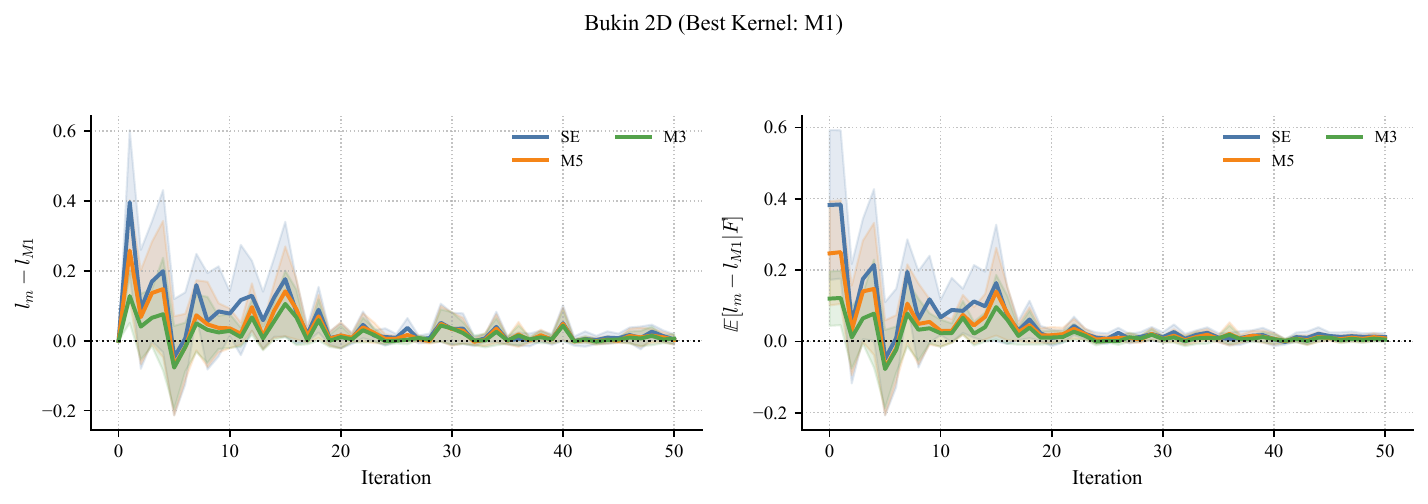}
    \caption{Empirical loss gap for the Bukin 2D benchmark. The left plot reports the per-iteration loss difference, estimated using a Monte Carlo method, as defined in Lemma \ref{lemma:kernelconc} under its stated assumptions, and the right panel reports the expected loss difference, as defined in Lemma \ref{lemma:kernelconcexpectation} under its corresponding assumptions, between the best-performing kernel (Matérn~1/2) and each competing kernel.
Results are shown for the combined NLL and Brier loss function  $\ell^{(t)}_m$ (Section \ref{bo:acq_loss}).}
    
    \label{fig:loss_analysis}
\end{figure*}

\subsection{Runtime Comparison}\label{app:runtime}
\begin{table}[H]
\centering
\caption{Average computational time ($\downarrow$) in seconds per iteration for different methods in Bayesian Optimization.}
\label{tab:runtime_comparison}
\begin{tabular}{lcc}
\toprule
\textbf{Method} & \textbf{Time (s)} & \textbf{Main Bottleneck} \\
\midrule
SE             & 0.15 & Single GP fitting \\
M5             & 0.19 & Single GP fitting \\
Random         & 0.33 & Multiple GP fitting \& weight update \\
Best Util      & 0.34 & Multiple GP fitting \\
Equal Weights  & 0.32 & Multiple GP fitting \\
HACK-Cat     & 0.35 & Multiple GP fitting \& weight update \\
HACK-MoG     & 0.32 & Multiple GP fitting \& weight update \\
\bottomrule
\end{tabular}
\end{table}

Table \ref{tab:runtime_comparison} shows the time taken by the baselines defined in Table \ref{tab:bo_synthetic}, per-iteration time averaged over 200 iterations.

\end{document}